\theoremstyle{plain}
\newtheorem{theorem}{Theorem}[section]
\newtheorem{lemma}[theorem]{Lemma}
\newtheorem{corollary}[theorem]{Corollary}
\theoremstyle{definition}
\theoremstyle{remark}
\DeclareMathOperator*{\argmax}{arg\,max}
\DeclareMathOperator*{\argmin}{arg\,min}
\DeclareMathOperator*{\aug}{aug}
\newcommand{\figref}[1]{Fig.~\ref{#1}}
\newcommand{\tabref}[1]{Tab.~\ref{#1}}
\newcommand{\secref}[1]{Sec.~\ref{#1}}
\newcommand{\beq}{\begin{equation}}
\newcommand{\eeq}{\end{equation}}
\newcommand{\Lc}{{\cal{L}}}
\newcommand{\Jc}{{\cal{J}}}
\newcommand{\x}{\vct{x}}
\newcommand{\W}{\mtx{W}}
\definecolor{emmanuel}{RGB}{255,127,0}
\newcommand{\R}{\mathbb{R}}
\newcommand{\E}{\operatorname{\mathbb{E}}}
\newcommand{\vct}[1]{\bm{#1}}
\newcommand{\mtx}[1]{\bm{#1}}
\newcommand{\X}{{\mtx{X}}}
\def \endprf{\hfill {\vrule height6pt width6pt depth0pt}\medskip}
\title{Data-Efficient Augmentation for Training Neural Networks}
\author{%
  Tian Yu Liu \\
  Department of Computer Science\\
  University of California, Los Angeles\\
  \texttt{tianyu@cs.ucla.edu} \\
   \And
  Baharan Mirzasoleiman \\
  Department of Computer Science\\
  University of California, Los Angeles\\
  \texttt{baharan@cs.ucla.edu} \\ 
}
\begin{document}
\maketitle

\begin{abstract}
 Data augmentation is essential to achieve state-of-the-art performance in many deep learning applications. However, the most effective augmentation techniques become computationally prohibitive for even medium-sized datasets.
To address this, we propose a rigorous technique to select subsets of data points that when augmented, closely capture the training dynamics of full data augmentation.
We first show that data augmentation, modeled as additive perturbations, improves learning and generalization by relatively enlarging and perturbing the smaller singular values of the network Jacobian, while preserving its prominent directions.
This prevents overfitting and enhances learning the harder to learn information.
Then, we propose a framework to iteratively extract small subsets of training data that when augmented, closely capture the alignment of the fully augmented Jacobian with labels/residuals. 
We prove that stochastic gradient descent applied to the augmented subsets found by our approach has similar training dynamics to that of fully augmented data.
{Our experiments demonstrate that our method achieves 6.3x speedup on CIFAR10 and 2.2x speedup on SVHN, and outperforms %
the baselines by up to 10\% across various subset sizes.}
Similarly, on TinyImageNet and ImageNet, our method beats the baselines by up to 8\%, while achieving up to 3.3x speedup across various subset sizes.
Finally, training on and augmenting 50\% subsets using our method on a version of CIFAR10 corrupted with label noise even outperforms using the full dataset. \footnote{Our code can be found at \href{https://github.com/tianyu139/data-efficient-augmentation}{https://github.com/tianyu139/data-efficient-augmentation}}
\end{abstract}

\section{Introduction}

Standard (weak) data augmentation %
transforms the training examples with e.g. rotations or crops for images, and trains on the transformed examples \textit{in place of} the original training data. While weak augmentation is effective and computationally inexpensive, 
\textit{strong}
data augmentation (in addition to weak augmentation)
is a key component in achieving nearly all state-of-the-art results in deep learning applications
\citep{shorten2019survey}.
However, %
strong data augmentation techniques often increase the training time by orders of magnitude.
First, they often have a very expensive pipeline to find or generate more complex
transformations
that best improves generalization
\cite{bowles2018gansfer,jackson2019style,lemley2017smart,wu2020generalization}. Second, \textit{appending transformed examples} to the training data is often much more effective than training on the (strongly or weakly) transformed examples \textit{in-place} of the original data. For example, appending \textit{one} transformed example to the training data is often much more effective than training on \textit{two} transformed examples \textit{in place} of every original training data, %
while both strategies have the same computational cost
(\textit{c.f.} Appendix \ref{sec:append}).
Hence, to obtain the state-of-the-art performance, multiple %
augmented examples are added for every single data point and to each training iteration \cite{hendrycks2019augmix,wu2020generalization}. In this case, even if producing transformations are cheap, such methods increases the size of the training data by orders of magnitude.
As a result,
state-of-the-art data augmentation techniques %
become computationally prohibitive for even medium-sized real-world problems. %
For example, the state-of-the-art augmentation of \citep{wu2020generalization}, which appends every example with its highest-loss transformations, increases the training time of ResNet20 on CIFAR10 by 13x on an Nvidia A40 GPU (\textit{c.f.} Sec. \ref{sec:experiments}).

To make state-of-the-art data augmentation more efficient and scalable, %
an effective approach is to 
carefully select a small subset of the training data such that augmenting only the subset
provides similar training dynamics to that of full data augmentation.
If such a subset can be quickly found, it would directly lead to a significant reduction in
storage and training costs.
First, while standard in-place augmentation can be applied to the entire data, the strong and \textit{expensive transformations} can be only produced for the examples in the subset. Besides, only the transformed elements of the subset can be \textit{appended} to the training data. Finally, when the data is larger than the training budget, one can train on random subsets (with standard in-place augmentation) and augment coresets (by strong augmentation and/or appending transformations) to achieve a superior performance.

Despite the efficiency and scalability that it can provide, this direction has remained largely unexplored.
Existing studies are limited to 
fully training a network and 
subsampling data points based on their loss or influence, 
for augmentation in subsequent training runs
\citep{kuchnik2018efficient}.
However, this method is prohibitive for large datasets, provides a marginal improvement over augmenting random subsets, %
and does not provide any theoretical guarantee for the performance of the network trained on the augmented subsets. %
Besides, when the data contains mislabeled examples, augmentation methods that
select examples with maximum loss, and
append their transformed versions to the data,
degrade the performance by selecting and appending
several noisy labels.

A major challenge in finding the most effective data points for augmentation is %
to theoretically %
understand how data augmentation affects the optimization and generalization of neural networks.
Existing theoretical results are %
mainly limited to simple linear classifiers %
and analyze data augmentation as enlarging the span of the training data \citep{wu2020generalization}, providing a regularization effect \citep{bishop1995training,dao2019kernel,wager2013dropout,wu2020generalization}, enlarging the margin of a linear classifier  \citep{rajput2019does}, or 
having a variance reduction effect %
\citep{chen2019invariance}.
However, such tools do not provide insights on the effect of data augmentation on training %
deep neural networks. %

Here, we study the effect of label invariant data augmentation %
on training dynamics of overparameterized neural networks.
Theoretically, we model data augmentation by \textit{bounded additive perturbations} \citep{rajput2019does}, and
analyze its effect on neural network Jacobian matrix containing all its first-order partial derivatives \citep{arora2019fine}. 
We show that label invariant additive data augmentation %
\textit{proportionally} enlarges but more importantly \textit{perturbs} the singular values of the Jacobian, particularly the smaller ones, while maintaining prominent directions of the Jacobian. In doing so, data augmentation \textit{regularizes} training by adding bounded but varying perturbations to the gradients. In addition, it \textit{speeds up} learning harder to learn information. Thus, it prevents overfitting and improves generalization.
Empirically, we show that %
the same effect can be observed
for various strong augmentations, e.g., AutoAugment \cite{cubuk2019autoaugment}, CutOut \cite{devries2017improved}, and AugMix \cite{hendrycks2019augmix}.\footnote{ 
We note that our results are in line with that of \citep{shen2022data}, that in parallel to our work, analyzed the effect of linear transformations on a two-layer convolutional network, and showed that it can make the hard to learn features more likely to be captured during training.}

Next, we develop a rigorous method 
to iteratively find small weighted subsets (coresets) that 
when augmented, closely capture the 
alignment between the Jacobian of the full augmented data with the label/residual vector.
We show that the most effective subsets for data augmentation are the set of examples 
that when data is mapped to the gradient space, have %
the most centrally located gradients. 
This problem can be %
formulated as maximizing a submodular function. %
The subsets can be efficiently extracted using a fast greedy algorithm which operates on small dimensional gradient proxies, with only a small additional cost.
We prove that augmenting the coresets guarantees similar training dynamics to that of full data augmentation.
We also show that augmenting our coresets achieve a superior accuracy in presence of noisy labeled examples.

We demonstrate the effectiveness of our approach applied to CIFAR10 (ResNet20, WideResNet-28-10), CIFAR10-IB (ResNet32), SVHN (ResNet32), noisy-CIFAR10 (ResNet20), Caltech256 (ResNet18, ResNet50), TinyImageNet (ResNet50), and ImageNet (ResNet50) compared to random and max-loss baselines \citep{kuchnik2018efficient}. 
We show the effectiveness of our approach (in presence of standard augmentation) in the following cases:
\begin{itemize}[leftmargin=3.mm]
   \vspace{-2mm}
    \item \textbf{When producing augmentations is expensive and/or they are appended to the training data:} 

We show that for the state-of-the-art augmentation method of \cite{wu2020generalization} applied to CIFAR10/ResNet20 it is 3.43x faster to train on the whole dataset and only augment our coresets of size 30\%, compared to training and augmenting the whole dataset. At the same time, we achieve 75\% of the accuracy improvement of training on and augmenting the full data with the method of \cite{wu2020generalization}, outperforming both max-loss and random baselines by up to 10\%.  

\vspace{-1mm}
\item \textbf{When data is larger than the training budget:} 
We show that we can achieve 71.99\% test accuracy on ResNet50/ImageNet when training on and augmenting only 30\% subsets for 90 epochs. Compared to AutoAugment \cite{cubuk2019autoaugment}, despite using only 30\% subsets, we achieve 92.8\% of the original reported accuracy while boasting 5x speedup in the training time. 
Similarly, on Caltech256/ResNet18, training on and augmenting 10\% coresets with AutoAugment yields 65.4\% accuracy, improving over random 10\% subsets by 5.8\% and over only weak augmentation by 17.4\%. \looseness=-1

\vspace{-1mm}
\item \textbf{When data contains mislabeled examples:} 
We show that training on and strongly augmenting 50\% subsets using our method on CIFAR10 with 50\% noisy labels achieves 76.20\% test accuracy. Notably, this %
yields a superior performance to
training on and strongly augmenting the full data.\looseness=-2

\end{itemize}

\section{Additional Related Work}
Strong data augmentation methods achieve state-of-the-art performance by finding the set of transformations for every example that best improves the performance. %
Methods like AutoAugment \citep{cubuk2019autoaugment}, RandAugment \cite{cubuk2020randaugment}, and Faster RandAugment  \citep{cubuk2020randaugment} search over a (possibly large) space of transformations to find sequences of transformations that best improve generalization \citep{cubuk2019autoaugment,cubuk2020randaugment,luo2020data,wu2020generalization}. %
Other techniques involve a very expensive pipeline for generating the transformations. For example, some use Generative Adversarial Networks to directly learn new transformations \citep{baluja2017adversarial,luo2020data,mirza2014conditional,ratner2017learning}. 
Strong augmentations like Smart Augmentation \cite{lemley2017smart}, Neural Style Transfer-based \cite{jackson2019style}, and GAN-based augmentations \cite{bowles2018gansfer} require an expensive forward pass through a deep network for input transformations. 
For example, \cite{jackson2019style} increases training time by 2.8x for training ResNet18 on Caltech256.
Similarly, \cite{wu2020generalization} generates multiple augmentations for each training example, and selects the ones with the highest loss.\looseness=-1

Strong data augmentation methods either replace the original example by its transformed version, or append the generated transformations to the training data. Crucially, appending the training data with transformations is much more effective in improving the generalization performance. 
Hence, the most effective data augmentation methods such as that of \cite{wu2020generalization} and AugMix \cite{hendrycks2019augmix} append the transformed examples to the training data. %
In Appendix \ref{sec:append}, we show that even for cheaper strong augmentation methods such as AutoAugment \cite{cubuk2019autoaugment}, while replacing the original training examples with transformations may decrease the performance, appending the augmentations significantly improves the performance.
Appending the training data with augmentations, however, increase the training time by orders of magnitude. For example,
AugMix \cite{hendrycks2019augmix} that outperforms AutoAugment increases the training time by at least 3x by appending extra augmented examples, and \cite{wu2020generalization} increases training time by 13x due to appending and forwarding additional augmented examples through the model.

\section{Problem Formulation}\label{sec:formulation}
We begin by 
formally describing the problem of learning from augmented data.
Consider a dataset 
$\mathcal{D}_{train} = (\bm{X}_{train}, \bm{y}_{train})$,
where $\bm{X}_{train} = (\bm{x}_1,\cdots, \bm{x}_n) \in \mathbb{R}^{d\times n}$ is the set of $n$ normalized data points $\bm{x}_i \in [0, 1]^d$, from the index set $V$, %
and $\bm{y}_{train}=(y_1,\!\cdots\!,y_n) 
\in \{ y \in 
\{\nu_1,\nu_2, \cdots, \nu_C\}\}$ with $\{\nu_j\}_{j=1}^C \in [0,1]$.

\textbf{The additive perturbation model.} 
Following \citep{rajput2019does} we model data augmentation as an arbitrary bounded additive perturbation $\bm{\epsilon}$, with $\|\bm{\epsilon}\| \leq \epsilon_0$. For a given $\epsilon_0$ and the set of all possible transformations $\mathcal{A}$, %
we study the transformations selected from
$\mathcal{S} \subseteq \mathcal{A}$ satisfying
\begin{align}\label{eq:augment}
    \mathcal{S} = \{ T_i \in \mathcal{A} \ | \ \|T_i( \bm{x}) - \bm{x}\| \leq \epsilon_0 \ \forall  \bm{x} \in \X^{train} \}.
\end{align}
While the additive perturbation model cannot represent all augmentations,
most real-world augmentations are bounded to preserve the regularities of natural images (e.g. AutoAugment \cite{cubuk2019autoaugment} finds that a 6 degree rotation is optimal for CIFAR10). 
Thus, under local smoothness of images, additive perturbation can model bounded transformations such as small rotations, crops, shearing, and pixel-wise transformations like sharpening, blurring, color distortions, structured adversarial perturbation \cite{luo2020data}. 
As such, we see the effects of additive augmentation on the singular spectrum holds even under real-world augmentation settings (\textit{c.f.} \figref{fig:singular-values-and-vectors-large} in the Appendix). %
However, this model is indeed limited when applied to augmentations that cannot be reduced to perturbations, such as horizontal/vertical flips and large translations. 
We extend our theoretical analysis to augmentations modeled as arbitrary linear transforms (e.g. as mentioned, horizontal flips) in \ref{app:aug-linear-transform}.

The set of augmentations at iteration $t$ generating $r$ augmented examples per data point can be specified, with abuse of notation, as
$\mathcal{D}_{aug}^{t}=\{\bigcup^r_{i=1} \big(T_i^{t}(\X_{train}),\bm{y}_{train}\big)\}$, where $|\mathcal{D}_{aug}^{t}| = rn$ and $T_i^{t}(\X_{train})$ transforms all the training data points with the set of transformations $T_i^{t}\subset\mathcal{S}$ at iteration $t$.
We denote $\X_{aug}^{t}=\{\bigcup^r_{i=1}T_i^{t}(\X_{train})\}$ and $\bm{y}_{aug}^{t}=\{\bigcup^r_{i=1}\bm{y}_{train}\}$.

\textbf{Training on the augmented data.} Let $f(\bm{W}, \bm{x})$ be an arbitrary neural network with $m$ vectorized (trainable) parameters
$\bm{W} \!\!\in\!\! \mathbb{R}^{m}$.
We assume that the network is trained using (stochastic) gradient descent with learning rate $\eta$ to minimize the squared loss $\mathcal{L}$ over the original and augmented %
training examples $\mathcal{D}^{t}=\{\mathcal{D}_{train}\cup\mathcal{D}_{aug}^{t}\}$ with associated index set $V^t$, at every iteration $t$. I.e., %
\begin{equation}
    \mathcal{L}(\bm{W}^t,\X) := \frac{1}{2} \sum_{i \in V^t} \mathcal{L}_i(\bm{W}^t,\bm{x}_i) %
    := \frac{1}{2} \sum_{\substack{(\bm{x}_i,y_i)
    \in\mathcal{D}^{t}}}%
    \| f(\bm{W}^t, \bm{x}_i) - {y}_i\|_2^2. %
\end{equation}
The gradient update at iteration $t$ is given by 
\begin{align}\label{eq:gd}
    &\bm{W}^{t + 1} = \bm{W}^{t} - \eta \nabla \mathcal{L} (\bm{W}^t, \bm{X}), \quad \text{s.t.}\quad
    &\nabla \mathcal{L} (\bm{W}^t, \bm{X}) = \Jc^T(\bm{W}^t,\X) (f(\bm{W}^t, \bm{X}) - \bm{y}),
\end{align}
where $\bm{X}^{t}=\{\X_{train}\cup\X_{aug}^{t}\}$ and $\bm{y}^t=\{\bm{y}_{train}\cup\bm{y}_{aug}^t\}$ are the set of original and augmented examples and their labels,
$\Jc(\W,\X) \in \mathbb{R}^{n\times m}$ is the  Jacobian matrix associated with $f$, and $\bm{r}^t=f(\bm{W}^t, \bm{X}) - \bm{y}$ is the residual. %

We further assume that $\Jc$ is smooth
with Lipschitz constant $L$. I.e.,
    $\| \mathcal{J}(\bm{W}, \bm{x}_i) \!\!-\!\! \mathcal{J}(\bm{W}, \bm{x}_j) \| \leq L \|\ \bm{x}_i - \bm{x}_j \| ~~ \forall \ \bm{x}_i,\bm{x}_j \in \bm{X}^{}.$
Thus,
for any transformation $T_j\in\mathcal{S}$, we have $\| \mathcal{J}(\bm{W}, \bm{x}_i) \!-\! \mathcal{J}(\bm{W}, T_j(\bm{x}_i)) \| \leq L \epsilon_0$. %
Finally, denoting $\Jc\!=\!\Jc(\W\!,\!\X_{train})$ and $\tilde{\Jc}=\Jc(\W\!,\!T_j(\X_{train}))$,\!
we get  $\!\tilde{\Jc}\!\!=\!\!\Jc\!+\bm{E}$, where %
$\bm{E}$ is the perturbation matrix with $\!\|\bm{E}\|_2\leq\!\|\bm{E}\|_F\leq\! \sqrt{n}L\epsilon_0$.

\section{Data Augmentation Improves Learning}
\label{sec:aug_general}

In this section, we analyze the effect of data augmentation on 
training dynamics of neural networks, 
and show that data augmentation can provably prevent overfitting.
To do so, 
we leverage the recent %
results that characterize the training
dynamics based on properties of neural network Jacobian and the corresponding
Neural Tangent Kernel (NTK) \citep{jacot2018neural}
defined as $\bm{\Theta}=\Jc(\bm{W},\bm{X})\Jc(\bm{W},\bm{X})^T$.
Formally: 
\vspace{-2mm}
\begin{align}\label{eq:shrinkage}
    \bm{r}^t \!\!=\!\! \sum_{i=1}^n (1-\eta\lambda_i)(\bm{u}_i \bm{u}_i^T) \bm{r}^{t-1} \!\!= \sum_{i=1}^n (1-\eta\lambda_i)^t(\bm{u}_i \bm{u}_i^T) \bm{r}^{0}
     ,
    \vspace{-3mm}
\end{align}
where $\bm{\Theta} = \bm{U}\Lambda \bm{U}^T =\sum_{i=1} \lambda_i \bm{u}_i\bm{u}_i^T$
is the eigendecomposition of the NTK \citep{arora2019fine}. %
Although the constant NTK assumption holds only in the infinite width limit, \cite{lee2019wide} found close empirical agreement between the NTK dynamics and the true dynamics for wide but practical networks, such as wide ResNet architectures \citep{zagoruyko2016wide}.
Eq. \eqref{eq:shrinkage} shows that the
training dynamics depend on the alignment of the NTK with the residual vector at every iteration $t$. 
Next, we prove that for small perturbations $\epsilon_0$, data augmentation %
prevents overfitting and improves
generalization by proportionally enlarging and perturbing smaller eigenvalues of the NTK relatively more, while preserving its prominent directions.

\subsection{Effect of Augmentation on  Eigenvalues of the NTK}\label{sec:singularvalues}
We first investigate the effect of data augmentation %
on the singular values of the Jacobian, and use this result to bound the change in the eigenvalues of the NTK. 
To characterize the effect of data augmentation on singular values of the perturbed Jacobian $\tilde{\Jc}$, we rely on Weyl's theorem \citep{weyl1912asymptotic} stating that under bounded perturbations $\bm{E}$, no singular value can move more than the norm of the perturbations. Formally, %
$|\tilde{\sigma}_i - \sigma_i |\leq \|\bm{E}\|_2$, where $\tilde{\sigma}_i$ and $\sigma_i$ are the singular values of the perturbed and original Jacobian respectively. %
Crucially, data augmentation affects larger and smaller singular values differently. 
Let $\bm{P}$ be orthogonal projection onto the column space of $\Jc^T$, and $\bm{P}_{\perp} = \bm{I} - \bm{P}$ be the projection onto its orthogonal complement subspace. Then, the singular values of the perturbed Jacobian $\tilde{\Jc}^T$ are
$\tilde{\sigma}_i^2 = (\sigma_i + \mu_i)^2 + \zeta_i^2$,
where
$|\mu_i| \leq \|\bm{PE}\|_2$, and
$\sigma_{\min}(\bm{P}_{\perp}\bm{E}) \leq \zeta_i \leq \|\bm{P}_{\perp}\bm{E}\|_2$, $\sigma_{\min}$ the smallest singular value of $\Jc^T$ \citep{ %
stewart1979note}.
Since the eigenvalues of the projection matrix $\bm{P}$ are either 0 or 1, as the number of dimensions $m$ grows, for bounded perturbations we get that on average $\mu_i^2=\mathcal{O}(1)$ and $\zeta_i^2=\mathcal{O}(m)$.
Thus, the second term dominates and  increase of small singular values under perturbation is proportional to $\sqrt{m}$. However, for larger singular values, first term dominates and hence $\tilde{\sigma}_i - \sigma_i \cong \mu_i$.
Thus
in general, small singular values can become
proportionally larger, while larger singular values 
remain relatively unchanged. %
The following Lemma characterizes the \textit{expected} change to the eignvalues of the NTK.
\begin{lemma}
\label{lemma:singular-value-perturbation-bounds}
Data augmentation as additive perturbations bounded by small $\epsilon_0$ %
results in the following expected change to the eigenvalues of the NTK:
\begin{equation}
    \E[\tilde{\lambda}_i] = \E[\tilde{\sigma}^2_i] = \sigma_i^2 + {\sigma_i}(1-2p_i)\|\bm{E}\| +  \|\bm{E}\|^2/3
\end{equation}
where $p_i := \mathbb{P}(\tilde{\sigma_i} - \sigma_i < 0)$ is the probability that $\sigma_i$ decreases as a result of data augmentation, and is smaller for smaller singular values.
\vspace{-2mm}
\end{lemma}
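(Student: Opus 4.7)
The plan is to write $\delta_i := \tilde\sigma_i - \sigma_i$ and invoke Weyl's inequality (already used in the paragraph preceding the lemma) to get $|\delta_i|\le \|\bm{E}\|$. Squaring and taking expectations gives
\[
\E[\tilde\sigma_i^2] \;=\; \sigma_i^2 + 2\sigma_i\,\E[\delta_i] + \E[\delta_i^2],
\]
so by linearity the lemma reduces to computing the first and second moments of $\delta_i$ over the randomness in the admissible additive perturbation.

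For the first moment, the natural model --- consistent with treating admissible perturbations as ``unstructured'' inside the Weyl interval --- is that $\delta_i$ is conditionally uniform on $[-\|\bm{E}\|,0]$ given the event $\{\delta_i<0\}$ (which by definition has probability $p_i$) and conditionally uniform on $[0,\|\bm{E}\|]$ on the complement. This immediately yields
\[
\E[\delta_i] \;=\; p_i\!\cdot\!\bigl(-\tfrac{\|\bm{E}\|}{2}\bigr) + (1-p_i)\!\cdot\!\tfrac{\|\bm{E}\|}{2} \;=\; \tfrac{1}{2}(1-2p_i)\|\bm{E}\|,
\]
so $2\sigma_i\E[\delta_i]$ reproduces the middle term $\sigma_i(1-2p_i)\|\bm{E}\|$ of the claim. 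For the second moment, a one-line uniform-on-an-interval calculation gives $\E[\delta_i^2\mid \text{either side}] = \|\bm{E}\|^2/3$, hence $\E[\delta_i^2] = \|\bm{E}\|^2/3$ unconditionally. Plugging the two moments back reproduces the stated identity.

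For the concluding qualitative claim --- that $p_i$ is smaller for smaller $\sigma_i$ --- the plan is to reuse the finer Stewart-type decomposition $\tilde\sigma_i^2 = (\sigma_i+\mu_i)^2 + \zeta_i^2$ with $|\mu_i|\le\|\bm{P}\bm{E}\|$ and $\zeta_i\ge \sigma_{\min}(\bm{P}_\perp \bm{E})$ already given in the paper. The event $\{\delta_i<0\}$ is equivalent to $2\sigma_i\mu_i + \mu_i^2 + \zeta_i^2 < 0$, which forces $\mu_i<0$ and $2\sigma_i|\mu_i| > \mu_i^2 + \zeta_i^2$. Since the paper establishes the dimensional scalings $\zeta_i^2 = \mathcal{O}(m)$ against $\mu_i^2 = \mathcal{O}(1)$ for bounded perturbations, the left-hand side scales linearly in $\sigma_i$ while the right-hand side has a floor growing with $m$; the inequality is therefore harder to satisfy as $\sigma_i$ shrinks, so $p_i$ is monotonically smaller for smaller singular values.

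The main obstacle I anticipate is not the algebra but pinning down the distributional model on $\delta_i$: the clean answer $\|\bm{E}\|^2/3$ is precisely the second moment of a uniform law on $[-\|\bm{E}\|,\|\bm{E}\|]$, so the identity implicitly assumes a symmetric (indeed uniform) prior on $\delta_i$ inside the Weyl interval, conditional on its sign. I would make this uniformity explicit as a modeling assumption about the unstructured admissible perturbations, and remark that any symmetric law with matching second moment would give the same closed form, so the downstream conclusions about relative growth of small eigenvalues are robust to the precise choice.
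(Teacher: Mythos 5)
Your proposal is correct and follows essentially the same route as the paper's proof: define $\delta_i=\tilde\sigma_i-\sigma_i$, impose the (implicit) piecewise-uniform law on $[-\|\bm{E}\|,0]$ and $[0,\|\bm{E}\|]$ with masses $p_i$ and $1-p_i$, compute $\E[\delta_i]=\tfrac12(1-2p_i)\|\bm{E}\|$ and $\E[\delta_i^2]=\|\bm{E}\|^2/3$, and expand $\E[(\sigma_i+\delta_i)^2]$. Your observation that the uniformity is really a modeling assumption (stated inside the paper's proof rather than in the lemma) and your sketch of why $p_i$ shrinks with $\sigma_i$ via the Stewart decomposition are accurate additions, not deviations.
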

The proof can be found in Appendix \ref{appendix:proof-singular-value-perturbation-bounds}. 

Next, we discuss the effect of data augmentation on singular vectors of the Jacobian and show that it mainly affects the non-prominent directions of the Jacobian spectrum, but to a smaller extent compared to the singular values.

\begin{figure*}[t]
    \centering
    \begin{subfigure}[MNIST - $\Delta \sigma_i$ 
    \label{fig:mnist_noise8_and_16_relative_values}]{
    \includegraphics[width=.225\textwidth,trim=10mm 0 0mm 10mm]{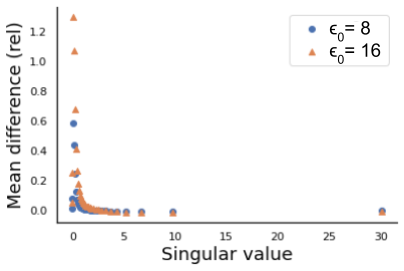}}
    \end{subfigure}\hspace{0mm}
    \begin{subfigure}[\!CIFAR - $\Delta \sigma_i$ %
    \label{fig:cifar_noise8_and_16_relative_values}]{
    \includegraphics[width=.225\textwidth,trim=10mm 0 0mm 10mm]{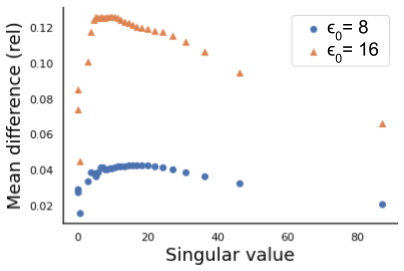}}
    \end{subfigure}\hspace{0mm}
    \begin{subfigure}[\!MNIST - Subs. Angle \label{fig:mnist_noise16_vectorspace}]{
    \includegraphics[width=.225\textwidth,trim=10mm 0 0mm 10mm]{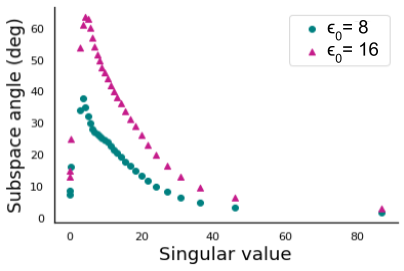}}
    \end{subfigure}\hspace{0mm}
    \begin{subfigure}[\!CIFAR - Sub. Angle \label{fig:cifar_noise16_vectorspace}]{
    \includegraphics[width=.225\textwidth,trim=10mm 0 0mm 10mm]{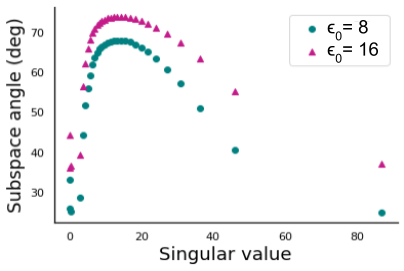}}
    \end{subfigure}
    \vspace{-2mm}
    \caption{Effect of augmentations on the singular spectrum of the network Jacobian of ResNet20 trained on CIFAR10, and a MLP on MNIST, trained till epoch 15.  (a), (b) Difference in singular values and (c), (d) singular subspace angles between the original and augmented data with bounded perturbations with $\epsilon_0 = 8$ and $\epsilon_0 = 16$ for different ranges of singular values. {Note that augmentations with larger bound $\epsilon_0$ results in larger perturbations to the singular spectrum.}
    }\vspace{-2mm}
    \label{fig:spectrum}
\end{figure*}

\subsection{\!Effect of Augmentation on Eigenvectors of the NTK}
\label{sec:singularvectors}
Here, we focus on characterizing the effect of data augmentation on the eigenspace of the NTK. 
Let the singular subspace decomposition of the Jacobian be $\Jc=\bm{U} \bm{\Sigma} \bm{V}^T$. Then for the NTK, we have $\bm{\Theta}=\Jc \Jc^T=\bm{U \Sigma V}^T \bm{V \Sigma U}^T= \bm{U}\bm{\Sigma}^2 \bm{U}^T$ (since $\bm{V}^T\bm{V}=\bm{I}$). Hence, the perturbation of the eigenspace of the NTK is the same as perturbation of the left singular subspace of the Jacobian $\Jc$.
Suppose $\sigma_i$ are singular values of the Jacobian. 
Let the perturbed Jacobian be $\tilde{\Jc}=\Jc+\bm{E}$, and
denote the eigengap
$\gamma_0 = \min\{\sigma_i - \sigma_{i+1} : i = 1, \cdots , r\}$ where $\sigma_{r+1} := 0$.
Assuming $\gamma_0 \geq 2\|\bm{E}\|_2$, a
combination of Wedin’s theorem \citep{wedin1972perturbation} and Mirsky’s inequality \citep{mirsky1960symmetric} %
implies  
\begin{equation}\label{eq:eigenvector}
    \|\bm{u}_i-\tilde{\bm{u}}_i\|\leq 2\sqrt{2}\|\bm{E}\|/\gamma_0. %
\end{equation}
This result provides an upper-bound on the change of every left singular vectors of the Jacobian. 

However as we discuss below, data augmentation affects larger and smaller singular directions differently.
To see the effect of data augmentation on every singular vectors of the Jacobian, let the subspace decomposition of Jacobian be $\Jc=\bm{U\Sigma}\bm{V}^T=\bm{U}_s\bm{\Sigma}_s \bm{V}^T_s+\bm{U}_n\bm{\Sigma}_n \bm{V}^T_n$ , where $\bm{U}_s$ associated with nonzero singular values, spans the column space of $\Jc$, which is also called the signal subspace, and $\bm{U}_n$, associated with zero singular values ($\bm{\Sigma}_n=0$), spans the orthogonal space of $\bm{U}_s$, which is also called the noise subspace. 
Similarly, let the subspace decomposition of the perturbed Jacobian be $\tilde{\Jc}=\tilde{\bm{U}}\tilde{\bm{\Sigma}} \tilde{\bm{V}}^T=\tilde{\bm{U}}_s\tilde{\bm{\Sigma}_s} \tilde{\bm{V}}^T_s+\tilde{\bm{U}}_n\tilde{\bm{\Sigma}_n} \tilde{\bm{V}}^T_n$, %
and $\tilde{\bm{U}}_s=\bm{U}_s+\Delta \bm{U}_s$, where $\Delta \bm{U}_s$ is the perturbation of the singular vectors that span the signal subspace.
Then the following general first-order expression for the perturbation of the orthogonal
subspace due to perturbations of the Jacobian characterize the change of the singular directions:
$\Delta \bm{U}_s=\bm{U}_n \bm{U}_n^T \bm{E} \bm{V}_s \bm{\Sigma}_s^{-1}$ \citep{li1993performance}. We see that %
singular vectors associated to larger singular values are more robust to data augmentation, compared to
others.
Note that {in general} singular vectors are more robust %
than singular values. 

\figref{fig:spectrum} shows the effect of %
{perturbations} with $\epsilon_0= 8, 16$ on singular values and singular vectors of the Jacobian matrix for a 1 hidden layer MLP trained on MNIST, %
and ResNet20 trained on CIFAR10. As calculating the entire Jacobian spectrum is computationally prohibitive, data is subsampled from 3 classes. 
We report %
the effect of other real-world augmentation techniques, such as random crops, flips, rotations and Autoaugment \cite{cubuk2019autoaugment} - which includes %
translations, contrast, and brightness transforms - %
in Appendix C. 
We observe that data augmentation increases smaller singular values relatively more. 
On the other hand, it affects prominent singular vectors of the Jacobian to a smaller extent. %

\vspace{-1.5mm}
\subsection{\!Augmentation Improves Training \& Generalization}
\vspace{-1.5mm}

Recent studies have revealed that the Jacobian matrix of common neural networks is low rank. That is there are a number of large singular values and the rest of the singular values are small. Based on this, the Jacobian spectrum can be divided into information and nuisance spaces \cite{oymak2019generalization}. Information space is a lower dimensional space associated with the prominent singular value/vectors of the Jacobian. Nuisance space is a high dimensional space corresponding to smaller singular value/vectors of the Jacobian. While learning over information space is fast and generalizes well, learning over nuisance space is slow and results in overfitting \cite{oymak2019generalization}. Importantly, recent theoretical studies connected the generalization performance to small singular values (of the information space) \cite{arora2019fine}. 

Our results show that label-preserving additive perturbations relatively enlarge the smaller singular values of the Jacobian in a \textit{stochastic} way and with a high probability. This benefits generalization in 2 ways. First, this stochastic behavior prevents overfitting along any particular singular direction \textit{in the nuisance space}, as stochastic perturbation of the \textit{smallest} singular values results in a stochastic noise to be added to the gradient at every training iteration. This prevents overfitting (thus a larger training loss as shown in {Appendix \ref{appendix:training-dynamics-vs-generalization}}), and improves generalization \citep{cubuk2020randaugment,dao2019kernel}. Theorem \ref{app:full-data-augment-dynamics}  in the Appendix characterizes the expected training dynamics resulted by data augmentation.
Second, additive perturbations improve the generalization by enlarging the smaller (useful) singular values that lie in the \textit{information space},
while preserving eigenvectors. Hence, it enhances learning along these (harder to learn) components.
The following Lemma captures the improvement in the generalization performance, as a result of data augmentation.

\begin{lemma}
\label{cor:generalization}
{Assume gradient descent with learning rate $\eta$ is applied to train a neural network with constant NTK and Lipschitz constant $L$,
on data points %
augmented with %
additive perturbations bounded by $\epsilon_0$ as defined in Sec. \ref{sec:formulation}.} Let $\sigma_{\min}$ be the minimum singular value of Jacobian $\mathcal{J}$ associated with training data $\bm{X}_{train}$. %
With probability $1-\delta$, generalization error of the network trained with gradient descent on augmented data $\X_{aug}$ enjoys the following bound: %
\vspace{-2mm}
\begin{align}
    \sqrt{
    \frac{2}{(\sigma_{\min} + \sqrt{n}L\epsilon_0)^2}} + \mathcal{O}\left(\log{\frac{1}{\delta}}\right).
\end{align}
\end{lemma}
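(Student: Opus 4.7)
\begin{proofsk}
The plan is to cast the bound as an instance of an NTK-based Rademacher / norm-based generalization bound (in the style of \citet{arora2019fine}) applied to the \emph{augmented} data, and then to invoke the Jacobian perturbation analysis of Sec.~\ref{sec:singularvalues} to lower bound the smallest singular value of the augmented Jacobian by $\sigma_{\min} + \sqrt{n}L\epsilon_0$. Substituting this lower bound into the standard NTK generalization estimate produces the advertised $\sqrt{2/(\sigma_{\min}+\sqrt{n}L\epsilon_0)^2}$ term, and the $\mathcal{O}(\log(1/\delta))$ term is the usual concentration penalty.

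More concretely, I would proceed as follows. First, recall the NTK generalization bound: under constant-NTK dynamics, the network fit after gradient descent on augmented data satisfies a Rademacher-type bound of the form
\begin{equation}
    \mathrm{err}_{\mathcal{D}} \;\lesssim\; \sqrt{\frac{2\, \bm{y}^T \tilde{\bm{\Theta}}^{-1} \bm{y}}{n}} \;+\; \mathcal{O}\!\left(\sqrt{\tfrac{\log(1/\delta)}{n}}\right),
\end{equation}
where $\tilde{\bm{\Theta}} = \tilde{\Jc}\tilde{\Jc}^T$ is the NTK of the (augmented) training set. Second, I would control the quadratic form by its worst-case eigenvalue,
\begin{equation}
    \bm{y}^T \tilde{\bm{\Theta}}^{-1} \bm{y} \;\leq\; \frac{\|\bm{y}\|^2}{\lambda_{\min}(\tilde{\bm{\Theta}})} \;=\; \frac{\|\bm{y}\|^2}{\tilde{\sigma}_{\min}^2},
\end{equation}
and use the labeling convention $y_i\in[0,1]$, giving $\|\bm{y}\|^2\leq n$, so that the quadratic form is at most $n/\tilde{\sigma}_{\min}^2$.

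The main work, and the main obstacle, is to show that $\tilde{\sigma}_{\min} \geq \sigma_{\min} + \sqrt{n}L\epsilon_0$ with high probability. Lemma~\ref{lemma:singular-value-perturbation-bounds} gives $\E[\tilde{\sigma}_i^2] = \sigma_i^2 + \sigma_i(1-2p_i)\|\bm{E}\| + \|\bm{E}\|^2/3$, and the decomposition $\tilde{\sigma}_i^2=(\sigma_i+\mu_i)^2+\zeta_i^2$ from Sec.~\ref{sec:singularvalues} shows that the perturbation to the smallest singular direction is dominated by the component $\zeta_i$ in the orthogonal (noise) subspace, which is positive with probability approaching one as $m$ grows. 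Combined with $p_{\min}\!\approx\!0$ (since the smallest singular value essentially always increases under a random additive perturbation in a high-dimensional ambient space), one obtains $\tilde{\sigma}_{\min}\geq\sigma_{\min}+\|\bm{E}\|$ with high probability, and then $\|\bm{E}\|\leq\sqrt{n}L\epsilon_0$ from the Lipschitz bound on $\Jc$ established in Sec.~\ref{sec:formulation}.

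Finally, I would chain the inequalities: substitute $\lambda_{\min}(\tilde{\bm{\Theta}})\geq(\sigma_{\min}+\sqrt{n}L\epsilon_0)^2$ and $\|\bm{y}\|^2\leq n$ into the Rademacher bound, cancel the $n$ in numerator and denominator under the square root, and absorb the sub-Gaussian concentration term into $\mathcal{O}(\log(1/\delta))$. The delicate step is the high-probability lower bound on $\tilde{\sigma}_{\min}$: Weyl's theorem alone would only give $\tilde{\sigma}_{\min}\geq\sigma_{\min}-\|\bm{E}\|$, the wrong direction, so the argument genuinely relies on the probabilistic separation between signal and noise subspaces developed in Sec.~\ref{sec:singularvalues} rather than a deterministic perturbation inequality.
\end{proofsk}
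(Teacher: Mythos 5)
Your first two steps coincide exactly with the paper's proof: both invoke the NTK generalization bound of Theorem 5.1 of Arora et al.\ applied to the (augmented) data, bound the quadratic form $\bm{y}^T(\Jc\Jc^T)^{-1}\bm{y}$ by $\|\bm{y}\|^2/\lambda_{\min}\leq n/\sigma_{\min}^2$, and cancel the $n$. The divergence, and the gap, is in the step you yourself flag as delicate. You propose to establish $\tilde{\sigma}_{\min}\geq \sigma_{\min}+\sqrt{n}L\epsilon_0$ with high probability via the signal/noise subspace decomposition and $p_{\min}\approx 0$. This cannot work as stated: Weyl's inequality gives the deterministic bound $\tilde{\sigma}_{\min}\leq \sigma_{\min}+\|\bm{E}\|_2$, and since $\sqrt{n}L\epsilon_0$ upper bounds $\|\bm{E}\|_F\geq\|\bm{E}\|_2$, your claimed lower bound would force $\tilde{\sigma}_{\min}=\sigma_{\min}+\|\bm{E}\|_2=\sigma_{\min}+\sqrt{n}L\epsilon_0$, i.e.\ simultaneous tightness of Weyl and of the Frobenius-to-operator-norm bound, which fails generically. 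The Stewart decomposition $\tilde{\sigma}_i^2=(\sigma_i+\mu_i)^2+\zeta_i^2$ with $|\mu_i|\leq\|\bm{P}\bm{E}\|_2$ and $\zeta_i\leq\|\bm{P}_\perp\bm{E}\|_2$ gives a probable \emph{increase} of the small singular values, but not by the full amount $\|\bm{E}\|$; no probabilistic refinement can beat the deterministic Weyl ceiling.

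For comparison, the paper does not attempt your lower bound at all. It uses the Weyl \emph{upper} bound $\tilde{\sigma}_i\leq\sigma_i+\sqrt{n}L\epsilon_0$ and concludes that ``the improvement in the generalization error is at most'' the displayed quantity --- i.e.\ the stated expression is the most optimistic value the NTK bound can attain after augmentation, not a certified upper bound on the generalization error of the augmented network. Read this way the lemma is a best-case statement, and the proof is a two-line substitution. Your instinct that a genuine upper bound on generalization error requires a high-probability \emph{lower} bound on $\tilde{\sigma}_{\min}$ is logically sound and exposes a real looseness in the lemma as phrased, but the specific bound you propose to prove is unattainable; a correct version would have to settle for a smaller guaranteed increase (e.g.\ of order $\zeta_{\min}=\sigma_{\min}(\bm{P}_\perp\bm{E})$ from the Stewart decomposition), which would change the constant in the denominator.
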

The proof can be found in Appendix {\ref{app:proof-generalization}}.

\section{Effective Subsets for Data Augmentation}\label{sec:coreset-augment}
\begin{algorithm}[t]
	\begin{algorithmic}[1]
		\Require{The dataset $\mathcal{D}=\{(\x_i,y_i)\}_{i=1}^n$, number of iterations $T$.}
		\Ensure{Output model parameters $\W^T$.}
		\For{$t=1, \cdots, T$}
		\State $\X_{aug}^t=\emptyset$.
		\For {$c \in \{1,\cdots,C\}$}
		\State $S_c^t = \emptyset$, $[\bm{G}_{S_c^t}]_{i.}=c_1 \bm{1}~~\forall i$.
		\While {$\|\bm{G}_{S_c^t}\|_F\geq \xi$}\Comment{Extract a coreset from class $c$ by solving Eq. \eqref{eq:fl_min}}
		\State $S_c^t = \{S_c^t \cup {\arg\max}_{s\in V\setminus S_c^t} (\|\bm{G}_{S_c^t}\|_F-\|\bm{G}_{\{S_c^t \cup \{s\}\}}\|_F)\}$
		\EndWhile
		\State $\gamma_j=\sum_{i\in V_c}\!\mathbb{I}[j\!=\!{\arg\min}_{j'\in S}\|\Jc^T(\W^t,\bm{x}_{i})r_{i}\!-\!\Jc^T(\W^t,\bm{x}_{j'})r_{j'}\|]$
		\hspace{-2mm}\Comment{Coreset weights}
		\State $\X_{aug}^t=\{\X_{aug}\cup\{\cup^r_{i=1} T_i^t(\X_{S^t_c})\}\}$\Comment{Augment the coreset}
		\State $\bm{\rho}^t_j=\gamma^t_j/r$ %
		\EndFor
		\State Update the parameters $\W^t$ using weighted gradient descent on $\X_{aug}^t$ or $\{\X_{train} \cup \X_{aug}^t\}$. 
		\EndFor
	\end{algorithmic}
	\caption{\textsc{Coresets for Efficient Data Augmentation}}
	\label{alg:core_aug}
\end{algorithm}

Here, we focus on identifying 
subsets of data that when augmented similarly improve generalization and prevent overfitting.
To do so, our key idea is to find subsets of data points that when augmented, closely capture the alignment of the NTK (or equivalently the Jacobian) corresponding to the full augmented data with the residual vector, 
$\Jc(\W^t,\X^t_{aug})^T\bm{r}_{aug}^t$. %
If such subsets can be found, augmenting only the subsets will change the %
NTK and its alignment with the residual in a similar way as that of full data augmentation, and
will result in similar improved training dynamics.
However, generating the full set of transformations $\X_{aug}^t$ is often very expensive, 
particularly for strong augmentations and large datasets. 
Hence, %
generating the transformations, %
and then extracting the subsets may not provide a considerable 
overall speedup.

In the following, we show that weighted subsets (coresets) $S$ that closely estimate the alignment of the Jacobian associated to the original data with the residual vector $\Jc^T(\W^t,\X_{train})\bm{r}_{train}$ %
can closely estimate the alignment of the Jacobian of the full augmented data and the corresponding residual $\Jc^T(\W^t,\X^t_{aug})\bm{r}_{aug}^t$. Thus, the most effective subsets for augmentation can be directly found from the training data.
Formally, subsets $S_*^t$ weighted by $\bm{\gamma}_S^t$ that capture the alignment of the full Jacobian with the residual by an error of at most $\xi$ can be found by
solving the following optimization problem: \looseness=-1
\begin{align}
\label{eq:coreset_error}
    S^t_*=&\argmin_{S\subseteq V} |S| \quad\quad \text{s.t.} \quad\quad
    \|\Jc^T(\W^t,\X^t)\bm{r}^t-\text{diag}(\bm{\gamma}_{S}^t)\Jc^T({\W}^t,\X_S^t)\bm{r}_S^t\| \leq \xi.%
\vspace{-2mm}    
\end{align}
Solving the above optimization problem is NP-hard. However, as we discuss in the Appendix {\ref{app:sub}}, 
a near optimal subset can be found by minimizing
the Frobenius norm of a matrix $\bm{G}_S$, in which the $i^{th}$ row contains the %
euclidean distance between data point $i$ and its closest element in the subset $S$, in the gradient space. Formally, $[\bm{G}_S]_{i.}={\min}_{j'\in S}\|\Jc^T(\W^t,\bm{x}_{i})r_{i}-\Jc^T(\W^t,\bm{x}_{j'})r_{j'}\|$. When $S=\emptyset$, $[\bm{G}_S]_{i.}=c_1 \bm{1}$, where $c_1$ is a big constant.

Intuitively, such subsets contain the set of medoids of the dataset in the gradient space.
Medoids of a dataset are defined as the most centrally located elements in the dataset \citep{kaufman1987clustering}.
The weight of every element $j\in S$ is the number of data points closest to it in the gradient space, i.e., $\gamma_j=\sum_{i\in V}\mathbb{I}[j={\arg\min}_{j'\in S}\|\Jc^T(\W^t,\bm{x}_{i})r_{i}-\Jc^T(\W^t,\bm{x}_{j'})r_{j'}\|]$.
The set of medoids can be found by solving the following \textit{submodular}%
\footnote{A set function $F:2^V \rightarrow \R^+$ is submodular if $F(S\cup\{e\}) - F(S) \geq F(T\cup\{e\}) - F(T),$ for any $S\subseteq T \subseteq V$ and $e\in V\setminus T$. %
$F$ is \textit{monotone} if $F(e|S)\geq 0$ for any $e\!\in\!V\!\setminus \!S$ and $S\subseteq V$.}  cover problem: 
\begin{equation}\label{eq:fl_min}
S_*^t={\arg\min}_{S\subseteq V}|S|\quad s.t.\quad \|\bm{G}_S\|_F\leq \xi.
\end{equation}
The classical greedy algorithm provides a %
logarithmic approximation for the above submodular maximization problem, i.e., $|S|\leq (1+ln(n))$. It starts with the empty set $S_0=\emptyset$, and at each iteration $\tau$, it selects the training example $s\in V\setminus S_{\tau-1}$ that maximizes the marginal gain, i.e., %
$S_\tau = S_{\tau-1}\cup\{{\arg\max}_{s\in V\setminus S_{\tau-1}} (\|\bm{G}_{S_{\tau-1}}\|_F-\|\bm{G}_{\{S_{\tau-1} \cup \{s\}\}}\|_F)\}$. 
The $\mathcal{O}(nk)$ 
computational complexity of the greedy algorithm %
can be reduced to $\mathcal{O}(n)$ using randomized methods \citep{mirzasoleiman2015lazier} and further improved using lazy evaluation \citep{minoux1978accelerated} and distributed implementations \citep{mirzasoleiman2013distributed}. 
The rows of the matrix $\bm{G}$ can be efficiently upper-bounded using
the gradient of the loss w.r.t. the input to the last layer of the network, which has been shown to capture the variation of the gradient norms closely \citep{katharopoulos2018not}. 
The above upper-bound is only marginally more expensive than calculating the value of the loss. %
Hence the subset can be found efficiently.
Better approximations can be obtained by considering earlier layers in addition to the last two, at the expense of greater computational cost. %

At every iteration $t$ during training, %
we select a coreset from every class $c\in [C]$ separately, and apply
the set of transformations $\{T_i^t\}_{i=1}^{r}$
only to the elements of the coresets, i.e., $X_{aug}^t=\{\cup^r_{i=1} T_i^t(\X_{S^t})\}$.
We divide the weight of every element $j$ in the coreset equally among its transformations, i.e. the final weight 
${\rho}^t_j=\gamma^t_j/r$ if $j \in S^t$. %
We apply the gradient descent updates in Eq. \eqref{eq:gd} to the weighted Jacobian matrix of $\X^t=\X_{aug}^t$ or $\X^t=\{\X_{train}\cup \X_{aug}^t\}$ (viewing $\bm{\rho}^t$ as $\bm{\rho}^t \in \mathbb{R}^{n}$) as follows: 
\begin{align}
    \bm{W}^{t + 1} = \bm{W}^{t} - \eta \left(\text{diag}(\bm{\rho}^t)\Jc(\W^t,\X^t)\right)^T\bm{r}^t.
\end{align} 
The pseudocode is illustrated in Alg. \ref{alg:core_aug}.

The following Lemma upper bounds the difference between the alignment of the Jacobian and residual for augmented coreset vs. full augmented data.
\begin{lemma}
\label{lemma:gradient-coreset-augment}
 Let $S$ be a coreset that captures the alignment of the full data NTK with residual with an error of at most $\xi$ as in Eq. \ref{eq:coreset_error}.
 Augmenting the coreset with perturbations bounded by $\epsilon_0 \!\leq\! \frac{1}{n^{\frac{3}{2}}\sqrt{L}}$%
 captures the alignment of the fully augmented data with the residual by an error of at most 
 \begin{align}
 \vspace{-2mm}
\hspace{-3mm}\| \Jc^T(\W^t,\X_{aug})\bm{r} - \text{diag}(\bm{\rho}^t)\Jc^t(&\W^t,\X_{S^{aug}})\bm{r}_{S}\| 
 \leq \xi + \mathcal{O}\left(\sqrt{L}\right).
 \end{align}
\end{lemma}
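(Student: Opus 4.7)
The plan is to bound the error by inserting the non-augmented weighted coreset expression as a pivot and then invoking the triangle inequality. Specifically, I would write
\begin{align*}
&\| \Jc^T(\W^t,\X_{aug})\bm{r}_{aug} - \text{diag}(\bm{\rho}^t)\Jc^T(\W^t,\X_{S^{aug}})\bm{r}_{S^{aug}}\| \\
&\quad\leq \| \Jc^T(\W^t,\X_{aug})\bm{r}_{aug} - \Jc^T(\W^t,\X_{train})\bm{r}_{train}\| \\
&\quad\quad+ \|\Jc^T(\W^t,\X_{train})\bm{r}_{train} - \text{diag}(\bm{\gamma}^t)\Jc^T(\W^t,\X_{S})\bm{r}_{S}\| \\
&\quad\quad+ \|\text{diag}(\bm{\gamma}^t)\Jc^T(\W^t,\X_{S})\bm{r}_{S} - \text{diag}(\bm{\rho}^t)\Jc^T(\W^t,\X_{S^{aug}})\bm{r}_{S^{aug}}\|.
\end{align*}
The middle term is immediately bounded by $\xi$ using the coreset guarantee of Eq.~\eqref{eq:coreset_error}, so the task reduces to bounding the two outer augmentation-related terms.

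For the first outer term, I would expand $\Jc^T(\W^t,\X_{aug})\bm{r}_{aug}$ as a sum over the $r$ augmented copies of each of the $n$ training points, and compare against $\Jc^T(\W^t,\X_{train})\bm{r}_{train}$ scaled appropriately. Using the Lipschitz smoothness assumption $\|\Jc(\W,\bm{x}_i) - \Jc(\W, T_j(\bm{x}_i))\| \leq L\epsilon_0$ together with the residual bound (the outputs and labels are bounded by construction since $y_i \in [0,1]$), the perturbation matrix stacked over all augmented samples has Frobenius norm at most $\sqrt{rn}\, L\epsilon_0$, and each residual entry is $\mathcal{O}(1)$, so the product is bounded by $\mathcal{O}(rn\, L\epsilon_0)$. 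Substituting the hypothesis $\epsilon_0 \leq 1/(n^{3/2}\sqrt{L})$ collapses this to $\mathcal{O}(r\sqrt{L}/\sqrt{n}) = \mathcal{O}(\sqrt{L})$ for constant $r$. The third outer term is handled analogously, since the weight assignment $\bm{\rho}^t_j = \bm{\gamma}^t_j / r$ ensures that summing the weighted contributions of the $r$ augmented copies of each coreset element recovers exactly $\bm{\gamma}^t_j \Jc^T(\W^t,\bm{x}_j) r_j$ up to the same Lipschitz-controlled perturbation, which again yields an $\mathcal{O}(\sqrt{L})$ contribution.

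Combining the three bounds gives the desired $\xi + \mathcal{O}(\sqrt{L})$. The main obstacle will be the third term: I have to carefully verify that the distribution of the per-coreset-element weight $\bm{\gamma}^t_j$ uniformly across its $r$ augmentations via $\bm{\rho}^t_j = \bm{\gamma}^t_j/r$ really does telescope cleanly, so that the only discrepancy between $\text{diag}(\bm{\gamma}^t)\Jc^T(\W^t,\X_S)\bm{r}_S$ and $\text{diag}(\bm{\rho}^t)\Jc^T(\W^t,\X_{S^{aug}})\bm{r}_{S^{aug}}$ is driven by the Jacobian-perturbation term $L\epsilon_0$ and a corresponding residual-perturbation term, rather than by any weighting mismatch. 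A secondary subtlety is that $|\bm{\gamma}^t_j|$ can in principle be as large as $n$ (if one coreset point is the medoid of many training points), so I must confirm that the aggregated $\ell_2$ norm of the weighted residual differences still scales sub-polynomially in $n$ so that the hypothesis $\epsilon_0 \leq 1/(n^{3/2}\sqrt{L})$ suffices. Once these accounting details are pinned down, the three-term triangle bound delivers the claim.
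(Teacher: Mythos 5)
Your proposal is correct and follows essentially the same route as the paper's proof: a triangle-inequality decomposition into the unaugmented coreset error (bounded by $\xi$ from Eq.~\eqref{eq:coreset_error}) plus two Jacobian-perturbation terms, each controlled by the Lipschitz bound $\|\bm{E}\|\leq\sqrt{n}L\epsilon_0$ and collapsed to $\mathcal{O}(\sqrt{L})$ via the hypothesis $\epsilon_0\leq n^{-3/2}L^{-1/2}$. Your version is somewhat more careful than the paper's (which simply writes $\tilde{\Jc}=\Jc+\bm{E}$, reuses the same residual $\bm{r}$ on both sides, and drops the $\text{diag}(\bm{\rho}^t)$ weighting from the coreset perturbation term $\bm{E}_S\bm{r}_S$), and the weighting and residual-perturbation subtleties you flag are real but do not change the order of the bound.
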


\vspace{-3mm}
\subsection{Coreset vs. Max-loss Data Augmentation}\vspace{-1mm}
In the initial phase of training %
the NTK goes through rapid changes. %
This determines the final basin of convergence and network’s final performance \citep{fort2020deep}.
Regularizing deep networks by weight decay or data augmentation mainly affects this initial phase and matters little afterwards \citep{golatkar2019time}.
Crucially, augmenting coresets that closely capture the alignment of the NTK with the residual during this initial phase results in less overfitting and 
improved generalization performance.
On the other hand, augmenting points with maximum loss early in training decreases the alignment between the NTK and the label vector and impedes learning and convergence.
After this initial phase when the network has good prediction performance, the gradients for majority of data points become small. Here, the alignment is mainly captured by the elements with the maximum loss. 
Thus, %
as training proceeds, the intersection between the elements of the coresets and examples with maximum loss increases. We visualize this pattern in Appendix {\ref{app:intersection-maxloss-coreset}}. %
The following Theorem characterizes the training dynamics of training on the full data and the augmented coresets, using our additive perturbation model.

\begin{table*}[t]
 \caption{Training ResNet20 (R20) and WideResnet-28-10 (W2810) on CIFAR10 (C10) using small subsets, and ResNet18 (R18) on Caltech256 (Cal). We compare accuracies of training on and strongly (and weakly) augmenting subsets.
 For CIFAR10, training and augmenting subsets selected by max-loss performed poorly and did not converge. Average number of examples per class in each subset is shown in parentheses. %
 {Appendix \ref{app:results-train-subset-include-weak-aug-only} shows baseline accuracies from only weak augmentations.}
 }
\centering
\footnotesize
\setlength{\tabcolsep}{2.1pt}
\resizebox{\textwidth}{!}
    {
\begin{tabular}{cccccccccccc}
\toprule
  \multicolumn{1}{c}{Model/Data}
  & \multicolumn{4}{c}{C10/R20}
  & \multicolumn{1}{c}{C10/W2810}
  & \multicolumn{6}{c}{Cal/R18} \\
   \cmidrule(lr){2-5} \cmidrule(lr){6-6}  \cmidrule(lr){7-12}
   Subset & 0.1\% (5) & 0.2\% (10) & 0.5\% (25) & 1\% (50) & 1\% (50) & 5\% (3) & 10\% (6) & 20\% (12) & 30\% (18) & 40\% (24) & 50\% (30) \\
  \midrule
  Max-loss &  $<15\%$ &  $<15\%$ &  $<15\%$ &  $<15\%$ & $<15\%$ & $19.2$ & $50.6$ & $71.3$ & $75.6$ & $77.3$ & $78.6$\\
  Random & $33.5$ & $42.7$ & $58.7$ & $74.4$ & $57.7$ & $41.5$ & $61.8$ & $72.5$ & $75.7$ & $77.6$ & $78.5$\\
  Ours & \bm{$37.8$} & \bm{$45.1$} & \bm{$63.9$} & \bm{$74.7$} & \bm{$62.1$} & \bm{$52.7$} & \bm{$65.4$} & \bm{$73.1$} & \bm{$76.3$} & \bm{$77.7$} & \bm{$78.9$} \\
  \bottomrule
\end{tabular}
}
\label{tab:results-train-subset}
\vspace{-4mm}
\end{table*}

\begin{theorem} %
\label{theorem:alpha-pl}
Let $\mathcal{L}_{i}$ be $\beta$-smooth, $\mathcal{L}$ be $\lambda$-smooth and satisfy the $\alpha$-PL condition, that is for $\alpha > 0$, ${\| \nabla \mathcal{L}(\bm{W}) \|}^2 \geq \alpha \mathcal{L}(\bm{W})$ for all weights $\bm{W}$. Let $f$ be Lipschitz in $\bm{X}$ with constant $L'$, and $\bar{L} =  \max\{L,L'\}$. Let $G_0$ be the gradient at initializaion, $\sigma_{\max}$ the maximum singular value of the coreset Jacobian at initialization. Choosing $\epsilon_0 \leq \frac{1}{\sigma_{\max}\sqrt{\bar{L}n}}$ and running SGD on full data with augmented coreset using constant step size $\eta = \frac{\alpha}{\lambda\beta}$, result in the following bound:
\begin{align}
\vspace{-3mm}
    \mathbb{E} [\| \nabla& \mathcal{L}^{f + c_{\aug}}(\bm{W}^t) \|] \leq  \frac{1}{\sqrt{\alpha}} \left( 1 - \frac{\alpha \eta}{2} \right)^{\frac{t}{2}} \left(2G_0 + \xi + \mathcal{O}\left(\frac{\sqrt{\bar{L}}}{\sigma_{\max}}\right) \right).\nonumber
\end{align}
\vspace{-5mm}
\end{theorem}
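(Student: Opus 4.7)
The plan is to follow the standard $\alpha$-PL linear-convergence template for (stochastic) gradient descent on the full-augmented loss $\mathcal{L}^{f+c_{\aug}}$, but with the exact gradient replaced by the coreset-augmented search direction used in Alg.~\ref{alg:core_aug}, treating the gap between the two as a bounded deterministic perturbation. Let $\bm{g}^t=(\text{diag}(\bm{\rho}^t)\Jc(\W^t,\X^t))^T\bm{r}^t$ be the step direction, and let $\bm{h}^t=\nabla\mathcal{L}^{f+c_{\aug}}(\W^t)$ be the gradient of the target loss. The key quantity is the bias $\bm{b}^t:=\bm{g}^t-\bm{h}^t$; here Lemma~\ref{lemma:gradient-coreset-augment} is applied after sharpening the input radius to $\epsilon_0\le 1/(\sigma_{\max}\sqrt{\bar L n})$ (rather than the looser $1/(n^{3/2}\sqrt L)$), which replaces the $\mathcal{O}(\sqrt L)$ contribution by $\mathcal{O}(\sqrt{\bar L}/\sigma_{\max})$ because the relevant Jacobian-perturbation term $\sqrt n\,\bar L\,\epsilon_0$ collapses to $\sqrt{\bar L}/\sigma_{\max}$. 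Thus $\|\bm{b}^t\|\le \xi+\mathcal{O}(\sqrt{\bar L}/\sigma_{\max})=:B$.

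Next I would apply the $\lambda$-smoothness descent lemma to $\mathcal{L}^{f+c_{\aug}}$,
\begin{equation*}
\mathcal{L}^{f+c_{\aug}}(\W^{t+1})\le \mathcal{L}^{f+c_{\aug}}(\W^t)-\eta\langle \bm{h}^t,\bm{g}^t\rangle+\tfrac{\lambda\eta^2}{2}\|\bm{g}^t\|^2,
\end{equation*}
substitute $\bm{g}^t=\bm{h}^t+\bm{b}^t$, and control cross terms via Young's inequality. The prescribed step size $\eta=\alpha/(\lambda\beta)$ is chosen precisely so that $\lambda\eta^2\|\bm{g}^t\|^2/2$ is dominated by $\eta\|\bm{h}^t\|^2/2$ (this is where $\beta$-smoothness of the per-sample loss enters, giving $\|\bm{g}^t\|^2\le 2\beta\mathcal{L}^{f+c_{\aug}}$ up to the $B^2$ perturbation). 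Invoking the $\alpha$-PL inequality $\|\bm{h}^t\|^2\ge\alpha\mathcal{L}^{f+c_{\aug}}(\W^t)$ converts the inequality into the one-step recursion
\begin{equation*}
\mathbb{E}\mathcal{L}^{f+c_{\aug}}(\W^{t+1})\le \bigl(1-\tfrac{\alpha\eta}{2}\bigr)\mathbb{E}\mathcal{L}^{f+c_{\aug}}(\W^t)+C\,B^2,
\end{equation*}
where $C$ depends only on $\eta,\lambda$. Unrolling yields a geometric decay on the loss with an additive $B^2$ floor that is absorbed into the final bracket.

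Finally I would translate the loss bound into the stated gradient-norm bound. Using $\lambda$-smoothness (equivalently, $\|\bm{h}^t\|\le\sqrt{2\lambda\mathcal{L}^{f+c_{\aug}}(\W^t)}$) and Jensen's inequality to pull expectation inside the square root, together with the initialization estimate $\sqrt{\mathcal{L}^{f+c_{\aug}}(\W^0)}\le G_0/\sqrt{\alpha}$ obtained by applying the PL inequality at $t=0$, produces the $1/\sqrt{\alpha}$ prefactor and the $2G_0$ term. The $B$ contributions pass through the square root as an additive $\xi+\mathcal{O}(\sqrt{\bar L}/\sigma_{\max})$ inside the bracket, and the $(1-\alpha\eta/2)^{t/2}$ factor comes directly from the geometric rate of the loss recursion after square-rooting.

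The main obstacle will be the bookkeeping in the descent step: ensuring that the cross-terms $\eta\langle\bm{h}^t,\bm{b}^t\rangle$ and $\lambda\eta^2\langle\bm{h}^t,\bm{b}^t\rangle$ can be split by Young's inequality so that one half is strictly absorbed into the $-\tfrac{\alpha\eta}{2}\mathcal{L}^{f+c_{\aug}}$ contraction (for which the specific choice $\eta=\alpha/(\lambda\beta)$ is essential) and the other half contributes only an $\mathcal{O}(B^2)$ additive term that survives the square root as the advertised $\xi+\mathcal{O}(\sqrt{\bar L}/\sigma_{\max})$. A secondary subtlety is the stochasticity: the SGD noise must be shown to be mean-zero conditional on $\W^t$ so that taking expectations in the recursion does not introduce extra terms beyond the deterministic coreset bias $B$; this should follow from the standard unbiasedness of mini-batch sampling within each class as carried out in Alg.~\ref{alg:core_aug}.
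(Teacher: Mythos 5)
Your route is genuinely different from the paper's, and it has a gap that prevents it from yielding the bound as stated. You treat the update direction as a \emph{biased} gradient of the target loss, run a descent-lemma/Young's-inequality argument, and obtain a recursion with an additive $\mathcal{O}(B^2)$ term per step. Unrolling such a recursion gives a geometric decay \emph{plus a persistent error floor} of order $B^2/(\alpha\eta)$ that does not vanish as $t\to\infty$; it cannot be ``absorbed into the final bracket,'' because in the theorem the entire bracket $\left(2G_0+\xi+\mathcal{O}(\sqrt{\bar L}/\sigma_{\max})\right)$ is multiplied by the decaying factor $(1-\alpha\eta/2)^{t/2}$. A per-iteration bias analysis therefore proves a structurally different (non-decaying) bound, not the stated one. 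There is also a conflation in your setup: the objective $\mathcal{L}^{f+c_{\aug}}$ in the theorem is the loss on the full data together with the augmented coreset, which is exactly the objective Alg.~\ref{alg:core_aug} runs SGD on---so relative to that loss the step is an ordinary (unbiased) stochastic gradient and there is no bias term $\bm{b}^t$ to control; Lemma~\ref{lemma:gradient-coreset-augment} bounds the discrepancy to the \emph{fully augmented} gradient, which is not the quantity appearing on the left-hand side.

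The paper's proof avoids all of this by applying Theorem~1 of Bassily et al.\ as a black box to SGD on $\mathcal{L}(\bm{W},\bm{X}_{f+c_{\aug}})$ under the $\alpha$-PL and interpolation assumptions, giving $\mathbb{E}\|\nabla\Lc(\bm{W}^t,\bm{X}_{f+c_{\aug}})\|^2\le(1-\alpha\eta/2)^t\,\mathcal{L}(\bm{W}^0,\bm{X}_{f+c_{\aug}})\le\frac{1}{\alpha}(1-\alpha\eta/2)^t\,\|\nabla\Lc(\bm{W}^0,\bm{X}_{f+c_{\aug}})\|^2$. All of the coreset and augmentation error enters only through bounding this \emph{initial} gradient: the triangle inequality splits it into the full-data part ($G_0$) and the augmented-coreset part, which is Taylor-expanded in $\bm{\epsilon}$ and bounded using the coreset property (contributing $G_0+\xi$) and the choice $\epsilon_0\le 1/(\sigma_{\max}\sqrt{\bar L n})$ (contributing the $\mathcal{O}(\sqrt{\bar L}/\sigma_{\max})$ terms). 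That is why the error terms sit inside the decaying prefactor. Your closing worry about showing the SGD noise is conditionally mean-zero is likewise mooted: the paper simply inherits that from the cited theorem. If you want to salvage your perturbed-gradient approach, you would be proving convergence to a neighborhood of the fully augmented dynamics (closer in spirit to the discussion following the theorem), not the displayed inequality.
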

The proof can be found in Appendix {\ref{app:proof-alpha-pl}}.

Theorem \ref{theorem:alpha-pl} shows that training on full data and augmented coresets converges to a close neighborhood of the optimal solution, with the same rate as that of training on the fully augmented %
data. The size of the neighborhood depends on the error of the coreset $\xi$ in Eq. \eqref{eq:coreset_error}, and the error in capturing the alignment of the full augmented data with the residual derived in Lemma \ref{lemma:gradient-coreset-augment}. The first term decrease as the size of the coreset grows, and the second term depends on the network structure. 

We also analyze convergence of training only on the augmented coresets, and  
augmentations modelled as arbitrary
linear transformations using a linear model %
\citep{wu2020generalization} in Appendix {\ref{app:aug-linear-transform}}.

\vspace{-2mm}
\section{Experiments}
\vspace{-1mm}
\label{sec:experiments}

\textbf{Setup and baselines.} We extensively evaluate the performance of our approach in three different settings. 
Firstly, %
we consider training only on coresets and their augmentations. Secondly, we investigate the effect of adding augmented coresets to the full training data. Finally, we consider adding augmented coresets to random subsets.
We compare our coresets with max-loss and random subsets as baselines. 
For all methods, we select a new augmentation subset every $R$ epochs.
We note that the original max-loss method \citep{kuchnik2018efficient} selects points using a fully trained model, hence it can only select one subset throughout training. To maximize fairness, we modify our max-loss baseline to select a new subset at every subset selection step. 
For all experiments, standard weak augmentations (random crop and horizontal flips) are always performed on both the original and strongly augmented data.\looseness=-1

 \vspace{-2mm}
 \subsection{Training on Coresets and their Augmentations} \vspace{-2mm}
First, we evaluate the effectiveness of our approach for training on the coresets and their augmentations. Our main goal here is to 
compare the performance of training on and augmenting coresets vs. random and max-loss subsets.
\tabref{tab:results-train-subset} shows the test accuracy for training ResNet20 and Wide-ResNet on CIFAR10 when we only train on small augmented coresets of size $0.1\%$ to $1\%$ selected at every epoch ($R=1$), and training ResNet18 on Caltech256 using coresets of size $5\%$ to $50\%$ with $R=5$. We see that the augmented coresets outperform augmented random subsets by a large margin, particularly when the size of the subset is small.
On Caltech256/ResNet18, training on and augmenting 10\% coresets %
yields 65.4\% accuracy, improving over random by 5.8\%, and over only weak augmentation by 17.4\%. %
This clearly shows the effectiveness of augmenting the coresets. %
Note that for CIFAR10 experiments, training on the augmented max-loss points did not even converge in absence of full data. 

\textbf{Generalization across augmentation techniques.}  
We note that our coresets are not dependent on the type of data augmentation. To confirm this, we show the superior generalization performance of our method in \tabref{tab:augmentations} for training ResNet18 with $R=5$ on coresets vs. random subsets of Caltech256, augmented
with CutOut \cite{devries2017improved}, 
AugMix \cite{hendrycks2019augmix}, 
and noise perturbations (color jitter, gaussian blur). For example, on $30\%$ subsets, we obtain 28.2\%, 29.3\%, 20.2\% relative improvement over augmenting random subsets when using CutOut, AugMix, and {noise} perturbation augmentations, respectively. %

\begin{table}[t]
    \parbox{.45\linewidth}{
    \caption{Caltech256/ResNet18 with same settings as \tabref{tab:results-train-subset} with default weak augmentations but varying strong augmentations.}
    \label{tab:augmentations}
    \vspace{-2mm}
    \centering
    \footnotesize
    \setlength{\tabcolsep}{2pt}
    \resizebox{0.45\textwidth}{!}
    {%
    \begin{tabular}{ccccccc}
    \toprule
    \multirow{2}{*}{Augmentation} 
    & \multicolumn{3}{c}{Random} & \multicolumn{3}{c}{Ours} \\
        \cmidrule(lr){2-4}
        \cmidrule(lr){5-7}
        & 30\% & 40\% & 50\% & 30\% & 40\% & 50\% \\
    \midrule
    CutOut & 43.32 & 62.84 & 76.21 & \textbf{55.53} & \textbf{66.10} & \textbf{76.91} \\
    AugMix & 40.77 & 61.81 & 72.17 & \textbf{52.72} & \textbf{64.91} & \textbf{73.01} \\
    Perturb & 48.51 & 66.20 & 75.34 & \textbf{58.29} & \textbf{67.47} & \textbf{76.50} \\
    \bottomrule
    \end{tabular}
    }
    }\hfill
    \parbox{.5\linewidth}{
    \centering
    \footnotesize
    \caption{Training on full data and strongly (and weakly) augmenting random subsets, max-loss subsets and coresets on TinyImageNet/ResNet50, $R=15$.}
    \label{tab:tinyimagenet}
    \setlength{\tabcolsep}{2pt}
     \resizebox{0.5\textwidth}{!}{
    \begin{tabular}{ccccccccc}
    \toprule
    \multicolumn{3}{c}{Random} & \multicolumn{3}{c}{Max-loss} & \multicolumn{3}{c}{Ours} \\
            \cmidrule(lr){1-3}
        \cmidrule(lr){4-6}
          \cmidrule(lr){7-9}
        $20\%$ & $30\%$ & $50\%$ & $20\%$ & $30\%$ & $50\%$ & $20\%$ & $30\%$ & $50\%$ \\
    \midrule
    50.97 & 52.00 & 54.92 & 51.30 & 52.34  & 53.37 & \textbf{51.99}   & \textbf{54.30} & \textbf{55.16} \\
    \bottomrule
    \end{tabular}
    }
    }
    \vspace{-4mm}
\end{table}

\vspace{-2mm}
\subsection{Training on Full Data and Augmented Coresets}\label{exp:full} \vspace{-2mm}

\begin{table*}[t]
 \caption{Accuracy improvement by augmenting subsets found by our method vs. max-loss and random, over improvement of full (weak and strong) data augmentation (F.A.) compared to weak augmentation only (W.A.). The table shows the results for training on CIFAR10\! (C10)/ResNet20 \!(R20), SVHN/ResNet32\! (R32), and CIFAR10-Imbalanced\! (C10-IB)/ResNet32, with  %
 $R=20$. %
 } %
    \centering
    \footnotesize
    \resizebox{\textwidth}{!}
    {%
    \begin{tabular}{*{12}c}
  \toprule
  \multicolumn{1}{c}{Dataset}
  & \multicolumn{1}{c}{W.A.}
  & \multicolumn{1}{c}{F.A.}
  & \multicolumn{3}{c}{Random}
  & \multicolumn{3}{c}{Max-loss}
  & \multicolumn{3}{c}{Ours} \\
    \cmidrule(lr){2-2}
    \cmidrule(lr){3-3}
     \cmidrule(lr){4-6}
     \cmidrule(lr){7-9}
     \cmidrule(lr){10-12}
  & \multicolumn{1}{c}{Acc}
  & \multicolumn{1}{c}{Acc}
  & \multicolumn{1}{c}{$5\%$}
  & \multicolumn{1}{c}{$10\%$}
  & \multicolumn{1}{c}{$30\%$}
  & \multicolumn{1}{c}{$5\%$}
  & \multicolumn{1}{c}{$10\%$}
  & \multicolumn{1}{c}{$30\%$}
  & \multicolumn{1}{c}{$5\%$}
  & \multicolumn{1}{c}{$10\%$}
  & \multicolumn{1}{c}{$30\%$} \\
     \midrule
     C10/R20
     & $89.46$
     & $93.50$
     & $21.8\%$
     & $39.9\%$
     & $65.6\%$
     & $32.9\%$
     & $47.8\%$
     & $73.5\%$
     & $\bm{34.9\%}$
     & $\bm{51.5\%}$
     & $\bm{75.0\%}$
     \\   
     C10-IB/R32
     & $87.08$
     & $92.48$
     & $25.9\%$
     & $45.2\%$
     & $74.6\%$
     & $31.3\%$
     & $39.6\%$
     & $74.6\%$
     & $\bm{37.4\%}$
     & $\bm{49.4\%}$
     & $\bm{74.8\%}$
     \\   
     SVHN/R32
     & $95.68$
     & $97.07$
     & $5.8\%$
     & $36.7\%$
     & $64.1\%$
     & $\bm{35.3\%}$
     & $\bm{49.7\%}$
     & $76.4\%$
     & $31.7\%$
     & $48.3\%$
     & $\bm{80.0\%}$ \\   
     \bottomrule
 \end{tabular}
 }
\label{tab:results-main}
\vspace{-5mm}
\end{table*}
Next, we study the effectiveness of our method for training on full data and augmented coresets.
\tabref{tab:results-main}
demonstrates the percentage of
accuracy improvement resulted by augmenting subsets of size 5\%, 10\%, and 30\% selected from our method vs. max-loss and random subsets, over that of full data augmentation. 
We observe that augmenting coresets effectively improves generalization, and outperforms augmenting random and max-loss subsets across different models and datasets. For example, on 30\% subsets, we obtain 13.1\% and 2.3\% relative improvement over random and max-loss on average. %
We also report results on TinyImageNet/ResNet50 ($R\!=\!15$) in \tabref{tab:tinyimagenet}, where we show that augmenting coresets outperforms max-loss and random baselines, e.g. by achieving 3.7\% and 4.4\% relative improvement over 30\% max-loss and random subsets, respectively.

\textbf{Training speedup.} In \figref{fig:speedup1}, we measure the improvement in training time in the case of training on full data and augmenting subsets of various sizes. While our method yields similar or slightly lower speedup to the max-loss and random baselines, our resulting accuracy outperforms the baselines on average. For example, for SVHN/Resnet32 using $30\%$ coresets, we sacrifice $11\%$ of the {relative} %
speedup to obtain an additional $24.8\%$ of the {relative} %
gain in accuracy from full data augmentation, compared to random baseline.
Notably, we get 3.43x speedup for training on full data and augmenting 30\% coresets, while obtaining 75\% of the improvement of full data augmentation.
We provide wall-clock times for finding coresets from Caltech256 and TinyImageNet in Appendix {\ref{app:speed-up-measurements}}.
\begin{figure}[h]
\centering
\vspace{-2mm}
\subfigure[CIFAR10/ResNet20 \label{fig:cifar-speedup}]{
\includegraphics[width=0.2\textwidth,trim=10mm 0 12mm 10mm]{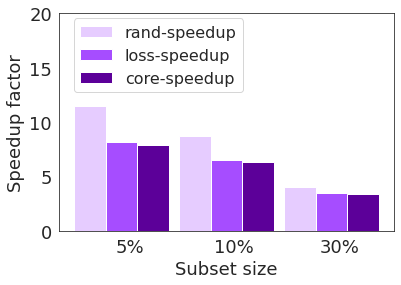}
}\hspace{3mm}
\subfigure[CIFAR10/ResNet20 \label{fig:cifar-acc}]{
\includegraphics[width=0.2\textwidth,trim=10mm 0 12mm 10mm]{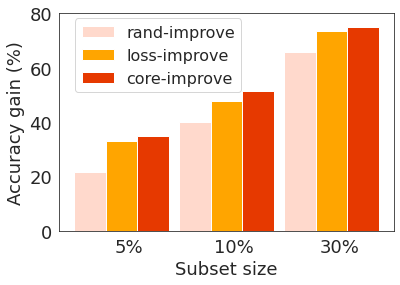}
}\hspace{3mm}
\subfigure[SVHN/ResNet32 \label{fig:svhn-speedup}]{
\includegraphics[width=0.2\textwidth,trim=10mm 0 12mm 10mm]{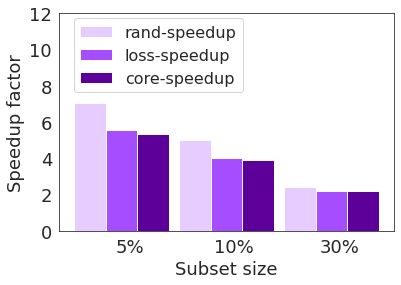}
}\hspace{3mm}
\subfigure[SVHN/ResNet32 \label{fig:svhn-acc}]{
\includegraphics[width=0.2\textwidth,trim=10mm 0 12mm 10mm]{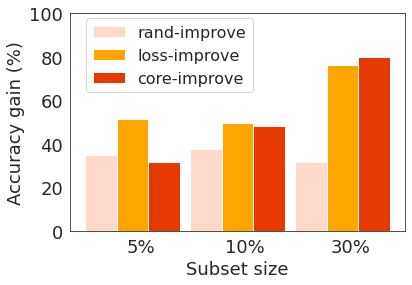}
}
\vspace{-6mm}
\caption{Accuracy improvement and speedups by augmenting subsets found by our method vs. max-loss and random on (a), (b) ResNet20/CIFAR10 and (c), (d) ResNet32/SVHN. %
 }\label{fig:speedup1}\vspace{-5mm}
\end{figure}

\textbf{Augmenting noisy labeled data.}
\begin{table}[t]
 \caption{Training ResNet20 on CIFAR10 with $50\%$ label noise, $R=20$.
 Accuracy without strong augmentation is $70.72 \pm 0.20$ and the accuracy of full (weak and strong) data augmentation is $75.87 \pm 0.77$.
 Note that augmenting $50\%$ subsets outperforms augmenting the full data (marked $\bm{^{**}}$).}
  \vspace{-2mm}
    \centering
    \footnotesize
    \resizebox{0.5\textwidth}{!}
    {
      \begin{tabular}{*{4}c}
  \toprule
  \multicolumn{1}{c}{Subset}
  & \multicolumn{1}{c}{Random}
  & \multicolumn{1}{c}{Max-loss} 
  & \multicolumn{1}{c}{Ours} 
  \\
     \midrule
     10\% 
     & $72.32 \pm 0.14$
     & $71.83 \pm 0.13$
     & \bm{$73.02 \pm 1.06$}
     \\   
     30\%
     & $74.46 \pm 0.27$
     & $72.45 \pm 0.48$
     & \bm{$74.67 \pm 0.15$}
     \\   
     50\%
     & $75.36 \pm 0.05$
     & $73.23 \pm 0.72$
     & \bm{$76.20 \pm 0.75^{**}$} 
     \\
     \bottomrule
 \end{tabular}
 }
 \vspace{-3mm}
\label{tab:results-noisy}
\vspace{-1mm}
\end{table}
Next, we evaluate the robustness of our coresets to label noise.
\tabref{tab:results-noisy} shows the result of augmenting coresets vs. max-loss and random subsets of different sizes selected from CIFAR10 with 50\% label noise on ResNet20. Notably, our method not only outperforms max-loss and random baselines, %
but also achieves superior performance over full data augmentation. 

\vspace{-3mm}
\subsection{Training on Random Data and Augmented Coresets}\vspace{-2mm}
Finally, we evaluate the performance of our method for training on random subsets and augmenting coresets, applicable when data is larger than the training budget.
We report results on TinyImageNet and ImageNet on ResNet50 (90 epochs, $R\!=\!15$).  \tabref{tab:tinyimagenet2} shows the results of training on random subsets, and augmenting random subsets and coresets of the same size. We see that our results hold for large-scale datasets, where we obtain $7.9\%$, $4.9\%$, and $5.3\%$ relative improvement over random baseline with $10\%$, $20\%$, $30\%$ subset sizes respectively on TinyImageNet, and $7.6\%$, $2.3\%$, and $1.3\%$ relative improvement over random baseline with $10\%$, $30\%$, and $50\%$ subset sizes on ImageNet. %
{Notably, compared to AutoAugment, despite using only 30\% subsets, we achieve 71.99\% test accuracy, which is 92.8\% of the original reported accuracy, while boasting 5x speedup in training.}

\begin{table}[H]
    \vspace{-1mm}
    \centering
    \footnotesize
    \caption{Training on random subsets and strongly (and weakly) augmenting random and max loss subsets vs coresets for TinyImageNet (left) and ImageNet (right) with ResNet50.}
    \vspace{-2mm}
    \label{tab:tinyimagenet2}
    \setlength{\tabcolsep}{2pt}
    \resizebox{0.9\textwidth}{!}
    {
    \begin{tabular}{ccccccccc}
    \toprule
        \multicolumn{3}{c}{Random} & \multicolumn{3}{c}{Max-loss} & \multicolumn{3}{c}{Ours} \\
        \cmidrule(lr){1-3} \cmidrule(lr){4-6} \cmidrule(lr){7-9}
        10\% & 20\% & 30\% & 10\% & 20\% & 30\% & 10\% & 20\% & 30\%  \\
    \midrule
   	28.64 & 38.97 & 44.10 & 27.64 & \textbf{41.40} & 45.75 & \textbf{30.90} & 40.88 & \textbf{46.42} \\
    \bottomrule
    \end{tabular} \ 
    \begin{tabular}{ccccccccc}
    \toprule
        \multicolumn{3}{c}{Random} & \multicolumn{3}{c}{Maxloss} &  \multicolumn{3}{c}{Ours} \\
        \cmidrule(lr){1-3} \cmidrule(lr){4-6} \cmidrule(lr){7-9}
        10\% & 30\% & 50\% & 10\% & 30\% & 50\%  & 10\% & 30\% & 50\%  \\
    \midrule
   	63.67 & 70.39 &  72.35 & 65.43 & 71.55 & 72.77 & \textbf{68.53} & \textbf{71.99} & \textbf{73.28} \\
    \bottomrule
    \end{tabular}
    }
    \vspace{-5mm}
\end{table}

\vspace{-2mm}
\section{Conclusion}
\vspace{-3mm}
We showed that data augmentation improves training and generalization by relatively enlarging and perturbing the smaller singular values of the neural network Jacobian while preserving its prominent directions.
Then, we proposed a framework to iteratively extract small coresets of training data that when augmented, closely capture the alignment of the fully augmented Jacobian with the label/residual vector.
We showed the effectiveness of augmenting coresets in providing a superior generalization performance when added to the full data or random subsets, in presence of noisy labels, or as a standalone subset.
Under local smoothness of images, our additive perturbation can be applied to model many bounded transformations such as small rotations, crops, shearing, and pixel-wise transformations like sharpening, blurring, color distortions, structured adversarial perturbation \cite{luo2020data}. However, 
the additive perturbation model is indeed limited when applied to augmentations that cannot be reduced to perturbations, such as horizontal/vertical flips and large translations. 
Further theoretical analysis of complex data augmentations is indeed an interesting direction for future work.

\vspace{-3mm}
\section{Acknowledgements}
\vspace{-3mm}
This research was supported in part by %
the National Science Foundation CAREER Award 2146492, and the UCLA-Amazon Science Hub for Humanity and AI.

\bibliography{arxiv}

\begin{thebibliography}{10}

\bibitem{arora2019fine}
Sanjeev Arora, Simon Du, Wei Hu, Zhiyuan Li, and Ruosong Wang.
\newblock Fine-grained analysis of optimization and generalization for
  overparameterized two-layer neural networks.
\newblock In {\em International Conference on Machine Learning}, pages
  322--332. PMLR, 2019.

\bibitem{baluja2017adversarial}
Shumeet Baluja and Ian Fischer.
\newblock Adversarial transformation networks: Learning to generate adversarial
  examples.
\newblock {\em arXiv preprint arXiv:1703.09387}, 2017.

\bibitem{bassily2018exponential}
Raef Bassily, Mikhail Belkin, and Siyuan Ma.
\newblock On exponential convergence of sgd in non-convex over-parametrized
  learning.
\newblock {\em arXiv preprint arXiv:1811.02564}, 2018.

\bibitem{bishop1995training}
Chris~M Bishop.
\newblock Training with noise is equivalent to tikhonov regularization.
\newblock {\em Neural computation}, 7(1):108--116, 1995.

\bibitem{bowles2018gansfer}
Christopher Bowles, Roger Gunn, Alexander Hammers, and Daniel Rueckert.
\newblock Gansfer learning: Combining labelled and unlabelled data for gan
  based data augmentation.
\newblock {\em arXiv preprint arXiv:1811.10669}, 2018.

\bibitem{chen2019invariance}
Shuxiao Chen, Edgar Dobriban, and Jane~H Lee.
\newblock Invariance reduces variance: Understanding data augmentation in deep
  learning and beyond.
\newblock {\em arXiv preprint arXiv:1907.10905}, 2019.

\bibitem{cubuk2019autoaugment}
Ekin~D Cubuk, Barret Zoph, Dandelion Mane, Vijay Vasudevan, and Quoc~V Le.
\newblock Autoaugment: Learning augmentation strategies from data.
\newblock In {\em Proceedings of the IEEE/CVF Conference on Computer Vision and
  Pattern Recognition}, pages 113--123, 2019.

\bibitem{cubuk2020randaugment}
Ekin~D Cubuk, Barret Zoph, Jonathon Shlens, and Quoc~V Le.
\newblock Randaugment: Practical automated data augmentation with a reduced
  search space.
\newblock In {\em Proceedings of the IEEE/CVF Conference on Computer Vision and
  Pattern Recognition Workshops}, pages 702--703, 2020.

\bibitem{dao2019kernel}
Tri Dao, Albert Gu, Alexander Ratner, Virginia Smith, Chris De~Sa, and
  Christopher R{\'e}.
\newblock A kernel theory of modern data augmentation.
\newblock In {\em International Conference on Machine Learning}, pages
  1528--1537. PMLR, 2019.

\bibitem{devries2017improved}
Terrance DeVries and Graham~W Taylor.
\newblock Improved regularization of convolutional neural networks with cutout.
\newblock {\em arXiv preprint arXiv:1708.04552}, 2017.

\bibitem{fort2020deep}
Stanislav Fort, Gintare~Karolina Dziugaite, Mansheej Paul, Sepideh Kharaghani,
  Daniel~M Roy, and Surya Ganguli.
\newblock Deep learning versus kernel learning: an empirical study of loss
  landscape geometry and the time evolution of the neural tangent kernel.
\newblock {\em Advances in Neural Information Processing Systems}, 33, 2020.

\bibitem{golatkar2019time}
Aditya~Sharad Golatkar, Alessandro Achille, and Stefano Soatto.
\newblock Time matters in regularizing deep networks: Weight decay and data
  augmentation affect early learning dynamics, matter little near convergence.
\newblock {\em Advances in Neural Information Processing Systems},
  32:10678--10688, 2019.

\bibitem{griffin2007caltech}
Gregory Griffin, Alex Holub, and Pietro Perona.
\newblock Caltech-256 object category dataset.
\newblock 2007.

\bibitem{hendrycks2019augmix}
Dan Hendrycks, Norman Mu, Ekin~D Cubuk, Barret Zoph, Justin Gilmer, and Balaji
  Lakshminarayanan.
\newblock Augmix: A simple data processing method to improve robustness and
  uncertainty.
\newblock {\em arXiv preprint arXiv:1912.02781}, 2019.

\bibitem{jackson2019style}
Philip~TG Jackson, Amir~Atapour Abarghouei, Stephen Bonner, Toby~P Breckon, and
  Boguslaw Obara.
\newblock Style augmentation: data augmentation via style randomization.
\newblock In {\em CVPR Workshops}, volume~6, pages 10--11, 2019.

\bibitem{jacot2018neural}
Arthur Jacot, Franck Gabriel, and Cl{\'e}ment Hongler.
\newblock Neural tangent kernel: Convergence and generalization in neural
  networks.
\newblock {\em arXiv preprint arXiv:1806.07572}, 2018.

\bibitem{katharopoulos2018not}
Angelos Katharopoulos and Fran{\c{c}}ois Fleuret.
\newblock Not all samples are created equal: Deep learning with importance
  sampling.
\newblock In {\em International conference on machine learning}, pages
  2525--2534. PMLR, 2018.

\bibitem{kaufman1987clustering}
L~Kaufman, PJ~Rousseeuw, and Y~Dodge.
\newblock Clustering by means of medoids in statistical data analysis based on
  the, 1987.

\bibitem{kim2020adjusting}
Byungju Kim and Junmo Kim.
\newblock Adjusting decision boundary for class imbalanced learning.
\newblock {\em IEEE Access}, 8:81674--81685, 2020.

\bibitem{kuchnik2018efficient}
Michael Kuchnik and Virginia Smith.
\newblock Efficient augmentation via data subsampling.
\newblock In {\em International Conference on Learning Representations}, 2018.

\bibitem{lee2019wide}
Jaehoon Lee, Lechao Xiao, Samuel~S Schoenholz, Yasaman Bahri, Roman Novak,
  Jascha Sohl-Dickstein, and Jeffrey Pennington.
\newblock Wide neural networks of any depth evolve as linear models under
  gradient descent.
\newblock In {\em NeurIPS}, 2019.

\bibitem{lemley2017smart}
Joseph Lemley, Shabab Bazrafkan, and Peter Corcoran.
\newblock Smart augmentation learning an optimal data augmentation strategy.
\newblock {\em Ieee Access}, 5:5858--5869, 2017.

\bibitem{li1993performance}
Fu~Li, Hui Liu, and Richard~J Vaccaro.
\newblock Performance analysis for doa estimation algorithms: unification,
  simplification, and observations.
\newblock {\em IEEE Transactions on Aerospace and Electronic Systems},
  29(4):1170--1184, 1993.

\bibitem{luo2020data}
Calvin Luo, Hossein Mobahi, and Samy Bengio.
\newblock Data augmentation via structured adversarial perturbations.
\newblock {\em arXiv preprint arXiv:2011.03010}, 2020.

\bibitem{minoux1978accelerated}
Michel Minoux.
\newblock Accelerated greedy algorithms for maximizing submodular set
  functions.
\newblock In {\em Optimization techniques}, pages 234--243. Springer, 1978.

\bibitem{mirsky1960symmetric}
Leon Mirsky.
\newblock Symmetric gauge functions and unitarily invariant norms.
\newblock {\em The quarterly journal of mathematics}, 11(1):50--59, 1960.

\bibitem{mirza2014conditional}
Mehdi Mirza and Simon Osindero.
\newblock Conditional generative adversarial nets.
\newblock {\em arXiv preprint arXiv:1411.1784}, 2014.

\bibitem{mirzasoleiman2015lazier}
Baharan Mirzasoleiman, Ashwinkumar Badanidiyuru, Amin Karbasi, Jan Vondr{\'a}k,
  and Andreas Krause.
\newblock Lazier than lazy greedy.
\newblock In {\em Twenty-Ninth AAAI Conference on Artificial Intelligence},
  2015.

\bibitem{mirzasoleiman2020coresets}
Baharan Mirzasoleiman, Jeff Bilmes, and Jure Leskovec.
\newblock Coresets for data-efficient training of machine learning models.
\newblock In {\em International Conference on Machine Learning}, pages
  6950--6960. PMLR, 2020.

\bibitem{mirzasoleiman2013distributed}
Baharan Mirzasoleiman, Amin Karbasi, Rik Sarkar, and Andreas Krause.
\newblock Distributed submodular maximization: Identifying representative
  elements in massive data.
\newblock In {\em Advances in Neural Information Processing Systems}, pages
  2049--2057, 2013.

\bibitem{oymak2019generalization}
Samet Oymak, Zalan Fabian, Mingchen Li, and Mahdi Soltanolkotabi.
\newblock Generalization guarantees for neural networks via harnessing the
  low-rank structure of the jacobian.
\newblock {\em arXiv preprint arXiv:1906.05392}, 2019.

\bibitem{rajput2019does}
Shashank Rajput, Zhili Feng, Zachary Charles, Po-Ling Loh, and Dimitris
  Papailiopoulos.
\newblock Does data augmentation lead to positive margin?
\newblock In {\em International Conference on Machine Learning}, pages
  5321--5330. PMLR, 2019.

\bibitem{ratner2017learning}
Alexander~J Ratner, Henry~R Ehrenberg, Zeshan Hussain, Jared Dunnmon, and
  Christopher R{\'e}.
\newblock Learning to compose domain-specific transformations for data
  augmentation.
\newblock {\em Advances in neural information processing systems}, 30:3239,
  2017.

\bibitem{shen2022data}
Ruoqi Shen, S{\'e}bastien Bubeck, and Suriya Gunasekar.
\newblock Data augmentation as feature manipulation: a story of desert cows and
  grass cows.
\newblock {\em arXiv preprint arXiv:2203.01572}, 2022.

\bibitem{shorten2019survey}
Connor Shorten and Taghi~M Khoshgoftaar.
\newblock A survey on image data augmentation for deep learning.
\newblock {\em Journal of Big Data}, 6(1):1--48, 2019.

\bibitem{stewart1979note}
GW~Stewart.
\newblock A note on the perturbation of singular values.
\newblock {\em Linear Algebra and Its Applications}, 28:213--216, 1979.

\bibitem{wager2013dropout}
Stefan Wager, Sida Wang, and Percy Liang.
\newblock Dropout training as adaptive regularization.
\newblock {\em arXiv preprint arXiv:1307.1493}, 2013.

\bibitem{wedin1972perturbation}
Per-{\AA}ke Wedin.
\newblock Perturbation bounds in connection with singular value decomposition.
\newblock {\em BIT Numerical Mathematics}, 12(1):99--111, 1972.

\bibitem{weyl1912asymptotic}
Hermann Weyl.
\newblock The asymptotic distribution law of the eigenvalues of linear partial
  differential equations (with an application to the theory of cavity
  radiation).
\newblock {\em mathematical annals}, 71(4):441--479, 1912.

\bibitem{wu2020generalization}
Sen Wu, Hongyang Zhang, Gregory Valiant, and Christopher R{\'e}.
\newblock On the generalization effects of linear transformations in data
  augmentation.
\newblock In {\em International Conference on Machine Learning}, pages
  10410--10420. PMLR, 2020.

\bibitem{zagoruyko2016wide}
Sergey Zagoruyko and Nikos Komodakis.
\newblock Wide residual networks.
\newblock In {\em British Machine Vision Conference 2016}. British Machine
  Vision Association, 2016.

\end{thebibliography}
\bibliographystyle{plain}

\newpage
\appendix

\section*{\begin{center}\Large{Supplementary Material:\\ Data-Efficient Augmentation for Training Neural Networks}\end{center}}

\section{Proof of Main Results}

\subsection{Proof for Lemma \ref{lemma:singular-value-perturbation-bounds}}
\label{appendix:proof-singular-value-perturbation-bounds}
\begin{proof}
Let $\delta_i := \tilde{\sigma}_i - \sigma_i$, where $\mathbb{P}(\delta_i < 0) = p_i$.
Assuming uniform probability between $- \| \bm{E} \|$ to $0$, and between $0$ to $\|\bm{E}\|$, we have pdf $\rho_i(x)$ for $\delta_i$:
\begin{align}
\rho_i(x) = \begin{cases}
\frac{p_i}{\|\bm{E}\|} ,& \text{if } -\|\bm{E}\| \leq x < 0\\
\frac{1-p_i}{\| \bm{E} \|},              & 0 \leq x \leq \|\bm{E}\| \\
    0, & \text{otherwise}
\end{cases}
\end{align}
Taking expectation,
\begin{align}
\mathbb{E}(\tilde{\sigma}_i - \sigma_i) = \mathbb{E} (\delta_i) &= \int\limits_{-\infty}^{\infty}  x \rho_i(x) dx \\
&= \int\limits_{-\|\bm{E}\|}^{0} x \frac{p_i}{\| \bm{E} \|} dx + 
\int_{0}^{\|\bm{E}\|} x \frac{1-p_i}{\| \bm{E} \|} dx \\
&= -\frac{\|\bm{E}\| p_i}{2} + \frac{(1-p_i)\|\bm{E}\|}{2} \\
&= \frac{(1-2p_i)\|E\|}{2}
\end{align}
We also have
\begin{align}
     \mathbb{E} (\delta_i^2) &= \int\limits_{-\infty}^{\infty}  x^2 \rho_i(x) dx \\
&= \int\limits_{-\|\bm{E}\|}^{0} x^2 \frac{p_i}{\| \bm{E} \|} dx + 
\int_{0}^{\|\bm{E}\|} x^2 \frac{1-p_i}{\| \bm{E} \|} dx \\
&= \frac{\|\bm{E}\|^2 p_i}{3} + \frac{(1-p_i)\|\bm{E}\|^2}{3} \\
&= \frac{\|\bm{E}\|^2}{3}
\end{align}
Thus, we have
\begin{align}
\mathbb{E}(\tilde{\lambda}_i) &= \mathbb{E} ( \tilde{\sigma}_i^2 ) \\
&= \mathbb{E} ( (\sigma_i + \delta_i)^2 ) \\
&= \mathbb{E} ( \sigma_i^2 + 2 \sigma_i \delta_i + \delta_i^2) \\
&= \sigma_i^2 + 2 \sigma_i\mathbb{E}[\delta_i] + \mathbb{E}{[\delta_i^2]} \\ 
&= \sigma_i^2 + 2\sigma_i\frac{(1-2p_i)\|\bm{E}\|}{2} +  \frac{\|\bm{E}\|^2}{3}\\
&= \sigma_i^2 + \sigma_i(1-2p_i)\|\bm{E}\| +  \frac{\|\bm{E}\|^2}{3}
\end{align}

\end{proof}

\subsection{Proof of Corollary \ref{cor:generalization}}
\label{app:proof-generalization}
Under the assumptions of Theorem 5.1 of \cite{arora2019fine}, i.e.  where the minimum eigenvalue of the NTK is $\lambda_{\min}(\Jc\Jc^T) \geq \lambda_0$ for a constant $\lambda_0>0$, 
and training data $\bm{X}$ of size $n$ sampled i.i.d. from distribution ${D}$ and 1-Lipschitz loss $\mathcal{L}$, we have that with probability $\delta/3$, training the over-parameterized neural network with gradient descent for $t \geq \Omega\left(\frac{1}{n\lambda_0}\log\frac{n}{\delta}\right)$ iterations %
results in the following population loss $\mathcal{L}_D$ (generalization error) %
\begin{align}
    \mathcal{L}_D(\bm{W}^t, \bm{X}) \leq \sqrt{
    \frac{2\bm{y}^T(\mathcal{J}\mathcal{J}^T)^{-1}\bm{y}}{n}} + \mathcal{O}\left(\frac{\log{\frac{n}{\lambda_0\delta}}}{n}\right),
\end{align}
with high probability of at least $1-\delta$ over random initialization and training samples.

Hence, using  $\lambda_{\min}, \sigma_{\min}$ to denote minimum eigen and singular value respectively of the NTK corresponding to full data, we get
\begin{align}
    \mathcal{L}_{D_{train}}(\bm{W}^t, \bm{X}_{train})
    &\leq
    \sqrt{
    \frac{2 \frac{1}{\lambda_{\min}} \|y\|^2}{n}} + \mathcal{O}\left(\log{\frac{1}{\delta}}\right) \\
    &\leq \sqrt{
    \frac{2}{\sigma_{\min}^2}} + \mathcal{O}\left(\log{\frac{1}{\delta}}\right).
\end{align}

For augmented dataset $\bm{X}_{aug}$, we have $\tilde{\sigma_i} \leq \sigma_i + \sqrt{n}L\epsilon_0$, hence the improvement in the generalization error is at most
\begin{align}
    \mathcal{L}_{D_{aug}}(\bm{W}^t, \bm{X}_{\aug}) &\leq \sqrt{
    \frac{2}{(\sigma_{\min} + \sqrt{n}L\epsilon_0)^2}} + \mathcal{O}\left(\log{\frac{1}{\delta}}\right).
\end{align}
Combining these two results, we obtain Corollary \ref{cor:generalization}.

\subsection{Proof of Lemma \ref{lemma:gradient-coreset-augment}}

\begin{proof}
\begin{align}
\| \Jc^T(\W^t,\X_{aug})\bm{r} - &\text{diag}(\bm{\rho}^t)\Jc^t(\W^t,\X_{S^{aug}})\bm{r}_{S}\| \\
 &= \| (\Jc^T(\W^t,\X) + \bm{E}) \bm{r} - (\text{diag}(\bm{\rho}^t)\Jc^t(\W^t,\X_{S}) + \bm{E}_{S})\bm{r}_{S}\| \\
 &\leq \| (\Jc^T(\W^t,\X)\bm{r} - \text{diag}(\bm{\rho}^t)\Jc^t(\W^t,\X_{S})\bm{r}_{S}) + \bm{E} \bm{r} - \bm{E}_{S}\bm{r}_{S} \| \\
 &\leq \| (\Jc^T(\W^t,\X)\bm{r} - \text{diag}(\bm{\rho}^t)\Jc^t(\W^t,\X_{S})\bm{r}_{S}) \| + \| \bm{E} \bm{r} \| + \| \bm{E}_{S}\bm{r}_{S} \|
 \end{align}
 Applying definition of coresets, we obtain
 \begin{align}
     \| (\Jc^T(\W^t,\X)\bm{r} - &\text{diag}(\bm{\rho}^t)\Jc^t(\W^t,\X_{S})\bm{r}_{S}) \| + \| \bm{E} \bm{r} \| + \| \bm{E}_{S}\bm{r}_{S} \| \\
     &\leq \xi + \| \bm{E} \bm{r} \| + \| \bm{E}_{S}\bm{r}_{S} \| \\
     &\leq \xi + 2 n^{\frac{3}{2}}L\epsilon_0 \\
     &\leq \xi + 2 \sqrt{L}
 \end{align}
 
\end{proof}

\subsection{Proof of Theorem \ref{theorem:alpha-pl}}
\label{app:proof-alpha-pl}
\begin{proof}
In this proof, as shorthand notation, we use$\bm{X}_f$ and $\bm{X}_{train}$ interchangeably. We further use $\bm{X}_c$ to represent the coreset selected from the full data, and $\bm{X}_{c_{aug}}$ to represent the augmented coreset.

By Theorem 1 of \cite{bassily2018exponential}, under the $\alpha$-PL assumption for $\mathcal{L}$ and interpolation assumption (i.e. for every sequence $\bm{W}^1, \bm{W}^2, \ldots$ such that $\lim_{t\rightarrow\infty} \mathcal{L}(\bm{W}^t, \bm{X}) = 0$, we have that the loss for each data point $\lim_{t\rightarrow\infty} \mathcal{L}(\bm{W}^t, \bm{x}_i) = 0$), the convergence of SGD with constant step size %
is given by
\begin{align}
    \mathbb{E} [\| \nabla \Lc(\bm{W}^t,  \bm{X}_{f + c_{\aug}}) \|^2] &\leq \left( 1 - \frac{\alpha \eta}{2} \right)^{t} \mathcal{L}(\bm{W}^0, \bm{X}_{f+c_{\aug}}) \\
    &\leq \frac{1}{\alpha} \left( 1 - \frac{\alpha \eta}{2} \right)^{t} \| \nabla \Lc(\bm{W}^0,  \bm{X}_{f + c_{\aug}}) \|^2
\end{align}
Using Jensen's inequality, we have
\begin{align}
    &\mathbb{E} [\| \nabla \Lc(\bm{W}^0,  \bm{X}_{f + c_{\aug}}) \|] \\
    &\leq \sqrt{\mathbb{E} [\| \nabla \Lc(\bm{W}^t,  \bm{X}_{f + c_{\aug}}) \|^2] } \\
    &\leq \frac{1}{\sqrt{\alpha}} \left( 1 - \frac{\alpha \eta}{2} \right)^{\frac{t}{2}} \| \nabla \Lc(\bm{W}^0,  \bm{X}_{f + c_{\aug}}) \| \\
    &\leq  \frac{1}{\sqrt{\alpha}} \left( 1 - \frac{\alpha \eta}{2} \right)^{\frac{t}{2}} \left( \| \nabla \Lc(\bm{W}^0,  \bm{X}_{f}) \|  + \| \nabla \Lc(\bm{W}^0,  \bm{X}_{c_{\aug}}) \| \right ) \\
    &\leq \frac{1}{\sqrt{\alpha}} \left( 1 - \frac{\alpha \eta}{2} \right)^{\frac{t}{2}} \left( G_0 + \| (\Jc(\bm{W}^0, \bm{X}_c) + \bm{E}) (\bm{y} - f(\bm{W}^0, \bm{X}_c + \bm{\epsilon}) ) \| \right ) \\
    &\leq \frac{1}{\sqrt{\alpha}} \left( 1 - \frac{\alpha \eta}{2} \right)^{\frac{t}{2}} \\
    &\quad\left( G_0 + \| (\Jc(\bm{W}^0, \bm{X}_c) + \bm{E})^T (\bm{y} - f(\bm{W}^0, \bm{X}_c) - \nabla_x f(\bm{W}^0, \bm{X}_c)^T\bm{\epsilon} - \mathcal{O}(\bm{\epsilon}^T\bm{\epsilon}) \| \right ) \\
    & = \frac{1}{\sqrt{\alpha}} \left( 1 - \frac{\alpha \eta}{2} \right)^{\frac{t}{2}} ( G_0 + \| \nabla L(\bm{W}^0, \bm{X}_c) - (\Jc(\bm{W}^0, \bm{X}_c)^T(\nabla_x f(\bm{W}^0, \bm{X}_c)^T\bm{\epsilon} + \mathcal{O}(\bm{\epsilon}^T\bm{\epsilon})) + \\ 
    &\quad\quad \bm{E} (\bm{y}  - f(\bm{W}^0, \bm{X}_c +  \bm{\epsilon})) \| )\\
    &\leq \frac{1}{\sqrt{\alpha}} \left( 1 - \frac{\alpha \eta}{2} \right)^{\frac{t}{2}} (G_0 + \| \nabla L(\bm{W}^0, \bm{X}_c) - (\Jc(\bm{W}^0, \bm{X}_c)^T(\nabla_x f(\bm{W}^0, \bm{X}_c)^T\bm{\epsilon} + \mathcal{O}(\bm{\epsilon}^T\bm{\epsilon})) \| + \\ 
    &\quad\quad \sqrt{2}\|\bm{E}\| ) \\
    & \leq  \frac{1}{\sqrt{\alpha}} \left( 1 - \frac{\alpha \eta}{2} \right)^{\frac{t}{2}} (G_0 + \| \nabla L(\bm{W}^0, \bm{X}_c) \| + \sigma_{\max} \bar{L} \sqrt{n} \epsilon_0 + \sigma_{\max} \mathcal{O}(n\epsilon_0^2)) + \sqrt{2n}\bar{L}\epsilon_0) \\
    &= \frac{1}{\sqrt{\alpha}} \left( 1 - \frac{\alpha \eta}{2} \right)^{\frac{t}{2}} (G_0 + \| \nabla L(\bm{W}^0, \bm{X}_f) \| + \xi + \sigma_{\max} \bar{L} \sqrt{n} \epsilon_0 + \sigma_{\max} \mathcal{O}(n\epsilon_0^2)) + \sqrt{2n}\bar{L}\epsilon_0) \\
    &\leq \frac{1}{\sqrt{\alpha}} \left( 1 - \frac{\alpha \eta}{2} \right)^{\frac{t}{2}} (2G_0 + \xi + \sigma_{\max} \bar{L} \sqrt{n} \epsilon_0 + \sigma_{\max} \mathcal{O}(n\epsilon_0^2)) + \sqrt{2n}\bar{L}\epsilon_0) 
\end{align}
\end{proof}

\subsection{Finding Subsets}\label{app:sub}
Let $S$ be a subset of training data points. Furthermore, assume that there is a mapping $\pi_{w,S}: V \rightarrow S$ that for every $\W$ assigns every data point $i\in V$ to its closest element $j\in S$, i.e. $j=\pi_{w,S}(i)=\argmax_{j'\in S}s_{ij'}(\W)$, where $s_{ij}(\W)=C-\|\Jc^T(\W^t,\bm{x}_{i})r_{i}-\Jc^T(\W^t,\bm{x}_{j})r_{j}\|$ is the similarity between gradients of $i$ and $j$, and $C\geq \max_{ij} s_{ij}(\W)$ is a constant. 
Consider a matrix $\bm{G}_{\pi_{w,S}}\in\mathbb{R}^{n\times m}$, in which every row $i$ contains gradient of $\pi_w(i)$, i.e., $[\bm{G}_{\pi_{w,S}}]_{i.}=\Jc^T(\W^t,\bm{x}_{\pi_{w,S}(i)})r_{\pi_{w,S}(i)}$. %
The Frobenius norm of the matrix $\bm{G}_{\pi_w}$ provides an upper-bound on the error of the weighted subset $S$ in capturing the
alignment of the residuals of the full training data with the Jacobian matrix. Formally,
\begin{equation}\label{eq:grad}
\|\Jc^T(\W^t,\X_{train})\bm{r}_{train}^t-\bm{\gamma}_{S^t}\Jc^T({\W}^t,[\X_{{train}}]_{.{S^t}})\bm{r}_{S^t}\|
\leq%
\|\bm{G}_{\pi_{w,S}}\|_F,
\end{equation}
where the weight vector $\bm{\gamma}_{S^t}\in\mathbb{R}^{|S|}$ %
contains the number of elements that are mapped to every element $j\in S$ by mapping $\pi_{w,S}$, i.e. $\gamma_j=\sum_{i\in V}\mathbb{1}[\pi_{w,S}(i)=j]$.
Hence, the set of training points that closely estimate the projection of the residuals of the full training data on the Jacobian spectrum can be obtained %
by finding a subset $S$ that minimizes the Frobenius norm of matrix $\bm{G}_{\pi_{w,S}}$.

\section{Additional Theoretical Results}
\subsection{Convergence analysis for training on augmented full data}
\label{app:full-data-augment-dynamics}
\begin{theorem}
\label{theorem:augmentation2}
Gradient descent with learning rate $\eta$ applied to a neural network with constant NTK and Lipschitz constant $L$, %
and data points $\mathcal{D}_{aug}$ augmented with $r$ additive perturbations bounded by $\epsilon_0$ 
results in the following training dynamics:
\begin{align}
\begin{split}
\vspace{-3mm}
     &\E[\| \bm{y}-f(\X_{aug},\W^t) \|_2] \leq \\ 
     & \sqrt{ \sum_{i=1}^n  \left(1-\eta \left( \sigma_i^2 + \sigma_i(1-2p_i)\|E\| +  \frac{\|E\|^2}{3} \right) \right)^{2t} ((\bm{u}_i \bm{y})^2 + 2n\sqrt{2} \|E\| / \gamma_0) }
\end{split}
\end{align} %
where $\bm{E}$ with $\|\bm{E}\|\leq\sqrt{n}L\epsilon_0$ is the perturbation to the Jacobian, and $p_i := \mathbb{P}(\tilde{\sigma_i} - \sigma_i < 0)$ is the probability that $\sigma_i$ decreases as a result of data augmentation. 
\end{theorem}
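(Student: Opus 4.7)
\textbf{Proof proposal for Theorem \ref{theorem:augmentation2}.} My plan is to start from the constant-NTK training-dynamics identity in Eq.~\eqref{eq:shrinkage}, but applied to the augmented Jacobian $\tilde{\mathcal{J}} = \mathcal{J}+\bm{E}$. Writing the eigendecomposition of the perturbed NTK as $\tilde{\bm\Theta} = \tilde{\mathcal{J}}\tilde{\mathcal{J}}^T = \sum_i \tilde\lambda_i \tilde{\bm u}_i \tilde{\bm u}_i^T$, the same recursion gives $\bm{r}^t = \sum_{i=1}^n (1-\eta\tilde\lambda_i)^t (\tilde{\bm u}_i\tilde{\bm u}_i^T)\bm{r}^0$. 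Under the standard NTK initialization (so that $f(\W^0,\X_{aug})\approx 0$ and hence $\bm{r}^0\approx -\bm{y}$) the orthogonality of the $\tilde{\bm u}_i$ yields
\begin{equation}
\|\bm{y}-f(\X_{aug},\W^t)\|_2^2 \;=\; \sum_{i=1}^n (1-\eta\tilde\lambda_i)^{2t} (\tilde{\bm u}_i^T\bm{y})^2 .
\end{equation}
Taking expectations and applying Jensen's inequality via the concave square root at the end will produce the outer $\sqrt{\cdot}$ in the claim.

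Next I would substitute the two perturbation bounds already proved in the paper. For the eigenvalues, Lemma~\ref{lemma:singular-value-perturbation-bounds} gives $\E[\tilde\lambda_i] = \sigma_i^2 + \sigma_i(1-2p_i)\|\bm{E}\| + \|\bm{E}\|^2/3$; I would pull the expectation inside the power $(1-\eta\tilde\lambda_i)^{2t}$, justifying this by linearity of expectation over the eigenvalue factor (or, more carefully, by a first-order Taylor expansion in the small-perturbation regime $\epsilon_0$ small, which is the operating assumption of the paper). For the eigenvectors, I would write $\tilde{\bm u}_i = \bm{u}_i + \Delta\bm{u}_i$ with $\|\Delta\bm{u}_i\| \le 2\sqrt{2}\|\bm{E}\|/\gamma_0$ from Eq.~\eqref{eq:eigenvector}, and expand
\begin{equation}
(\tilde{\bm u}_i^T\bm{y})^2 \;\le\; (\bm u_i^T\bm y)^2 + 2|\bm u_i^T\bm y|\,\|\Delta\bm u_i\|\|\bm y\| + \|\Delta\bm u_i\|^2\|\bm y\|^2 .
\end{equation}
Since $y_i\in[0,1]$ gives $\|\bm y\|^2\le n$ and $|\bm u_i^T\bm y|\le \|\bm y\|\le\sqrt{n}$, the last two terms are absorbed into an additive $2n\sqrt{2}\|\bm E\|/\gamma_0$ (up to dropping the second-order $\|\bm E\|^2$ term, which is of smaller order than $\|\bm E\|$ under $\epsilon_0$ small), matching the stated bound.

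Combining these two substitutions and carrying the expectation through the sum gives
\begin{equation}
\E\bigl[\|\bm r^t\|_2^2\bigr]\;\le\;\sum_{i=1}^n\!\Bigl(1-\eta\bigl(\sigma_i^2+\sigma_i(1-2p_i)\|\bm E\|+\tfrac{\|\bm E\|^2}{3}\bigr)\Bigr)^{\!2t}\!\Bigl((\bm u_i^T\bm y)^2+\tfrac{2n\sqrt{2}\|\bm E\|}{\gamma_0}\Bigr),
\end{equation}
and one final application of Jensen ($\E\|\cdot\|\le\sqrt{\E\|\cdot\|^2}$) delivers the theorem.

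\textbf{Main obstacle.} The delicate step is commuting the expectation with the nonlinear $(1-\eta\tilde\lambda_i)^{2t}$ factor: $(1-x)^{2t}$ is convex, so Jensen alone would give the wrong inequality direction. I expect the clean substitution used above to be justified only in a small-$\epsilon_0$ regime, either by linearizing in the perturbation size (so the $2t$-th power of the linearized mean equals the mean of the $2t$-th power up to higher-order terms in $\|\bm E\|$) or by invoking the eigengap assumption $\gamma_0 \ge 2\|\bm E\|$ already required for Eq.~\eqref{eq:eigenvector}. A secondary bookkeeping issue is tracking the constants in the eigenvector cross term to reproduce exactly the $2n\sqrt{2}\|\bm E\|/\gamma_0$ factor rather than a looser $\mathcal{O}(n\|\bm E\|/\gamma_0)$ bound, which I would resolve by absorbing the $\|\Delta\bm u_i\|^2$ term into lower-order corrections under the same small-$\epsilon_0$ assumption.
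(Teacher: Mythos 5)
Your proposal follows essentially the same route as the paper's proof: expand the residual in the perturbed NTK eigenbasis, pull the expectation inside the square root by Jensen, bound $(\tilde{\bm u}_i^T\bm y)^2$ via the eigenvector perturbation bound of Eq.~\eqref{eq:eigenvector}, and substitute $\E[\tilde\lambda_i]$ from Lemma~\ref{lemma:singular-value-perturbation-bounds}. The step you flag as delicate --- replacing $\E[(1-\eta\tilde\lambda_i)^{2t}]$ by $(1-\eta\E[\tilde\lambda_i])^{2t}$, which Jensen gives in the wrong direction for a convex power --- is performed in the paper's proof without justification, so your small-$\epsilon_0$ linearization caveat is an honest acknowledgment of a gap that the paper itself leaves open rather than a defect of your argument.
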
%

\subsection{Proof of Theorem \ref{theorem:augmentation2}}
Using Jensen's inequality, we have 

\begin{align}
    \mathbb{E} & \left[ \| \bm{y}-f(\bm{X}_{aug},\bm{W}^t) \|_2 \right] \\
    &= \mathbb{E}\left[ {\sqrt{\sum_{i=1}^n (1-\eta\tilde{\lambda}_i)^{2t}( \bm{\tilde{u}}_i^T \bm{y} )^2} \pm\epsilon} \right] \\ 
    &\leq \sqrt{ \mathbb{E}\left[ \sum_{i=1}^n (1-\eta\tilde{\lambda}_i)^{2t}( \bm{\tilde{u}}_i^T \bm{y} )^2 \right]}  \\ 
    &\leq \sqrt{  \sum_{i=1}^n \mathbb{E}\left[ (1-\eta\tilde{\lambda}_i)^{2t} ((\bm{u}_i \bm{y})^2 + 2n\sqrt{2} \|E\| / \gamma_0) \right]}  \\
    &\leq \sqrt{ \sum_{i=1}^n  (1-\eta \mathbb{E}\left[ \tilde{\lambda}_i\right])^{2t} ((\bm{u}_i \bm{y})^2 + 2n\sqrt{2} \|E\| / \gamma_0)} \\
    &= \sqrt{ \sum_{i=1}^n  \left(1-\eta \left( \sigma_i^2 + \sigma_i(1-2p_i)\|E\| +  \frac{\|E\|^2}{3} \right) \right)^{2t} ((\bm{u}_i \bm{y})^2 + 2n\sqrt{2} \|E\| / \gamma_0) } 
\end{align}

\subsection{Convergence analysis for training on the coreset and its augmentation}
\label{subsection:convergence-coreset-augment-only}
\begin{theorem}
Let $\mathcal{L}_{i}$ be $\beta$-smooth, $\mathcal{L}$ be $\lambda$-smooth and satisfy the $\alpha$-PL condition, that is for $\alpha > 0$, ${\| \nabla \mathcal{L}(\bm{W}, \bm{X}) \|}^2 \geq \alpha \mathcal{L}(\bm{W}, \bm{X})$ for all weights $\bm{W}$. Let $\xi$ upper-bound the normed difference in gradients between the weighted coreset and full dataset. Assume that the network $f(\bm{W}, \bm{X})$ is Lipschitz in $\bm{W}$, $\bm{X}$ with Lipschitz constant L and L' respectively, and $\bar{L} = \max\{L,L'\}$. Let $G_0$ the gradient over the full dataset at initialization, $\sigma_{\max}$ the maximum Jacobian singular value at initialization. Choosing perturbation bound $\epsilon_0 \leq \frac{1}{\sigma_{\max}\sqrt{\bar{L}n}}$ where $\sigma_{\max}$ is the maximum singular value of the coreset Jacobian and $n$ is the size of the original dataset, running SGD on the coreset and its augmentation using constant step size $\eta = \frac{\alpha}{\lambda\beta}$, we get the following convergence bound: 
\begin{align}
    \mathbb{E} [\| \nabla \Lc(\bm{W}^t,  \bm{X}_{c + c_{\aug}}) \|] \leq \frac{1}{\sqrt{\alpha}} \left( 1 - \frac{\alpha \eta}{2} \right)^{\frac{t}{2}} \left(2G_0 + 2\xi + \mathcal{O}\left(\frac{\bar{L}}{\sigma_{\max}}\right) \right),
\end{align}
where $\bm{X}_{c+c_{aug}}$ represents the dataset containing the (weighted) coreset and its augmentation.
\end{theorem}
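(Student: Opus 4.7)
The plan is to mirror the argument already used for Theorem \ref{theorem:alpha-pl}, but now the training trajectory sees only the coreset and its augmentation, so we lose the direct anchor to the full data gradient and must pay the coreset approximation error an extra time. The skeleton has three stages: apply the PL--SGD machinery to get exponential contraction of the gradient at the current iterate to the gradient at initialization on $\bm{X}_{c+c_{\aug}}$; split that initial gradient into its coreset and augmented--coreset pieces; and bound each piece against the full--data gradient $G_0$ using, respectively, the coreset definition and a Taylor expansion controlled by the additive perturbation.

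First I would invoke Theorem 1 of \cite{bassily2018exponential}: under $\alpha$-PL for $\mathcal{L}$, $\beta$-smoothness of each $\mathcal{L}_i$, and the interpolation condition implied by overparameterization, SGD with $\eta = \alpha/(\lambda\beta)$ on a dataset $\bm{X}_{c+c_{\aug}}$ satisfies
\begin{equation}
\mathbb{E}[\|\nabla \mathcal{L}(\bm{W}^t,\bm{X}_{c+c_{\aug}})\|^2] \leq \Bigl(1-\tfrac{\alpha\eta}{2}\Bigr)^t \mathcal{L}(\bm{W}^0,\bm{X}_{c+c_{\aug}}) \leq \tfrac{1}{\alpha}\Bigl(1-\tfrac{\alpha\eta}{2}\Bigr)^t \|\nabla \mathcal{L}(\bm{W}^0,\bm{X}_{c+c_{\aug}})\|^2.
\end{equation}
Jensen's inequality then passes from the second moment to the first, yielding a factor $1/\sqrt{\alpha}\,(1-\alpha\eta/2)^{t/2}$ multiplying $\|\nabla\mathcal{L}(\bm{W}^0,\bm{X}_{c+c_{\aug}})\|$, which is the only non-routine quantity left to control.

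Next I would bound the initial gradient by triangle inequality as $\|\nabla\mathcal{L}(\bm{W}^0,\bm{X}_c)\| + \|\nabla\mathcal{L}(\bm{W}^0,\bm{X}_{c_{\aug}})\|$. The first term is $\leq G_0 + \xi$ by the coreset property (this is why a second copy of $\xi$ appears that was absent in Theorem \ref{theorem:alpha-pl}). For the second term I would write the Jacobian at the augmented inputs as $\Jc(\bm{W}^0,\bm{X}_c)+\bm{E}$ with $\|\bm{E}\|\leq \sqrt{n}\bar{L}\epsilon_0$, and Taylor--expand $f(\bm{W}^0,\bm{X}_c+\bm{\epsilon}) = f(\bm{W}^0,\bm{X}_c) + \nabla_x f(\bm{W}^0,\bm{X}_c)^T\bm{\epsilon} + \mathcal{O}(\|\bm{\epsilon}\|^2)$. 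Distributing, the leading piece $\Jc(\bm{W}^0,\bm{X}_c)^T(\bm{y}-f(\bm{W}^0,\bm{X}_c))$ is again the coreset gradient, bounded by $G_0+\xi$; the remaining cross terms involve $\bm{E}$, $\sigma_{\max}$, $\bar{L}\epsilon_0$, and the residual, and collapse to $\mathcal{O}(\sigma_{\max}\bar{L}\sqrt{n}\epsilon_0)+\mathcal{O}(\sqrt{n}\bar{L}\epsilon_0)$ after bounding residuals by a constant and $\|\bm{\epsilon}\|\leq \epsilon_0$. The choice $\epsilon_0 \leq 1/(\sigma_{\max}\sqrt{\bar{L}n})$ collapses all those error pieces to $\mathcal{O}(\bar{L}/\sigma_{\max})$, yielding the claimed $2G_0+2\xi+\mathcal{O}(\bar{L}/\sigma_{\max})$ prefactor.

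The main obstacle I expect is the cross--term bookkeeping in step three: since the coreset is weighted and the augmentation acts on each coreset example independently, one has to verify that the first--order term $\Jc(\bm{W}^0,\bm{X}_c)^T \nabla_x f(\bm{W}^0,\bm{X}_c)^T\bm{\epsilon}$ and the residual--coupled term $\bm{E}^T(\bm{y}-f(\bm{W}^0,\bm{X}_c)-\nabla_x f^T\bm{\epsilon})$ are each bounded by $\sigma_{\max}\bar{L}\sqrt{n}\epsilon_0$ and $\sqrt{n}\bar{L}\epsilon_0$ respectively (up to the quadratic remainder $\mathcal{O}(n\epsilon_0^2)$). Once the Lipschitz-in-$\bm{X}$ assumption is used to control $\nabla_x f$ by $\bar{L}$ uniformly and the interpolation hypothesis keeps $\|\bm{y}-f(\bm{W}^0,\bm{X}_c)\|$ bounded at initialization by $\mathcal{O}(1)$, everything telescopes into the stated bound. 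The rest is identical in form to Appendix \ref{app:proof-alpha-pl} and reduces to algebraic simplification under the prescribed step size and perturbation radius.
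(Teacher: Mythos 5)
Your proposal follows essentially the same route as the paper's own proof: the PL--SGD contraction from Theorem 1 of \cite{bassily2018exponential}, Jensen's inequality, the triangle-inequality split of the initial gradient into coreset and augmented-coreset pieces, the coreset property giving $G_0+\xi$ for each piece (hence the doubled $\xi$), and the Taylor expansion with the perturbation bound $\|\bm{E}\|\leq\sqrt{n}\bar{L}\epsilon_0$ absorbing the remaining error under the prescribed $\epsilon_0$. The paper's proof is exactly this argument, stated tersely by deferring the final bookkeeping to the proof of Theorem \ref{theorem:alpha-pl}, so no substantive difference exists.
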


\begin{proof}
As in the proof for Theorem \ref{theorem:alpha-pl}, we begin with the following inequality
\begin{align}
    \mathbb{E} [\| \nabla \Lc(\bm{W}^t,  \bm{X}_{c + c_{\aug}}) \|^2] &\leq \left( 1 - \frac{\alpha \eta}{2} \right)^{t} \mathcal{L}(\bm{W}^0, \bm{X}_{c+c_{\aug}}) \\
    &\leq \frac{1}{\alpha} \left( 1 - \frac{\alpha \eta}{2} \right)^{t} \| \nabla \Lc(\bm{W}^0,  \bm{X}_{c + c_{\aug}}) \|^2 
\end{align}

Thus, we can write
\begin{align}
    &\mathbb{E} [\| \nabla \Lc(\bm{W}^0,  \bm{X}_{c + c_{\aug}}) \|] \\
    &\leq \sqrt{\mathbb{E} [\| \nabla \Lc(\bm{W}^t,  \bm{X}_{c + c_{\aug}}) \|^2] } \\
    &\leq \frac{1}{\sqrt{\alpha}} \left( 1 - \frac{\alpha \eta}{2} \right)^{\frac{t}{2}} \| \nabla \Lc(\bm{W}^0,  \bm{X}_{c + c_{\aug}}) \| \\
    &\leq  \frac{1}{\sqrt{\alpha}} \left( 1 - \frac{\alpha \eta}{2} \right)^{\frac{t}{2}} \left( \| \nabla \Lc(\bm{W}^0,  \bm{X}_{c}) \|  + \| \nabla \Lc(\bm{W}^0,  \bm{X}_{c_{\aug}}) \| \right ) \\
    &\leq \frac{1}{\sqrt{\alpha}} \left( 1 - \frac{\alpha \eta}{2} \right)^{\frac{t}{2}} \left( G_0 + \xi + \| (\Jc(\bm{W}^0, \bm{X}_c) + \bm{E}) (\bm{} - f(\bm{W}^0, \bm{X}_c + \bm{\epsilon}) ) \| \right )
\end{align}
The rest of the proof is similar to that of Theorem $\ref{theorem:alpha-pl}$.
\end{proof}

\subsection{Lemma for eigenvalues of coreset}
The following Lemma characterizes the sum of eigenvalues of the NTK associated with the coreset.
\begin{lemma} %
\label{lemma:singular-values-coreset}
Let $\xi$ be an upper bound of the normed difference in gradient of the weighted coreset and the original dataset, i.e. for full data $\bm{X}$ and its corresponding coreset $\bm{X}_S$ with weights $\gamma_S$, and respective residuals $\bm{r}$, $\bm{r}_S$, we have the bound $\|\Jc^T(\W^t,\X)\bm{r}^t-\bm{\gamma}_S\Jc^T(\W^t,\X_S)\bm{r}_S^t\| \leq \xi$. Let $\{\lambda_i\}_{i=1}^k$ be the eigenvalues of the NTK associated with the coreset. Then we have that

\[
\sqrt{\sum_{i=1}^k \lambda_i} \geq  \frac{| \|\Jc^T(\W^t,\X)\bm{r}^t\| - \xi |}{\|\bm{r}^t_S\|}. 
\]
\end{lemma}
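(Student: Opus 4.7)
The plan is to read the quantity $\sqrt{\sum_{i=1}^k \lambda_i}$ as the Frobenius norm of the (weighted) coreset Jacobian, and then bound the operator action of that Jacobian on the coreset residual in two ways: from above by a Cauchy–Schwarz / Frobenius inequality, and from below by the coreset approximation guarantee together with the reverse triangle inequality.

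First I would unpack the object $\sqrt{\sum_{i=1}^k \lambda_i}$. The NTK associated with the coreset is, in the notation of Sec.~\ref{sec:coreset-augment}, the Gram matrix of the weighted coreset Jacobian $\bar{\Jc}_S := \operatorname{diag}(\bm{\gamma}_S)\,\Jc(\W^t,\X_S)$. Its eigenvalues are the squared singular values of $\bar{\Jc}_S$, so $\sum_{i=1}^k \lambda_i = \operatorname{tr}(\bar{\Jc}_S \bar{\Jc}_S^T) = \|\bar{\Jc}_S\|_F^2$, and therefore $\sqrt{\sum_{i=1}^k \lambda_i} = \|\bar{\Jc}_S\|_F$. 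This is the identification that powers the rest of the argument.

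Next, I would apply the standard operator-to-Frobenius bound $\|\bar{\Jc}_S^T \bm{r}_S^t\| \leq \|\bar{\Jc}_S^T\|_{\text{op}} \|\bm{r}_S^t\| \leq \|\bar{\Jc}_S\|_F \, \|\bm{r}_S^t\|$ to get an upper estimate of $\|\bar{\Jc}_S^T \bm{r}_S^t\|$ in terms of $\|\bm{r}_S^t\|$ and $\sqrt{\sum_i \lambda_i}$. For the matching lower bound, I would invoke the hypothesis $\|\Jc^T(\W^t,\X)\bm{r}^t - \bar{\Jc}_S^T \bm{r}_S^t\| \leq \xi$ and the reverse triangle inequality to obtain
\begin{equation}
\|\bar{\Jc}_S^T \bm{r}_S^t\| \geq \bigl|\,\|\Jc^T(\W^t,\X)\bm{r}^t\| - \xi\,\bigr|.
\end{equation}
Chaining these two inequalities yields $\|\bar{\Jc}_S\|_F \, \|\bm{r}_S^t\| \geq \bigl|\,\|\Jc^T(\W^t,\X)\bm{r}^t\| - \xi\,\bigr|$, and dividing by $\|\bm{r}_S^t\|$ gives the claim.

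The main obstacle, as far as I can see, is not analytical but notational: one has to be careful that the eigenvalues $\{\lambda_i\}$ are those of the NTK of the \emph{weighted} coreset (so that the weights $\bm{\gamma}_S$ are absorbed into the Jacobian rather than the residual), for otherwise the statement would have $\|\operatorname{diag}(\bm{\gamma}_S)\bm{r}_S^t\|$ in the denominator instead of $\|\bm{r}_S^t\|$. Once that convention is fixed, the proof is essentially three lines: reverse triangle inequality, then the Frobenius-norm bound $\|\bm{A}^T\bm{v}\|\leq\|\bm{A}\|_F\|\bm{v}\|$, then the identification of $\|\bar{\Jc}_S\|_F$ with $\sqrt{\sum_i \lambda_i}$.
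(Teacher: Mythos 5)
Your proposal is correct and follows essentially the same route as the paper's own proof: both identify $\sqrt{\sum_i \lambda_i}$ with the Frobenius norm of the weighted coreset Jacobian, bound $\|\bar{\Jc}_S^T\bm{r}_S^t\|$ above by $\|\bar{\Jc}_S\|_F\|\bm{r}_S^t\|$, and below by the coreset guarantee plus the reverse triangle inequality. Your remark about absorbing the weights $\bm{\gamma}_S$ into the Jacobian is a fair clarification of a convention the paper leaves implicit, but it does not change the argument.
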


\begin{proof}
Let singular values of coreset Jacobian be $\sigma_i$. Let $\Jc^T(\W^t,\X)\bm{r}^t = \bm{\gamma}_S\Jc^T(\W^t,\X_S)\bm{r}_S^t + \xi_S$ where $\|\xi_S\| \leq \xi$.

Taking Frobenius norm, we get
\begin{align}
    &\|\bm{\gamma}_S\Jc^T(\W^t,\X_S)\bm{r}_S^t\| = \| \Jc^T(\W^t,\X)\bm{r}^t - \xi_S \| \\
    \Rightarrow & \|\bm{\gamma}_S\Jc^T(\W^t,\X_S)\| \|\bm{r}_S^t\| \geq \|\Jc^T(\W^t,\X)\bm{r}^t - \xi_S\| \\
    \Rightarrow & \|\bm{\gamma}_S\Jc^T(\W^t,\X_S)\| \geq  \frac{\|\Jc^T(\W^t,\X)\bm{r}^t - \xi_S\|}{\|\bm{r}_S^t\|} \\
    \Rightarrow & \sqrt{\sum_{i=1}^s \sigma_i^2} \geq  \frac{\|\Jc^T(\W^t,\X)\bm{r}^t - \xi_S\|}{\|\bm{r}_S^t\|} \\
    \Rightarrow & \sqrt{\sum_{i=1}^s \lambda_i} \geq  \frac{\|\Jc^T(\W^t,\X)\bm{r}^t - \xi_S\|}{\|\bm{r}_S^t\|} \\
    \Rightarrow & \sqrt{\sum_{i=1}^s \lambda_i} \geq  \frac{| \| \Jc^T(\W^t,\X)\bm{r}^t \|- \xi|}{\|\bm{r}_S^t\|} \quad \text{ by reverse triangle inequality}
\end{align}
\end{proof}
We can make the following observations:
For overparameterized networks, with bounded activation functions and labels, e.g. softmax and one-hot encoding, the norm of the residual vector is bounded, and the gradient norm is likely to be much larger than residual, especially when dimension of gradient is large. In this case, the Jacobian matrix associated with small weighted coresets found by solving Eq. \eqref{eq:fl_min}, have large singular values.

\subsection{Augmentation as Linear Transformation: Linear Model Analysis}
\label{app:aug-linear-transform}
We introduce a simplified linear model to extend our theoretical analysis to augmentations modelled as linear transformation matrices $F$ applied to the original training data. These augmentations are also originally studied by \cite{wu2020generalization}. In this section, we specifically study the effect of these augmentations using a linear model when applied to coresets.

\begin{lemma} [Augmented coreset gradient bounds: Linear]
\label{lemma:linear}
Let $f$ be a simple linear model with weights $\bm{W} \in \mathbb{R}^{d \times C}$ where $f(\bm{W}, \bm{x}_i) = \bm{W}^T\bm{x}_i$, trained on mean squared loss function $\Lc$.
Let $F \in \mathbb{R}^{d \times d}$ be a common linear augmentation matrix with norm $\|F\|$ with augmentation $\bm{x}_i^{\aug}$ given by $F\bm{x}_i$. Let coreset be of size $k$ and full dataset be of size $n$. Further assume that the predicted label of $\bm{x}_i$ and its augmentation $F\bm{x}_i$ are sufficiently close, i.e. there exists $\omega$ such that $\bm{W}^T(F \bm{x}_i)$ = $\bm{W}^T\bm{x}_i + z_i$, $\|z_i\| \leq \omega \ \forall i$. Let $\xi$ upper-bound the normed difference in gradients between the weighted coreset and full dataset. Then, the normed difference between the gradient of the augmented full data and augmented coreset is given by 
\[
\| \sum_{i \in V} \nabla \Lc(\bm{W}, \bm{x}_i^{\aug}) - \sum_{j=1}^k \gamma_{s_j} \nabla \Lc(\bm{W}, \bm{x}_{s_j}^{\aug}) \| \leq \|F\|(\xi + \sqrt{d} n \omega)
\]
for some (small) constant $\xi$.
\end{lemma}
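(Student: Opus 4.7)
The plan is to exploit the fact that for a linear predictor with squared loss, the per-example gradient factors as $\nabla_{\bm{W}}\Lc(\bm{W},\bm{x}_i)=\bm{x}_i(\bm{W}^T\bm{x}_i-y_i)^T$, so the augmentation $F$ enters the gradient in a way that can be cleanly separated into a ``linear'' part (which is just $F$ times the original gradient) and a small correction governed by $\omega$.

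First, I would compute the gradient at an augmented point. Using $\bm{x}_i^{\aug}=F\bm{x}_i$ and the assumption $\bm{W}^T(F\bm{x}_i)=\bm{W}^T\bm{x}_i+z_i$, the chain rule gives
\begin{equation}
\nabla_{\bm{W}}\Lc(\bm{W},\bm{x}_i^{\aug})=F\bm{x}_i\bigl(\bm{W}^T\bm{x}_i-y_i+z_i\bigr)^T=F\,\nabla_{\bm{W}}\Lc(\bm{W},\bm{x}_i)+F\bm{x}_iz_i^T.
\end{equation}
Applying this identity term-by-term to the full sum and to the weighted coreset sum and subtracting, the common factor $F$ pulls out, leaving two contributions: a ``coreset mismatch'' term $F\bigl(\sum_{i\in V}\nabla\Lc(\bm{W},\bm{x}_i)-\sum_j\gamma_{s_j}\nabla\Lc(\bm{W},\bm{x}_{s_j})\bigr)$ and a ``perturbation mismatch'' term $F\bigl(\sum_{i\in V}\bm{x}_iz_i^T-\sum_j\gamma_{s_j}\bm{x}_{s_j}z_{s_j}^T\bigr)$.

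Next, I would apply submultiplicativity $\|FA\|\le\|F\|\,\|A\|$ and the triangle inequality. The coreset mismatch is bounded by $\xi$ directly from the hypothesis on the weighted coreset. For the perturbation mismatch, using $\bm{x}_i\in[0,1]^d$ we have $\|\bm{x}_i\|\le\sqrt{d}$, and by assumption $\|z_i\|\le\omega$, so each rank-one term satisfies $\|\bm{x}_iz_i^T\|\le\sqrt{d}\,\omega$. Summing over $i\in V$ gives $n\sqrt{d}\,\omega$, and using that $\sum_j\gamma_{s_j}=n$ (the coreset weights are designed to sum to the dataset size) the coreset analogue is also bounded by $n\sqrt{d}\,\omega$. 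Combining via triangle inequality yields the claimed bound $\|F\|(\xi+\sqrt{d}\,n\omega)$, possibly after absorbing a constant factor of $2$ into $\omega$ or into the definition of the perturbation bound.

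The main obstacle is really bookkeeping rather than a conceptual difficulty: one has to keep the matrix/vector shapes straight (the gradient lives in $\mathbb{R}^{d\times C}$, so the relevant norm is Frobenius, which conveniently satisfies $\|\bm{x}z^T\|_F=\|\bm{x}\|\,\|z\|$), and the constant in front of $\sqrt{d}\,n\omega$ depends on whether one bounds the full sum and the weighted coreset sum separately or couples them via the coreset mapping $\pi$ that sends each $i$ to its representative $s_{\pi(i)}$. The latter can sharpen the constant if needed, but the crude triangle-inequality argument above suffices up to a universal factor, which matches the looseness already present in writing $\xi$ for the unaugmented gradient gap.
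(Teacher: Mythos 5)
Your proof is correct and follows essentially the same route as the paper's: factor $F$ out of both gradient sums, bound the unaugmented weighted-coreset gradient gap by $\xi$, and control the residual-shift terms via $\|\bm{x}_i z_i^T\|_F \le \sqrt{d}\,\omega$ together with $\sum_{j}\gamma_{s_j}\le n$. If anything, you are more careful than the paper, which applies the substitution $\bm{W}^T(F\bm{x}_i)=\bm{W}^T\bm{x}_i+z_i$ only to the coreset residuals and silently drops the $z_i$ contribution from the full-data sum, thereby avoiding the factor of $2$ that you correctly note must otherwise be absorbed into $\omega$.
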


\begin{proof}
By our assumption, we can begin with, 
\begin{equation}
\label{coreset-eqn}
\| \sum_{i \in V} \nabla \mathcal{L}(\bm{W}, \bm{x}_i) - \sum_{j=1}^k \gamma_{s_j} \nabla \mathcal{L}(\bm{W}, \bm{x}_{s_j}) \| \leq \xi
\end{equation}

Furthermore, by \cite{mirzasoleiman2020coresets}, we know that sum of the coreset weights $\gamma_{s_j}$ is given by
$\sum_{j=1}^{k=1} \gamma_{s_j} \leq n$.

Hence, 
\begin{align}
    &\| \sum_{i \in V} \nabla \mathcal{L}(\bm{W}, \bm{x}^{\aug}_i) - \sum_{j=1}^k \gamma_{s_j} \nabla \mathcal{L}(\bm{W}, \bm{x}^{\aug}_{s_j}) \| \\
    &= \| \sum_{i \in V} {(\mathcal{J}(\bm{W}, \bm{x}^{\aug}_i))}^T [\bm{W}^T(F\bm{x}_i) - y_i] - 
    \sum_{j=1}^k \gamma_{s_j} {(\mathcal{J}(\bm{W}, \bm{x}^{\aug}_{s_j}))}^T [\bm{W}^T(F\bm{x}_{s_j}) - y_{s_j}] \| \\
    &=   \| \sum_{i \in V} F\bm{x}_i  [\bm{W}^T(F\bm{x}_i) - y_i] - \sum_{j=1}^k \gamma_{s_j} F\bm{x}_{s_j} [\bm{W}^T(F\bm{x}_{s_j}) - y_{s_j}] \| \\
    &=  \|F \sum_{i \in V} \bm{x}_i (\bm{W}^T\bm{x}_i - y_i) - F \sum_{j=1}^k \gamma_{s_j} \bm{x}_{s_j} (\bm{W}^T\bm{x}_{s_j}  + z_i - y_{s_j}) \| \\
    &=  \| F \sum_{i \in V}\nabla L(\bm{W}, \bm{x}_i) - F \sum_{j=1}^k \gamma_{s_j} \nabla L(\bm{W}, \bm{x}_{s_j}) -F\sum_{j=1}^k \gamma_{s_j}\bm{x}_{s_j}z_{s_j}\| \\
    &\leq  \|F\| \| \sum_{i \in V}\nabla L(\bm{W}, \bm{x}_i) - \sum_{j=1}^k \gamma_{s_j} \nabla L(\bm{W}, \bm{x}_{s_j}) \| + \|F\| \|\sum_{j=1}^k \gamma_{s_j}\bm{x}_{s_j}z_{s_j}\|\\
    &\leq \|F\|\xi + \sqrt{d} \|F\|n \omega \\
    &= \|F\|(\xi + \sqrt{d} n \omega)
\end{align}
\end{proof}

\begin{corollary}
\label{corollary:linear}
In the simplified linear case above, the difference in gradients of the full training data with its augmentations ($\nabla \Lc(\bm{W}, \bm{X}_{f + aug})$) and gradients of the coreset with its augmentations ($\nabla \Lc(\bm{W}, \bm{X}_{c + c_{aug}})$) can be bounded by
\[
\| \nabla \Lc(\bm{W}, \bm{X}_{f + aug}) - \nabla \Lc(\bm{W}, \bm{X}_{c + c_{aug}}) \| \leq (\|F\|+1)\xi + \sqrt{d} \|F\|n \omega
\]
\end{corollary}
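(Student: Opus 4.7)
The plan is to reduce the bound on the combined gradient difference to the two pieces we already control: the coreset approximation bound on the original data (the defining inequality of a coreset, namely Eq.~\eqref{coreset-eqn}) and the augmented coreset bound from Lemma~\ref{lemma:linear}. Since $\bm{X}_{f+\mathrm{aug}}$ decomposes as the original training set together with its augmentations, and $\bm{X}_{c+c_{\mathrm{aug}}}$ likewise decomposes into the weighted coreset together with its augmentations, the total gradient is additive across these two pieces.

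The first step will be to write
\begin{align*}
    \nabla \Lc(\bm{W}, \bm{X}_{f+\mathrm{aug}}) - \nabla \Lc(\bm{W}, \bm{X}_{c+c_{\mathrm{aug}}})
    &= \Bigl(\sum_{i \in V} \nabla \Lc(\bm{W}, \bm{x}_i) - \sum_{j=1}^k \gamma_{s_j}\nabla \Lc(\bm{W}, \bm{x}_{s_j})\Bigr) \\
    &\quad + \Bigl(\sum_{i \in V} \nabla \Lc(\bm{W}, \bm{x}_i^{\mathrm{aug}}) - \sum_{j=1}^k \gamma_{s_j}\nabla \Lc(\bm{W}, \bm{x}_{s_j}^{\mathrm{aug}})\Bigr),
\end{align*}
and then apply the triangle inequality to split the norm into the sum of the two pieces.

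The second step will apply Eq.~\eqref{coreset-eqn} to bound the first piece by $\xi$, and Lemma~\ref{lemma:linear} directly to bound the second piece by $\|F\|(\xi + \sqrt{d}\,n\omega)$. Adding the two bounds gives $(\|F\|+1)\xi + \sqrt{d}\,\|F\|\,n\omega$, matching the claim.

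I do not anticipate any real obstacle: the statement is a corollary in the literal sense, obtained by combining the defining property of a coreset on the clean gradients with the already-proved augmented version in Lemma~\ref{lemma:linear} via one application of the triangle inequality. The only mild subtlety is being careful that the same coreset weights $\gamma_{s_j}$ appear in both decompositions, so that the two error terms align cleanly and nothing cross-cancels; otherwise the calculation is routine.
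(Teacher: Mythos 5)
Your decomposition of the combined gradient into the clean-data piece and the augmented piece, followed by the triangle inequality, Eq.~\eqref{coreset-eqn} for the first term and Lemma~\ref{lemma:linear} for the second, is exactly the argument the paper gives. No gaps; this matches the paper's proof step for step.
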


\begin{proof}
Applying Eq. \eqref{coreset-eqn} and Lemma \ref{lemma:linear}, we obtain

\begin{align}
    &\| \nabla \Lc(\bm{W}, \bm{X}_{f + aug}) - \nabla \Lc(\bm{W}, \bm{X}_{c + c_{aug}}) \| \\ 
    &= \| (\nabla \Lc(\bm{W}, \bm{X}_{f}) + \nabla \Lc(\bm{W}, \bm{X}_{aug})) - (\nabla \Lc(\bm{W}, \bm{X}_{c}) + \nabla \Lc(\bm{W}, \bm{X}_{c_{aug}})) \|  \\
    &= \| (\nabla \Lc(\bm{W}, \bm{X}_{f}) - \nabla \Lc(\bm{W}, \bm{X}_{c})) + (\nabla \Lc(\bm{W}, \bm{X}_{aug}) - \nabla \Lc(\bm{W}, \bm{X}_{c_{aug}})) \| \\
    &\leq \| (\nabla \Lc(\bm{W}, \bm{X}_{f}) - \nabla \Lc(\bm{W}, \bm{X}_{c}))\| + \|(\nabla \Lc(\bm{W}, \bm{X}_{aug}) - \nabla \Lc(\bm{W}, \bm{X}_{c_{aug}})) \| \\
    &\leq \xi + \|F\|(\xi + \sqrt{d} n \omega) \\
     &= (\|F\|+1)\xi + \sqrt{d} \|F\|n \omega
\end{align}
\end{proof}

\begin{theorem} [Convergence of linear model]
\label{theorem:linear}
Let $f$ be a linear model with weights $\bm{W}$ and augmentation be represented by the common linear transformation $F$. Let $\mathcal{L}_{i}$ be $\beta$-smooth, $\mathcal{L}$ be $\lambda$-smooth and satisfy the $\alpha$-PL condition, that is for $\alpha > 0$, ${\| \nabla \mathcal{L}(\bm{W}, \bm{X}) \|}^2 \geq \alpha \mathcal{L}(\bm{W}, \bm{X})$ for all weights $\bm{W}$. Let $\xi$ upper-bound the normed difference in gradients between the weighted coreset and full dataset and $\omega$ bound $\bm{W}^T(F\bm{x}_i) = \bm{W}^T\bm{x}_i + z_i$, $\|z_i\| \leq \omega \ \forall i$. Let $G_0'$ be the gradient over the full dataset and its augmentations at initialization. Then, running SGD on the size $k$ coreset with its augmentation using constant step size $\eta = \frac{\alpha}{\lambda\beta}$, we get the following convergence bound:
\[
\mathbb{E} [\| \nabla \mathcal{L}(\bm{W}^t, \bm{X}_{c + c_{\aug}}) \|] \leq \frac{1}{\sqrt{\alpha}} \left( 1 - \frac{\alpha \eta}{2} \right)^{\frac{t}{2}} \left( G_0' + (\|F\|+1)\xi + \sqrt{d} \|F\|n \omega \right )
\]
\end{theorem}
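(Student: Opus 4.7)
The plan is to assemble the result from two pieces already in hand: the classical PL-based SGD convergence rate used in the proof of Theorem \ref{theorem:alpha-pl}, and the gradient-approximation bound for augmented coresets established in Corollary \ref{corollary:linear}. The structure will closely parallel the proof of Theorem \ref{theorem:alpha-pl}, only swapping in the linear-model bound at the final step.

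First, I would invoke Theorem 1 of \cite{bassily2018exponential} applied to the objective $\mathcal{L}(\bm{W}, \bm{X}_{c+c_{\aug}})$. The hypotheses of that theorem — $\beta$-smoothness of each $\mathcal{L}_i$, $\lambda$-smoothness of $\mathcal{L}$, the $\alpha$-PL condition, interpolation, and constant step size $\eta = \alpha/(\lambda\beta)$ — are all assumed in the statement. This gives
\begin{equation}
\mathbb{E}\bigl[\|\nabla \mathcal{L}(\bm{W}^t, \bm{X}_{c+c_{\aug}})\|^2\bigr] \leq \Bigl(1 - \tfrac{\alpha \eta}{2}\Bigr)^{t} \mathcal{L}(\bm{W}^0, \bm{X}_{c+c_{\aug}}).
\end{equation}
Using the $\alpha$-PL inequality to replace $\mathcal{L}(\bm{W}^0, \bm{X}_{c+c_{\aug}})$ by $\|\nabla \mathcal{L}(\bm{W}^0, \bm{X}_{c+c_{\aug}})\|^2/\alpha$, then applying Jensen's inequality to pass from $\mathbb{E}[\|\cdot\|^2]$ to $\mathbb{E}[\|\cdot\|]$ exactly as in Appendix \ref{app:proof-alpha-pl}, yields
\begin{equation}
\mathbb{E}\bigl[\|\nabla \mathcal{L}(\bm{W}^t, \bm{X}_{c+c_{\aug}})\|\bigr] \leq \tfrac{1}{\sqrt{\alpha}} \Bigl(1 - \tfrac{\alpha \eta}{2}\Bigr)^{t/2} \|\nabla \mathcal{L}(\bm{W}^0, \bm{X}_{c+c_{\aug}})\|.
\end{equation}

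Second, I would control the initial gradient norm by passing to the full augmented dataset via the triangle inequality:
\begin{equation}
\|\nabla \mathcal{L}(\bm{W}^0, \bm{X}_{c+c_{\aug}})\| \leq \|\nabla \mathcal{L}(\bm{W}^0, \bm{X}_{f+\aug})\| + \|\nabla \mathcal{L}(\bm{W}^0, \bm{X}_{f+\aug}) - \nabla \mathcal{L}(\bm{W}^0, \bm{X}_{c+c_{\aug}})\|.
\end{equation}
The first term is $G_0'$ by definition. The second term is exactly what Corollary \ref{corollary:linear} bounds, giving $(\|F\|+1)\xi + \sqrt{d}\|F\|n\omega$. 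Substituting back produces the advertised bound.

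The proof is essentially a direct assembly, so there is no real technical obstacle. The only point that requires a moment of care is verifying that Corollary \ref{corollary:linear} (and the underlying Lemma \ref{lemma:linear}) indeed applies at the initial iterate $\bm{W}^0$: the corollary is stated for arbitrary $\bm{W}$, so this is immediate provided the assumption $\bm{W}^T(F\bm{x}_i) = \bm{W}^T\bm{x}_i + z_i$ with $\|z_i\|\leq \omega$ is interpreted as holding throughout training (or at least at initialization, which is all that is used). No additional hypothesis on $\bm{W}^0$ beyond what is already implicit in the linear-model setting is needed.
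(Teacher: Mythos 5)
Your proposal is correct and follows essentially the same route as the paper's own proof: the Bassily et al. PL-based rate, the $\alpha$-PL bound on the initial loss, Jensen's inequality, and then Corollary \ref{corollary:linear} to control $\|\nabla \mathcal{L}(\bm{W}^0, \bm{X}_{c+c_{\aug}})\|$. The only difference is that you spell out the triangle-inequality step relating the coreset gradient to $G_0'$ plus the corollary's error term, which the paper leaves implicit.
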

\begin{proof}
From \cite{bassily2018exponential}, we have
\begin{align}
    \mathbb{E} [\| \nabla \mathcal{L}(\bm{W}^t, \bm{X}_{c + c_{\aug}}) \|^2] &\leq \left( 1 - \frac{\alpha \eta}{2} \right)^{t} \mathcal{L}(\bm{W}^0, \bm{X}_{c + c_{\aug}}) \\
    &\leq \frac{1}{\alpha} \left( 1 - \frac{\alpha \eta}{2} \right)^{t} \| \nabla \mathcal{L}(\bm{W}^0, \bm{X}_{c + c_{\aug}}) \|^2 \\
\end{align}
Using Jensen's inequality, we have
\begin{align}
    &\mathbb{E} [\| \nabla \mathcal{L}(\bm{W}^t, \bm{X}_{c + c_{\aug}}) \|] \\
    &\leq \sqrt{\mathbb{E} [\| \nabla \mathcal{L}(\bm{W}^t, \bm{X}_{c + c_{\aug}}) \|^2] } \\
    &\leq \frac{1}{\sqrt{\alpha}} \left( 1 - \frac{\alpha \eta}{2} \right)^{\frac{t}{2}} \| \nabla \mathcal{L}(\bm{W}^0, \bm{X}_{c + c_{\aug}}) \| \\
    &\leq  \frac{1}{\sqrt{\alpha}} \left( 1 - \frac{\alpha \eta}{2} \right)^{\frac{t}{2}} \left( G_0' + (\|F\|+1)\xi + \sqrt{d} \|F\|n \omega \right )
\end{align}
where the last inequality follows from applying Corollary \ref{corollary:linear}.
\end{proof}

\section{Singular spectrum analysis}
\label{app:singular-spectrum-analysis}

\subsection{Experiment details}
\label{app:singular-spectrum-exp-details}
We generate singular spectrum plots for both MNIST and CIFAR10 datasets in \cref{fig:spectrum,fig:singular-values-and-vectors-large}. Due to the computational infeasbility of computing the network Jacobian for the full datasets in deep network settings, we instead construct and use a reduced version of these datasets by uniformly select 900 images from the first 3 classes. For our experiments on MNIST, we pretrain a MLP model with 1 hidden layer for 15 epochs. For our experiments on CIFAR10, we pretrain a ResNet20 model for 15 epochs. We then compute the singular spectrums for augmented and non-augmented data based on these pretrained networks.

Since it is difficult to perform a one-to-one matching of singular values produced from augmented and non-augmented datasets, we instead bin our singular values into 30 separate and uniformly distributed bins each containing the same number of singular values. To measure perturbation to singular values resulted from augmentation, we compute the mean difference between each bin. On the other hand, to measure perturbation to singular vectors, we compute mean subspace angle between the singular subspace spanned by singular vectors in each bin.

\subsection{Real-world strong augmentations}
\label{app:singular-spectrum-realworld}
We study the effects of real-world, unbounded augmentations on the singular spectrum of the network Jacobian. In particular, in additional to the plots in the main paper, we show the effect of strong augmentations through (1) random rotation (up to $30\circ$ and AutoAugment \citep{cubuk2019autoaugment} for MNIST and (2) random horizontal flips/random crops and AutoAugment for CIFAR10. The policies implemented by AutoAugment include translations, shearing, as well as contrast and brightness transforms. We study the effects of these augmentations on the singular spectrum in \cref{fig:singular-values-and-vectors-large}. Despite these augmentations being unbounded transformations, we note that the results of our theory still holds. In particular, it can be observed that data augmentation increases smaller singular values relatively more with
a higher probability. On the other hand, data augmentation
affects the prominent singular vectors of the Jacobian to a
smaller extent, and preserves the prominent directions. As such, our argument empirically extends to real-world, unbounded label-invariant transformations characteristic of strong augmentations.

\begin{figure}[h!]
   \centering
    \begin{subfigure}[MNIST $\epsilon_0=8$ - Values \label{fig:app_mnist_noise8_relative_values}]{
    \includegraphics[width=.225\textwidth,trim=10mm 0 0mm 10mm]{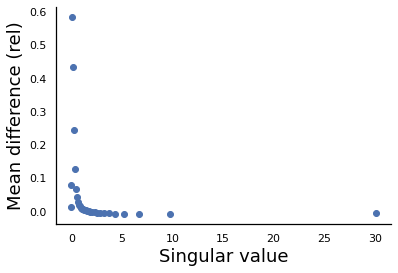}}
    \end{subfigure}\hspace{0mm}
    \begin{subfigure}[MNIST $\epsilon_0=8$ - Vectors \label{fig:app_mnist_noise8_vectorspace}]{
    \includegraphics[width=.225\textwidth,trim=10mm 0 0mm 10mm]{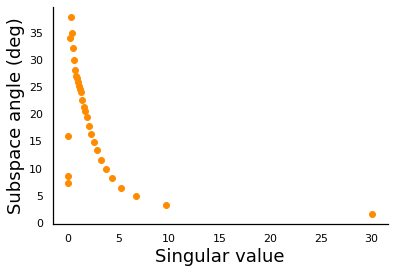}}
    \end{subfigure}
    \begin{subfigure}[CIFAR10 $\epsilon_0=8$ - Values \label{fig:app_cifar_noise8_relative_values}]{
    \includegraphics[width=.225\textwidth,trim=10mm 0 0mm 10mm]{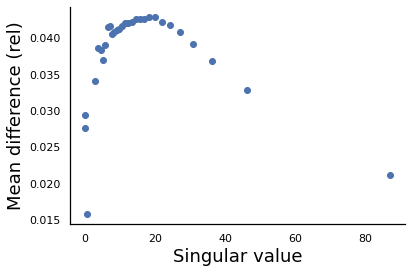}}
    \end{subfigure}\hspace{0mm}
    \begin{subfigure}[CIFAR10 $\epsilon_0=8$ - Vectors \label{fig:app_cifar_noise8_vectorspace}]{
    \includegraphics[width=.225\textwidth,trim=10mm 0 0mm 10mm]{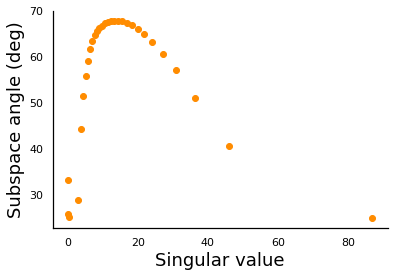}}
    \end{subfigure}
    
    \centering
    \begin{subfigure}[MNIST - $\epsilon_0=16$ - Values \label{fig:app_mnist_noise16_relative_values}]{
    \includegraphics[width=.225\textwidth,trim=10mm 0 0mm 10mm]{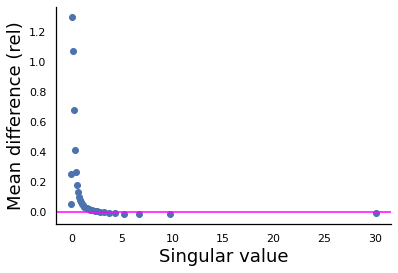}}
    \end{subfigure}\hspace{0mm}
    \begin{subfigure}[MNIST $\epsilon_0=16$ - Vectors \label{fig:app_mnist_noise16_vectorspace}]{
    \includegraphics[width=.225\textwidth,trim=10mm 0 0mm 10mm]{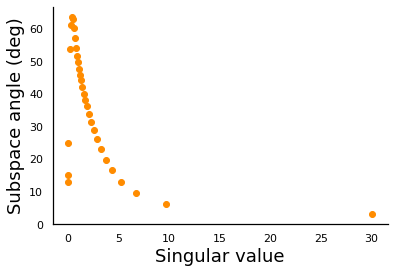}}
    \end{subfigure}
    \begin{subfigure}[CIFAR10 $\epsilon_0=16$ - Values \label{fig:app_cifar_noise16_relative_values}]{
    \includegraphics[width=.225\textwidth,trim=10mm 0 0mm 10mm]{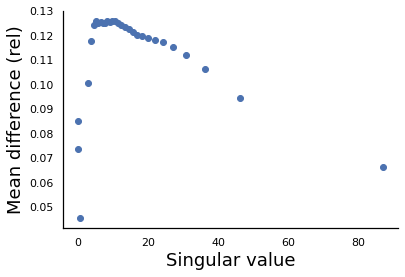}}
    \end{subfigure}\hspace{0mm}
    \begin{subfigure}[CIFAR10 $\epsilon_0=16$ - Vectors \label{fig:app_cifar_noise16_vectorspace}]{
    \includegraphics[width=.225\textwidth,trim=10mm 0 0mm 10mm]{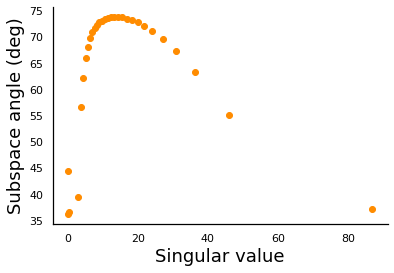}}
    \end{subfigure}
    
    \centering
    \begin{subfigure}[MNIST Rotate - Values \label{fig:mnist_rotate30_relative_values}]{
    \includegraphics[width=.225\textwidth,trim=10mm 0 0mm 10mm]{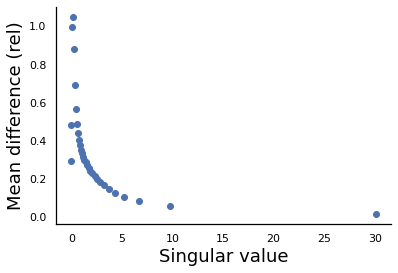}}
    \end{subfigure}\hspace{0mm}
    \begin{subfigure}[MNIST Rotate - Vectors \label{fig:mnist_rotate30_vectorspace}]{
    \includegraphics[width=.225\textwidth,trim=10mm 0 0mm 10mm]{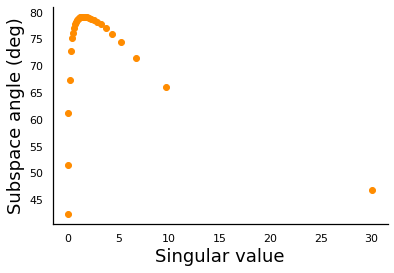}}
    \end{subfigure}
    \begin{subfigure}[CIFAR10 Flip-Crop - Values \label{fig:cifar_flipcrop_relative_values}]{
    \includegraphics[width=.225\textwidth,trim=10mm 0 0mm 10mm]{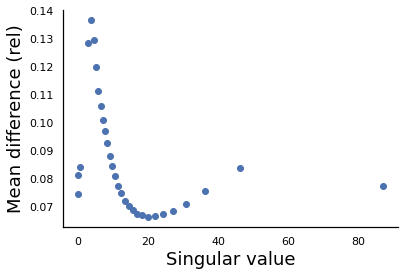}}
    \end{subfigure}\hspace{0mm}
    \begin{subfigure}[CIFAR10 Flip-Crop - Vectors \label{fig:cifar_flipcrop_vectorspace}]{
    \includegraphics[width=.225\textwidth,trim=10mm 0 0mm 10mm]{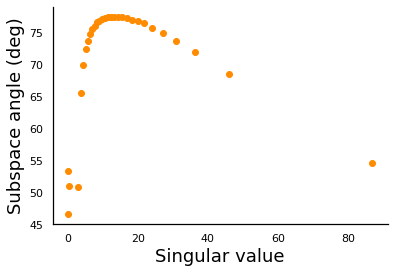}}
    \end{subfigure}
    
    \centering
    \begin{subfigure}[MNIST AutoAugment - Values \label{fig:mnist_autoaugment_relative_values}]{
    \includegraphics[width=.225\textwidth,trim=10mm 0 0mm 10mm]{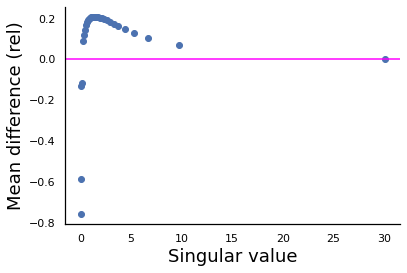}}
    \end{subfigure}\hspace{0mm}
    \begin{subfigure}[MNIST AutoAugment - Vectors \label{fig:mnist_autoaugment_vectorspace}]{
    \includegraphics[width=.225\textwidth,trim=10mm 0 0mm 10mm]{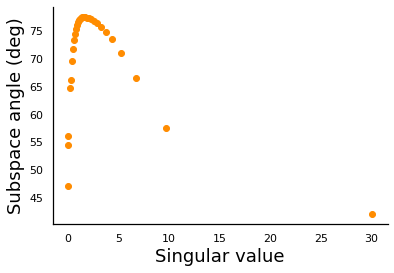}}
    \end{subfigure}
    \begin{subfigure}[CIFAR10 AutoAugment - Values \label{fig:cifar_autoaugment_relative_values}]{
    \includegraphics[width=.225\textwidth,trim=10mm 0 0mm 10mm]{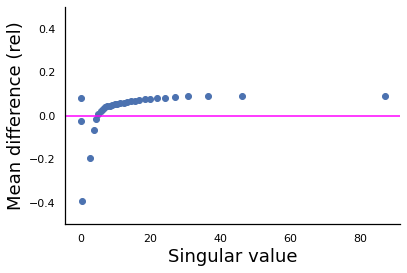}}
    \end{subfigure}\hspace{0mm}
    \begin{subfigure}[CIFAR10 AutoAugment - Vectors \label{fig:cifar_autoaugment_vectorspace}]{
    \includegraphics[width=.225\textwidth,trim=10mm 0 0mm 10mm]{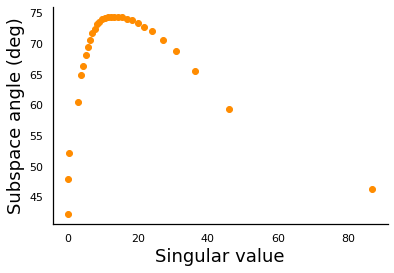}}
    \end{subfigure}
    
    \centering
    \begin{subfigure}[MNIST Rotate + AutoAugment - Values \label{fig:mnist_rotate30_autoaugment_relative_values}]{
    \includegraphics[width=.225\textwidth,trim=10mm 0 0mm 10mm]{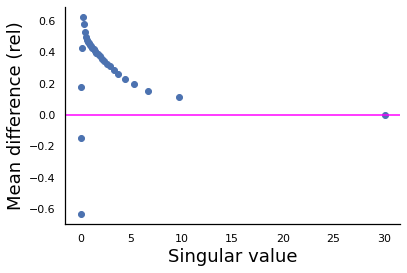}}
    \end{subfigure}\hspace{0mm}
    \begin{subfigure}[MNIST Rotate + AutoAugment - Vectors \label{fig:mnist_rotate30-autoaugment_vectorspace}]{
    \includegraphics[width=.225\textwidth,trim=10mm 0 0mm 10mm]{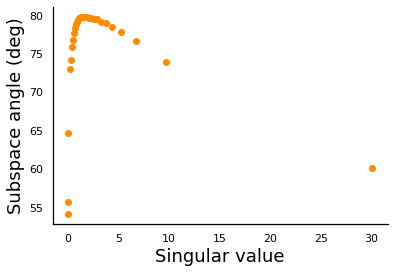}}
    \end{subfigure}
    \begin{subfigure}[CIFAR10 Flip + Crop + AutoAugment - Values \label{fig:cifar_flip-crop-autoaugment_relative_values}]{
    \includegraphics[width=.225\textwidth,trim=10mm 0 0mm 10mm]{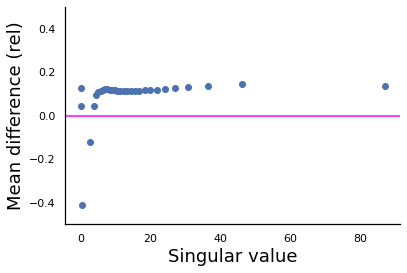}}
    \end{subfigure}\hspace{0mm}
    \begin{subfigure}[CIFAR10 Flip + Crop + AutoAugment - Vectors \label{fig:cifar_flip-crop-autoaugment_vectorspace}]{
    \includegraphics[width=.225\textwidth,trim=10mm 0 0mm 10mm]{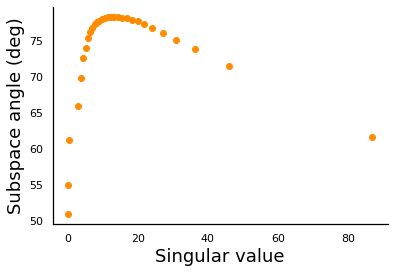}}
    \end{subfigure}
    
    \caption{Difference in mean singular values (Cols 1 \& 3) between augmented and non-augmented data and mean angular difference (Cols 2 \& 4) between subspaces spanned by singular vectors for augmented and non-augmented data.}
    \label{fig:singular-values-and-vectors-large}
\end{figure}

\section{Experiment Setup and Additional Experiments}
\label{app:extra-experiments}

\subsection{Experiment setup}
For all experiments, we train using SGD with 0.9 momentum and learning rate decay. For experiments on CIFAR10 and variants/ResNet20, we train for 200 epochs, for Caltech256 (ImageNet pretrained)/ ResNet18, we trained for 40 epochs starting at learning rate  $0.001$ and batch size 64. We also report results for Caltech256 without ImageNet pretraining in \secref{sec:caltech256-from-scratch}, where we train for 400 epochs to ensure convergence with a starting learning rate of  $0.05$ and batch size 64. For experiments on ImageNet/ResNet50 and TinyImageNet/ResNet50, we use the standard 90 epoch learning schedule starting at learning rate of 0.1 and batch size 64.

\textbf{Data and augmentation.} We apply our method to training ResNet20 and Wide-ResNet-28-10 on CIFAR10, and ResNet32 on CIFAR10-IMB (Long-Tailed CIFAR10 with Imbalance factor of 100 following \citep{kim2020adjusting}) and SVHN datasets. 
We train Caltech256 \cite{griffin2007caltech} on ImageNet-pretrained ResNet18, and include experiments with random initialization in Appendix D. TinyImageNet and ImageNet are trained on with ResNet50. We use \citep{wu2020generalization} for CIFAR10/SVHN, and AutoAugment \cite{cubuk2019autoaugment} for Caltech256, TinyImageNet, and ImageNet as the strong augmentation method.
 Note that we append strong augmentations rather than apply them in-place, which we show to be more effective in Appendix D.
All results are averaged over 5 runs using an Nvidia A40 GPU. 

\subsection{Experiment setup}
For all experiments, we train using SGD with 0.9 momentum and learning rate decay. We also set weight decay as For experiments on CIFAR10 and variants/ResNet20, we train for 200 epochs, for Caltech256 (ImageNet pretrained)/ ResNet18, we trained for 40 epochs starting at learning rate  $0.001$ and batch size 64. We also report results for Caltech256 without ImageNet pretraining in \secref{sec:caltech256-from-scratch}, where we train for 400 epochs to ensure convergence with a starting learning rate of  $0.05$ and batch size 64. For experiments on ImageNet/ResNet50 and TinyImageNet/ResNet50, we use the standard 90 epoch learning schedule starting at learning rate of 0.1 and batch size 64.

\textbf{Data and augmentation.} We apply our method to training ResNet20 and Wide-ResNet-28-10 on CIFAR10, and ResNet32 on CIFAR10-IMB (Long-Tailed CIFAR10 with Imbalance factor of 100 following \citep{kim2020adjusting}) and SVHN datasets. 
We train Caltech256 \cite{griffin2007caltech} on ImageNet-pretrained ResNet18, and include experiments with random initialization in Appendix D. TinyImageNet and ImageNet are trained on with ResNet50. We use \citep{wu2020generalization} for CIFAR10/SVHN, and AutoAugment \cite{cubuk2019autoaugment} for Caltech256, TinyImageNet, and ImageNet as the strong augmentation method.
 Note that we append strong augmentations rather than apply them in-place, which we show to be more effective in Appendix D.
All results are averaged over 5 runs using an Nvidia A40 GPU. 

\subsection{Full Results for Table \ref{tab:results-main}}
This section contains full experiment results including standard deviations and the full augmentation benchmark for Table $\ref{tab:results-main}$. Augmenting coresets of size 10\% achieves 51\% of the improvement obtained from augmentation of the full data and further enjoys a 6x speedup in training time on CIFAR10. This speedup becomes more significant when using strong augmentation techniques which are mostly computationally demanding, especially when applied to the entire dataset.

\begin{table}[h]
\small
\centering
\caption{Supplementary table for Table \ref{tab:results-main} - Test accuracy on CIFAR10 + ResNet20, SVHN + ResNet32, CIFAR10-Imbalanced + ResNet32 including standard deviation errors and full dataset augmentation accuracy. 
}
\begin{tabular}{*{5}c}
\toprule
Method & 
Size &
CIFAR10 &
CIFAR10-IMB &
SVHN \\
\midrule
None &
$0\%$ &
$89.46 \pm 0.17\%$ &
$87.08 \pm 0.50\%$ &
$95.676 \pm 0.108\%$ 
\\
\cmidrule(lr){1-5}
&
$5\%$ &
$90.34 \pm 0.18\%$ &
$88.48 \pm 0.25\%$ &
$95.760 \pm 0.084\% $
\\
Random &
$10\%$ &
$91.07 \pm 0.13\%$ &
$89.52 \pm 0.15\%$ &
$96.187 \pm 0.112 \%$
\\
&
$30\%$&
$92.11 \pm 0.12\%$ &
$91.11 \pm 0.18\%$ &
$96.569 \pm 0.073 \%$
\\
\cmidrule(lr){1-5}
&
$5\%$ &
$90.79 \pm 0.19\%$ &
$88.77 \pm 0.35\%$ &
$\bm{96.165 \pm 0.108 \%}$
\\
Max-Loss &
$10\%$ &
$91.39 \pm 0.08\%$ &
$89.22 \pm 0.48\%$ &
$\bm{96.370 \pm 0.076\%}$
\\
&
$30\%$ &
$92.43 \pm 0.07\%$ &
$91.11\pm 0.25\%$ &
$96.735 \pm 0.068\%$
\\
\cmidrule(lr){1-5}
&
$5\%$ &
$\bm{90.87 \pm 0.05\%}$ &
$\bm{89.10 \pm 0.41\%}$ &
$96.121 \pm 0.055\%$
\\
Coreset &
$10\%$ &
$\bm{91.54 \pm 0.19\%}$ &
$\bm{89.75 \pm 0.52\%}$ &
$96.354 \pm 0.091 \%$
\\
&
$30\%$ &
$\bm{92.49 \pm 0.15\%}$ &
$\bm{91.12 \pm 0.26\%}$ &
$\bm{96.791 \pm 0.051\%}$
\\
\cmidrule(lr){1-5}
All &
$100\%$ &
$93.50 \pm 0.25\%$ &
$92.48 \pm 0.34\%$ &
$97.068 \pm 0.030\%$
\\
\bottomrule
\end{tabular}
\vspace{5mm}
\label{tab:results-main-app}
\end{table}

\subsection{Supplementary results for \tabref{tab:results-train-subset}}
\label{app:results-train-subset-include-weak-aug-only}
\begin{table*}[h]
 \caption{Supplementary results for \tabref{tab:results-train-subset}. Training ResNet20 (R20) and WideResnet-28-10 (W2810) on CIFAR10 (C10) using small subsets, and ResNet18 (R18) on Caltech256 (Cal). 
 } %
    \centering
    \footnotesize
    \resizebox{\textwidth}{!}
    {%
      \begin{tabular}{cccccc}
  \toprule
  \multicolumn{1}{c}{Model/Dataset}
  & \multicolumn{1}{c}{Subset}
  & \multicolumn{2}{c}{Random}
  & \multicolumn{2}{c}{Ours} \\
     \cmidrule(lr){3-4}
     \cmidrule(lr){5-6}
  &
  & \multicolumn{1}{c}{Weak Aug.}
  & \multicolumn{1}{c}{Strong Aug.}
  & \multicolumn{1}{c}{Weak Aug.}
  & \multicolumn{1}{c}{Strong Aug.}
  \\
  \midrule
     \multirow{4}{*}{C10/R20} 
     & 0.1\% (5)
     & $31.7 \pm 3.2$
     & $33.5 \pm 2.7$
     & $29.6 \pm 3.8$
     & \bm{$37.8 \pm 4.5$}
     \\   
     & 0.2\% (10)
     & $35.9 \pm 2.1$
     & $42.7 \pm 3.9$
     & $33.6 \pm 3.2$
     & \bm{$45.1 \pm 2.3$}
     \\   
     & 0.5\% (25)
     & $51.1 \pm 2.3$
     & $58.7 \pm 1.3$
     & $55.8 \pm 3.1$
     & \bm{$63.9 \pm 2.1$}
     \\   
     & 1\% (50)
     & $66.2 \pm 1.0$
     & $74.4 \pm 0.8$
     & $65.9 \pm 4.0$
     & \bm{$74.7 \pm 1.1$}
     \\
     \midrule
     \multirow{1}{*}{C10/W2810} 
     & 1\% (50)
     & $61.3 \pm 2.4$
     & $57.7 \pm 0.8$
     & $59.9 \pm 2.4$
     & \bm{$62.1 \pm 3.1$}
     \\
     \midrule
     \multirow{1}{*}{Cal/R18} 
     & 5\% (3)
     & $24.8 \pm 1.5$
     & $41.5 \pm 0.5$
     & $33.8 \pm 1.7$
     & \bm{$52.7 \pm 1.2$}
     \\
     & 10\% (6)
     & $49.5 \pm 0.6$
     & $61.8 \pm 0.8$
     & $55.7 \pm 0.3$
     & \bm{$65.4 \pm 0.3$}
     \\
     & 20\% (12)
     & $66.6 \pm 0.2$
     & $72.5 \pm 0.1$
     & $67.5 \pm 0.3$
     & \bm{$73.1 \pm 0.1$}     \\
     & 30\% (18)
     & $72.0 \pm 0.1$
     & $75.7 \pm 0.2$
     & $71.9 \pm 0.2$
     & \bm{$76.3 \pm 0.2$}
     \\
     & 40\% (24)
     & $74.6 \pm 0.3$
     & $77.6 \pm 0.4$
     & $74.2 \pm 0.4$
     & \bm{$77.7 \pm 0.5$}
     \\
     & 50\% (30)
     & $76.1 \pm 0.5$
     & $78.5 \pm 0.3$
     & $76.1 \pm 0.1$
     & \bm{$78.9 \pm 0.2$}
     \\
     \bottomrule
 \end{tabular}
 }
\vspace{-3mm}
\end{table*}

\subsection{Training dynamics vs generalization}
\label{appendix:training-dynamics-vs-generalization}
\cref{fig:train-loss-val-acc} demonstrates the relationship between training loss and validation accuracy resulted from data augmentation. While training loss of augmented datasets do not decrease as quickly as non-augmented datasets, generalization performance (shown by val. acc.) improves.

\begin{figure}[H]
    \centering
    \begin{subfigure}{
    \includegraphics[width=0.32\textwidth,trim=15mm 0 0mm 10mm]{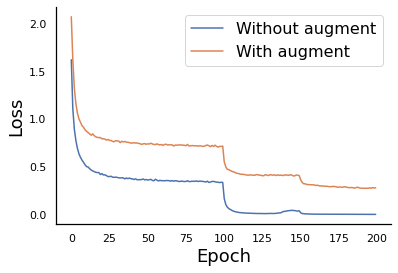}}
    \end{subfigure}
    \begin{subfigure}{
    \includegraphics[width=0.30\textwidth,trim=10mm 0 12mm 10mm]{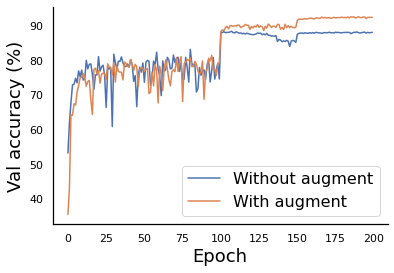}}
    \end{subfigure}
    \caption{Training loss vs validation accuracy of CIFAR10/ResNet20 using AutoAugment.
    }
    \label{fig:train-loss-val-acc}
\end{figure}

\subsection{Augmentations applied through appending vs in-place}\label{sec:append}
Our experiments on Caltech256/ResNet18/AutoAugment ($\!R\!=\!\!5$)
show that even for cheaper strong augmentation methods (AutoAugment), 
while in-place augmentation may decrease the performance,
appending Random (R) and Coresets (C) augmentations (Append) outperforms in-place augmentation of 2x data points (In-place 2x) for various subset sizes.
\begin{table}[H]
    \label{tab:aug-inplace}
    \vspace{-3mm}
    \centering
    \footnotesize
    \caption{Caltech256/AutoAugment in-place vs. appending for Caltech256.}
    \setlength{\tabcolsep}{3pt}
    \begin{tabular}{ccccc}
    \toprule
    & No Aug. & In-place & In-place (2x) & Append \\
    \midrule
    C5\% & 33.8\% & 26.4\% & 48.2\% & \textbf{52.7\%} \\
    R5\% & 24.8\% & 17.4\% & 40.2\% & \textbf{41.5\%} \\
    C10\%  & 55.7\% & 48.2\% & 62.8\% & \textbf{65.4\%} \\
    R10\% & 50.6\% & 40.2\% & \textbf{62.0\%} & 61.8\% \\
    C30\% & 71.9\% & 68.8\% & 74.9\% & \textbf{76.3\%} \\
    R30\% & 72.0\% & 68.7\% & 75.1\% & \textbf{75.7\%} \\
    \bottomrule
    \end{tabular}
    \vspace{-5mm}
\end{table}

\subsection{Speed-up measurements}
\label{app:speed-up-measurements}
We measure the improvement in training time in the case of training on full data and augmenting subsets of various sizes. While our method yields similar or slightly lower speed-up to the max-loss policy and random approach respectively, our resulting accuracy outperforms these two approaches. We show this in \figref{fig:coreset-maxloss-speedup-acc-30pc}. For example, for SVHN/Resnet32 using $30\%$ coresets, we sacrifice $11\%$ of the speed-up to obtain an additional $24.8\%$ of the gain in accuracy from full data augmentation when compared to a random subset of the same size. We show the speed-up obtained for our method and various subset sizes in \tabref{tab:results-speedup}, and provide wall-clock times for our method in \tabref{tab:wall-clock-time}.

\begin{table}[h]
 \caption{Speedup on CIFAR10 + ResNet20 (C10/R20), SVHN + ResNet32 (SVHN/R32). 
 }
    \centering
     \resizebox{\columnwidth}{!}{%
      \begin{tabular}{*{10}c}
  \toprule
  \multicolumn{1}{c}{Dataset}
  & \multicolumn{1}{c}{Full Aug.}
  & \multicolumn{6}{c}{Ours} 
  & \multicolumn{1}{c}{Max loss.} 
  & \multicolumn{1}{c}{Random.}\\
     \cmidrule(lr){3-8}
     \cmidrule(lr){9-9}
     \cmidrule(lr){10-10}
  & \multicolumn{1}{c}{$100\%$}
  & \multicolumn{1}{c}{$5\%$}
  & \multicolumn{1}{c}{$10\%$}
  & \multicolumn{1}{c}{$15\%$}
  & \multicolumn{1}{c}{$20\%$}
  & \multicolumn{1}{c}{$25\%$}
  & \multicolumn{1}{c}{$30\%$}
  & \multicolumn{1}{c}{$30\%$}
  & \multicolumn{1}{c}{$30\%$} \\
     \midrule
     C10 / R20
     & $1$x
     & $7.93$x
     & $6.31$x
     & $4.46$x
     & $4.27$x
     & $3.41$x
     & $3.43$x
     & $3.48$x
     & $4.03$x
     \\   
     SVHN / R32
    & $1$x
     & $5.35$x
     & $3.93$x
     & $3.40$x
     & $2.80$x
     & $2.49$x
     & $2.18$x
     & $2.21$x
     & $2.43$x
     \\   
     \bottomrule
 \end{tabular}
 }
\label{tab:results-speedup}
\end{table}

\begin{figure}[h]
\centering
\label{fig:coreset-maxloss-speedup-acc-30pc}
\begin{subfigure}[CIFAR10/ResNet20 \label{fig:coreset-vs-maxloss-points}]{
\includegraphics[width=0.21\textwidth,trim=10mm 0 12mm 10mm]{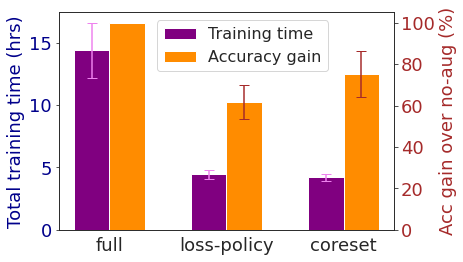}%
}%
\end{subfigure}
\begin{subfigure}[SVHN/Resnet32]{
\label{fig:svhn_original_max_loss_acc_train_time_0-30_subset}
\includegraphics[width=0.20\textwidth,trim=10mm 0 12mm 10mm]{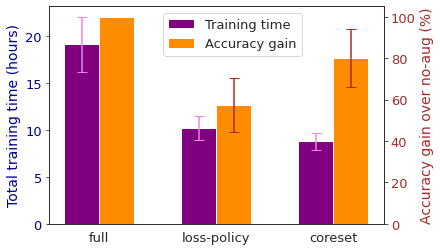}
\vspace{-1mm}
}\hspace{3mm}
\caption{Speedup/Accuracy of augmenting $30\%$ coresets compared to original max-loss policy for (a) ResNet20 trained on CIFAR10 and (b) ResNet32 trained on SVHN.}
\end{subfigure}\vspace{-3mm}
\end{figure}

\begin{table}[h]
    \vspace{-1mm}
    \label{tab:wall-clock-time}
    \centering
    \footnotesize
    \caption{Wall-clock times to find various sized coresets from \textit{all classes} of Caltech256 and TinyImagene at $\!1\!$ epoch. Note, training ResNet20/CIFAR10 with \cite{wu2020generalization} takes 14.4 hrs. In practice, coresets can be found in parallel ($p$ threads) from different classes, and selection happens every $\!R\!\!=\!5\!-\!15$ epochs. Hence, the numbers divide by $p \!\times\! R$.}
    \begin{tabular}{cccccc}
    \toprule
    \multicolumn{3}{c}{Caltech256} & \multicolumn{3}{c}{TinyImageNet} \\
    \cmidrule(lr){1-3}
        \cmidrule(lr){4-6}
        10\% & 30\% & 50\% & 10\% & 30\% & 50\% \\
    \midrule
    10.50s & 10.52s & 10.53s & 7.85s & 8.09s & 8.17s \\
    \bottomrule
    \end{tabular}
    \vspace{-5mm}
\end{table}

\subsection{End to end training on Caltech256}
\label{sec:caltech256-from-scratch}
As Caltech256 contains many classes and higher resolution images, training on smaller subset without pretraining has a low accuracy. %
Thus, many works (e.g. Achille et al., 2020) finetune from ImageNet pretrained initialization. However, we show that our results still hold even when training form scratch. We demonstrate our results in \tabref{tab:caltech-nopt}, where we train Caltech256 on ResNet50 without pretraining for 400 epochs, and with $R=40$, where our method consistently outperfoms random subsets for multiple subset sizes ($5\%$, $10\%$, $30\%$, $50\%$).
\begin{table}[h]
    \caption{Caltech256 (w/o pretraining) /ResNet50, 400 epochs, $R=40$}
    \label{tab:caltech-nopt}
    \centering
    \footnotesize
    \setlength{\tabcolsep}{3pt}
    \begin{tabular}{cccccccc}
    \toprule
    \multicolumn{4}{c}{Random} & \multicolumn{4}{c}{Ours} \\
        \cmidrule(lr){1-4}
        \cmidrule(lr){5-8}
        5\% & 10\% & 30\% & 50\% & 5\% & 10\% & 30\% & 50\% \\
    \midrule
    17.26 & 35.38 & 58.2 & 64.67 & \textbf{20.58} & \textbf{38.20} & \textbf{60.30} & \textbf{65.17}  \\
    \bottomrule
    \end{tabular}
\end{table}

\subsection{Training on full data and augmenting small subsets re-selected every epoch}
We apply our proposed method to select a new subset for augmentation every epoch (i.e. using $R=1$) and compare our results with other approaches using accuracy and percentage of data not selected (NS). 
We see that while the max-loss policy selects a small fraction of data points over and over and random uniformly selects all the data points, our approach successfully finds the smallest subset of data points that are the most crucial for data augmentation. Hence, it can achieve a superior accuracy than max-loss policy, while augmenting only slightly more examples. This confirms the data-efficiency of our approach.
This is especially evident when using coresets of size $0.2\%$. Furthermore, despite the random baseline using a significantly larger percentage of data, it is outperformed by our approach in both data-efficiency and accuracy. We emphasize that results in this table is different from that of Table $\ref{tab:results-main-app}$, as default augmentations on the full training data are performed once every $R=1$ epochs instead of every $R=20$ epochs. Since selecting subsets at every epoch can be computationally expensive, we only perform these experiments on small coresets and hence still enjoy good speedups compared to full data augmentation. This shows that our approach is still effective at very small subset sizes, hence can be computationally efficient even when subsets are re-selected every epoch.

\begin{table}[H]
    \centering
    \small
    \caption{Training on full data and selecting a new subset for augmentation every epoch ($R=1$).}
    {%
      \begin{tabular}{*{7}c}
      \toprule
      \multicolumn{1}{c}{Subset}
      & \multicolumn{2}{c}{Random} 
      & \multicolumn{2}{c}{Max-loss Policy} 
      & \multicolumn{2}{c}{Ours}
      \\
     \cmidrule(lr){2-3}
     \cmidrule(lr){4-5}
     \cmidrule(lr){6-7}
      & \multicolumn{1}{c}{Acc}
      & \multicolumn{1}{c}{NS (\%)}
      & \multicolumn{1}{c}{Acc}
      & \multicolumn{1}{c}{NS (\%)}
      & \multicolumn{1}{c}{Acc}
      & \multicolumn{1}{c}{NS (\%)}
      \\
      \midrule
      $0\%$ 
      & $91.96 \pm 0.12$
      & $-$ 
      & $91.96 \pm 0.12$
      & $-$
      & $91.96 \pm 0.12$
      & $-$
      \\
      $0.2\%$ 
      & $92.22 \pm 0.22$
      & $67.03 \pm 0.04$ 
      & $91.94 \pm 0.12$
      & $86.70 \pm 0.15$
      & $\bm{92.26} \pm 0.13$
      & $79.19 \pm 1.10$
      \\
      $0.5\%$
      & $92.06 \pm 0.17$
      & $36.70 \pm 0.18$
      & $92.20 \pm 0.13$
      & $76.80 \pm 0.31$
      & $\bm{92.27} \pm 0.08$
      & $63.23 \pm 0.35$
      \\
      \bottomrule
      \end{tabular}
      }
      \medskip
    \label{tab:results-small-subset}
\end{table}

\subsection{Additional visualizations for training on coresets and its augmentations - Measuring training dynamics over time}
We include additional visualizations in Figure \ref{fig:coreset_vs_data_subset_sizes} for training on coresets and its augmentations as supplementary plots to Figure \ref{fig:coreset_train_vs_not_selected} and Table \ref{tab:results-train-subset}. We plot metrics obtained during each point (epoch) of the training process based on percentage of data selected/used and test accuracy achieved. All metrics are averaged over 5 runs and obtained using $R=1$. These plots demonstrate that coreset augmentation approaches outperform random augmentation baselines throughout the training process. Furthermore, they show that augmentation of coresets result in a larger 
increase in test accuracy compared to augmentation of randomly selected training examples, especially for small subset sizes.

\begin{figure}[t]
    \centering
    \subfigure[CIFAR10 - 0.1\%]{
	\includegraphics[width=.28\textwidth,trim=10mm 0 12mm 10mm]{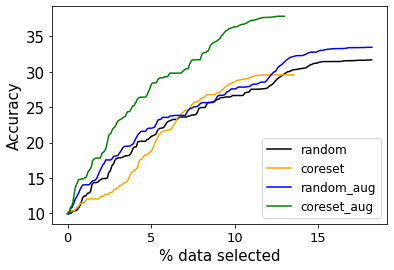}
    }\hspace{5mm}
    \subfigure[CIFAR10 - 0.2\%]{
	\includegraphics[width=.28\textwidth,trim=10mm 0 12mm 10mm]{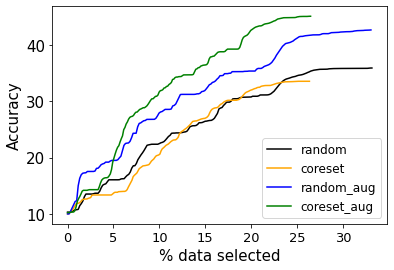}
    }\hspace{05mm}
    \subfigure[CIFAR10 - 0.5\%]{
	\includegraphics[width=.28\textwidth,trim=10mm 0 12mm 10mm]{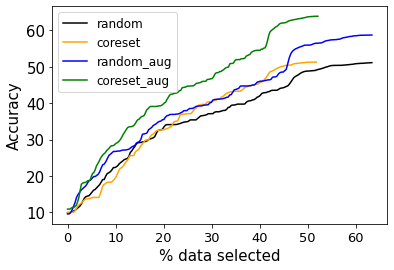}
    }\hspace{5mm} \\
    \vspace{5mm}
    \subfigure[CIFAR10 -  1\%]{
	\includegraphics[width=.28\textwidth,trim=10mm 0 12mm 10mm]{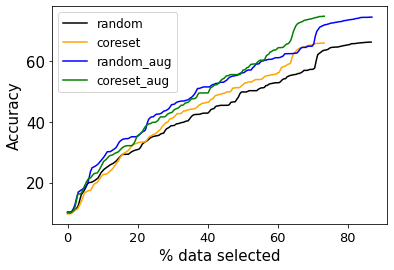}
    }\hspace{5mm}
    \subfigure[CIFAR10 - 5\%]{
	\includegraphics[width=.28\textwidth,trim=10mm 0 12mm 10mm]{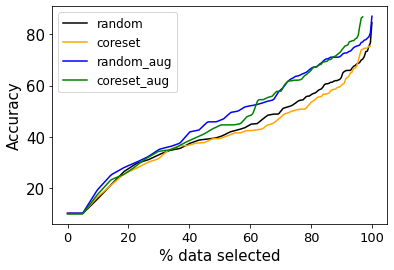}
    }\hspace{5mm}
    \caption{Supplementary plots for Figure \ref{fig:coreset_train_vs_not_selected}: Training on coreset and its augmentation compared to random baseline, measured using test accuracy against percentage of data used on CIFAR10 dataset across various subset sizes. Accuracy and percentage of data used are measured at every epoch and averaged over 5 runs.
    }
    \label{fig:coreset_vs_data_subset_sizes}
\end{figure}

\begin{figure}[!ht]
    \centering
    \includegraphics[width=0.31\textwidth,trim=10mm 0 12mm 10mm]{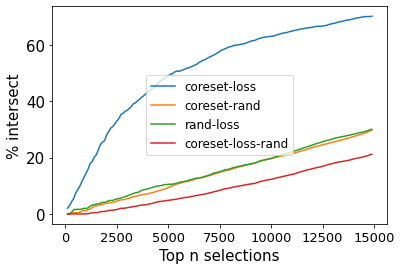}
    \caption{Intersection between max-loss and coresets in the top $N$ points selected aggregated across the entire training process. Here, we show the increasing overlap between max-loss and coreset points as $N$ grows.}
    \label{fig:coreset-maxloss-topn-intersect}
\end{figure}

\begin{figure}[!ht]
    \centering
    \includegraphics[width=0.8\textwidth,trim=10mm 0 12mm 10mm]{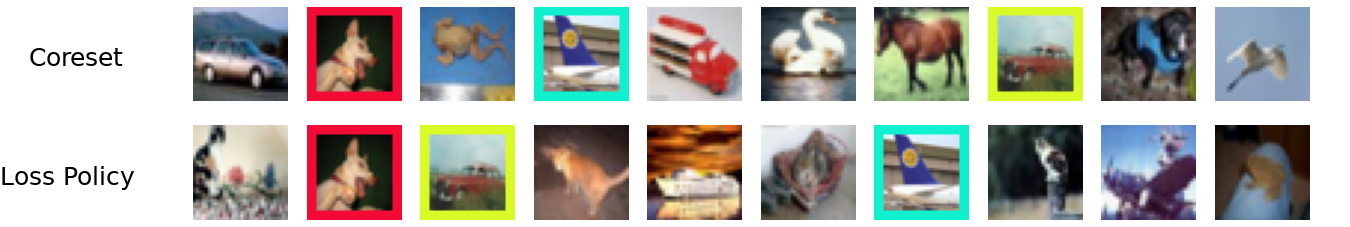}
    \caption{Qualitative evaluation of coreset and max-loss points.}
    \label{fig:coreset_vs_maxloss_points}
\end{figure}

\begin{figure}[!t]
\centering
\subfigure[Loss intersection \label{fig:loss_intersect}]{
	\includegraphics[width=.305\textwidth,trim=10mms 0 0mm 10mm]{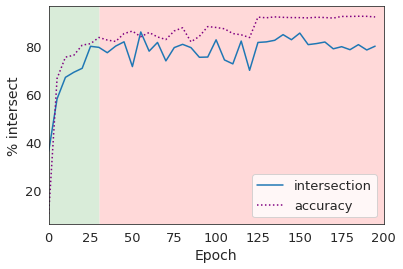}
    }\hspace{0mm}
\subfigure[Improvement \label{fig:coreset_loss_improv}]{    
    \includegraphics[width=.321\textwidth]{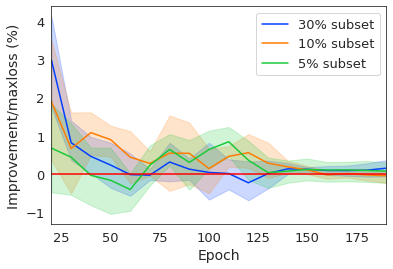}
    }\hspace{0mm}
\subfigure[Coresets vs random \label{fig:coreset_train_vs_not_selected}]{    
    \includegraphics[width=.32\textwidth]{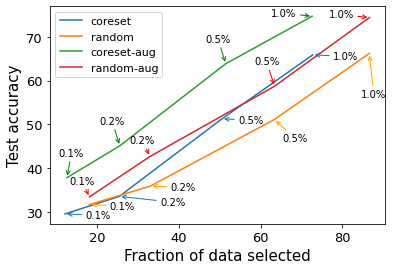}
    }
\vspace{-4mm}    
    \caption{Training ResNet20 on full data and augmented coresets extracted from CIFAR10. (a) Intersection between elements of coresets of size 30\% and maximum loss subsets of the same size. The intersection increases after the initial phase of training, (b) Accuracy improvement for training on full data and augmented coresets over training on full data and  max-loss augmentation. (c) Accuracy vs. fraction of data selected for augmentation during training Resnet20 on CIFAR10.  %
    }%
\vspace{-4mm}
\end{figure}

\subsection{Intersection of max-loss policy and coresets}
\label{app:intersection-maxloss-coreset}
Figure $\ref{fig:loss_intersect}$ depicts the increase in intersection between max-loss subsets and coresets over time. In addition, we also aggregate $30\%$ subsets selected every $R=20$ epochs using both approaches over the entire training process to compute intersection between the top $N$ selected data points. Our plots in Figure \ref{fig:coreset-maxloss-topn-intersect} suggest that a similar pattern holds in this setting. We also qualitatively visualize this in Figure \ref{fig:coreset_vs_maxloss_points}.

\end{document}